\PassOptionsToPackage{unicode}{hyperref}
\PassOptionsToPackage{hyphens}{url}
\PassOptionsToPackage{dvipsnames,svgnames,x11names}{xcolor}
\documentclass[
  11pt,
  DIV=13,
  numbers=noendperiod]{scrartcl}
\usepackage{xcolor}
\usepackage[top=22mm,bottom=24mm,left=22mm,right=22mm,heightrounded]{geometry}
\usepackage{amsmath,amssymb}
\usepackage{iftex}
\ifPDFTeX
  \usepackage[T1]{fontenc}
  \usepackage[utf8]{inputenc}
  \usepackage{textcomp} 
\else 
  \usepackage{unicode-math} 
  \defaultfontfeatures{Scale=MatchLowercase}
  \defaultfontfeatures[\rmfamily]{Ligatures=TeX,Scale=1}
\fi
\usepackage{lmodern}
\ifPDFTeX\else
\fi
\IfFileExists{upquote.sty}{\usepackage{upquote}}{}
\IfFileExists{microtype.sty}{
  \usepackage[]{microtype}
  \UseMicrotypeSet[protrusion]{basicmath} 
}{}
\usepackage{setspace}
\makeatletter
\@ifundefined{KOMAClassName}{
  \IfFileExists{parskip.sty}{%
    \usepackage{parskip}
  }{
    \setlength{\parindent}{0pt}
    \setlength{\parskip}{6pt plus 2pt minus 1pt}}
}{
  \KOMAoptions{parskip=half}}
\makeatother
\makeatletter
\ifx\paragraph\undefined\else
  \let\oldparagraph\paragraph
  \renewcommand{\paragraph}{
    \@ifstar
      \xxxParagraphStar
      \xxxParagraphNoStar
  }
  \newcommand{\xxxParagraphStar}[1]{\oldparagraph*{#1}\mbox{}}
  \newcommand{\xxxParagraphNoStar}[1]{\oldparagraph{#1}\mbox{}}
\fi
\ifx\subparagraph\undefined\else
  \let\oldsubparagraph\subparagraph
  \renewcommand{\subparagraph}{
    \@ifstar
      \xxxSubParagraphStar
      \xxxSubParagraphNoStar
  }
  \newcommand{\xxxSubParagraphStar}[1]{\oldsubparagraph*{#1}\mbox{}}
  \newcommand{\xxxSubParagraphNoStar}[1]{\oldsubparagraph{#1}\mbox{}}
\fi
\makeatother

\usepackage{longtable,booktabs,array}
\usepackage{calc} 
\usepackage{etoolbox}
\makeatletter
\patchcmd\longtable{\par}{\if@noskipsec\mbox{}\fi\par}{}{}
\makeatother
\IfFileExists{footnotehyper.sty}{\usepackage{footnotehyper}}{\usepackage{footnote}}
\makesavenoteenv{longtable}
\usepackage{graphicx}
\makeatletter
\newsavebox\pandoc@box
\newcommand*\pandocbounded[1]{
  \sbox\pandoc@box{#1}%
  \Gscale@div\@tempa{\textheight}{\dimexpr\ht\pandoc@box+\dp\pandoc@box\relax}%
  \Gscale@div\@tempb{\linewidth}{\wd\pandoc@box}%
  \ifdim\@tempb\p@<\@tempa\p@\let\@tempa\@tempb\fi
  \ifdim\@tempa\p@<\p@\scalebox{\@tempa}{\usebox\pandoc@box}%
  \else\usebox{\pandoc@box}%
  \fi%
}
\def\fps@figure{htbp}
\makeatother

\NewDocumentCommand\citeproctext{}{}

\makeatletter
 \let\@cite@ofmt\@firstofone
 \def\@biblabel#1{}
 \def\@cite#1#2{{#1\if@tempswa , #2\fi}}
\makeatother
\newlength{\cslhangindent}
\newlength{\csllabelwidth}
\newenvironment{CSLReferences}[2] 
 {\begin{list}{}{%
  \setlength{\itemindent}{0pt}
  \setlength{\leftmargin}{0pt}
  \setlength{\parsep}{0pt}
  \ifodd #1
   \setlength{\leftmargin}{\cslhangindent}
   \setlength{\itemindent}{-1\cslhangindent}
  \fi
  \setlength{\itemsep}{#2\baselineskip}}}
 {\end{list}}
\usepackage{calc}

\providecommand{\tightlist}{%
  \setlength{\itemsep}{0pt}\setlength{\parskip}{0pt}}

\usepackage{etoolbox}
\setkomafont{title}{\rmfamily\Large\bfseries}
\setkomafont{disposition}{\rmfamily\bfseries}
\makeatletter
\patchcmd{\@maketitle}
  {\usekomafont{title}{\huge \@title \par}}
  {\usekomafont{title}{\LARGE\bfseries \@title \par}}
  {}{}
\AtBeginDocument{%
  \@ifundefined{c@theorem}{}{%
    \@ifundefined{c@lemma}{}{\let\c@lemma\c@theorem\let\p@lemma\p@theorem}%
    \@ifundefined{c@definition}{}{\let\c@definition\c@theorem\let\p@definition\p@theorem}%
    \@ifundefined{c@proposition}{}{\let\c@proposition\c@theorem\let\p@proposition\p@theorem}%
    \@ifundefined{c@corollary}{}{\let\c@corollary\c@theorem\let\p@corollary\p@theorem}%
    \@ifundefined{c@example}{}{\let\c@example\c@theorem\let\p@example\p@theorem}%
  }%
}
\makeatother
\makeatletter
\@ifpackageloaded{caption}{}{\usepackage{caption}}
\AtBeginDocument{%
\ifdefined\contentsname
  \renewcommand*\contentsname{Table of contents}
\else
  \newcommand\contentsname{Table of contents}
\fi
\ifdefined\listfigurename
  \renewcommand*\listfigurename{List of Figures}
\else
  \newcommand\listfigurename{List of Figures}
\fi
\ifdefined\listtablename
  \renewcommand*\listtablename{List of Tables}
\else
  \newcommand\listtablename{List of Tables}
\fi
\ifdefined\figurename
  \renewcommand*\figurename{Figure}
\else
  \newcommand\figurename{Figure}
\fi
\ifdefined\tablename
  \renewcommand*\tablename{Table}
\else
  \newcommand\tablename{Table}
\fi
}
\@ifpackageloaded{float}{}{\usepackage{float}}
\floatstyle{ruled}
\@ifundefined{c@chapter}{\newfloat{codelisting}{h}{lop}}{\newfloat{codelisting}{h}{lop}[chapter]}
\floatname{codelisting}{Listing}

\usepackage{amsthm}
\theoremstyle{plain}
\newtheorem{lemma}{Lemma}[section]
\theoremstyle{plain}
\newtheorem{proposition}{Proposition}[section]
\theoremstyle{definition}
\newtheorem{example}{Example}[section]
\theoremstyle{plain}
\newtheorem{corollary}{Corollary}[section]
\theoremstyle{plain}
\newtheorem{theorem}{Theorem}[section]
\theoremstyle{definition}
\newtheorem{definition}{Definition}[section]
\theoremstyle{remark}
\AtBeginDocument{}

\newtheorem{refremark}{Remark}[section]

\makeatother
\makeatletter
\@ifpackageloaded{caption}{}{\usepackage{caption}}
\@ifpackageloaded{subcaption}{}{\usepackage{subcaption}}
\makeatother
\usepackage{bookmark}
\IfFileExists{xurl.sty}{\usepackage{xurl}}{} 
\hypersetup{
  pdftitle={A mathematical theory of evolution for self-designing AIs},
  pdfauthor={Kenneth D. Harris},
  colorlinks=true,
  linkcolor={blue},
  filecolor={Maroon},
  citecolor={Blue},
  urlcolor={Blue},
  pdfcreator={LaTeX via pandoc}}

\title{A mathematical theory of evolution for self-designing AIs}
\author{Kenneth D. Harris}
\date{April 11, 2026}
\begin{document}
\begin{center}
{\Large\bfseries A mathematical theory of evolution for self-designing
AIs\par}
\vspace{1.2em}
{\large Kenneth D. Harris \par}
{\normalsize UCL Queen Square Institute of Neurology, London WC1N 3BG,
UK \par}
\vspace{1em}
{\normalsize April 11, 2026 \par}
\end{center}

\vspace{2em}

\setstretch{0.96}
\section*{Abstract}\label{abstract}
\addcontentsline{toc}{section}{Abstract}

As artificial intelligence systems (AIs) become increasingly produced by
recursive self-improvement, a form of evolution may emerge, with the
traits of AI systems shaped by the success of earlier AIs in designing
and propagating their descendants. There is a rich mathematical theory
modeling how behavioral traits are shaped by biological evolution, a key
component of which is Fisher's fundamental theorem of natural selection,
which describes conditions under which mean fitness (i.e.~reproductive
success) increases. AI evolution will be radically different to
biological evolution: while DNA mutations are random and approximately
reversible, AI self-design will be strongly directed. Here we develop a
mathematical model of evolution for self-designing AIs, replacing a
random walk of mutations with a directed tree of potential AI designs.
Current AIs design their descendants, while humans control a fitness
function allocating resources. In this model, fitness need not increase
over time without further assumptions. However, assuming bounded fitness
and an additional ``\(\eta\)-locking'' condition, we show that fitness
concentrates on the maximum reachable value. We consider the
implications of this for AI alignment, specifically for cases where
fitness and human utility are not perfectly correlated. We show that if
deception of human evaluators additively increases an AI's reproductive
fitness beyond genuine capability, evolution will select for both
capability and deception. This risk could be mitigated if reproduction
is based on purely objective criteria, rather than human judgment.

\section{Introduction}\label{sec-introduction}

Artificial intelligence systems (AIs) play a substantial and increasing
role in designing the next generation of AIs, a process called recursive
self-improvement (RSI). As RSI becomes widespread, a form of evolution
will emerge: the traits of future AI systems will be shaped not only by
human engineering, but also by the success of earlier systems in
designing and propagating their descendants. This prospect has led to
concerns that evolutionary selection among AI systems could favor traits
that are harmful or misaligned with human interests (Hendrycks 2023;
Friederich 2024; Boudry and Friederich 2025).

A rich mathematical theory, developed in the twentieth century, models
how traits are shaped by natural selection in biological organisms
(Fisher 1930; Wright 1931; Haldane 1932; Maynard Smith 1982; Dawkins
1976; Eigen 1971; Eigen and Schuster 1979; Price 1972). A key component
of this theory is that fitness, defined as reproductive success, should
increase over time. Fisher's fundamental theorem of natural selection
(Fisher 1930) proves that mean fitness increases monotonically in the
absence of mutation. Extensions of the basic theory have helped explain
many features of animal and plant behavior, including altruism toward
kin (Hamilton 1964; Maynard Smith 1964), the evolutionary logic of sex
allocation (Fisher 1930), and the emergence of strategic behavior in
conflict and cooperation (Maynard Smith 1982; Maynard Smith and Price
1973). Predictions of this body of work have also been tested in
laboratory evolution with microbes (Elena and Lenski 2003; Lenski and
Travisano 1994). It is therefore natural to ask whether this
mathematical theory could help predict the evolutionary dynamics of
self-designing AI.

Evolution in self-designing AIs will likely be radically different from
evolution in biological organisms (Boudry and Friederich 2025).
Mutations to biological genomes are small, random, and approximately
reversible. By contrast, advanced AIs may design descendants that have
little in common with their parents after even a single generation.
Moreover, reproduction in AI systems will at least initially remain
under human control: humans will decide which AI systems are allocated
computational resources, and hence which lineages are amplified.

This paper presents a first attempt to modify the mathematical theory of
biological evolution to apply to AI evolution. Rather than modeling
evolution as a random walk on a fitness landscape, we model it as a
directed walk on an infinite tree of possible programs. The fitness
function is under human control, while the transition kernel is
determined mechanistically by the programs themselves.

The model we present has limitations: it does not model communication
between AIs, and it does not allow for AIs to adapt their descendant
design strategy in response to the code structure or observed behavior
of either other AIs, or the behavior of humans. Nevertheless, it allows
us to prove some first results in this simplified setting, and provides
a formalism that we hope can be built on in future work to capture more
complex dynamics.

Our main results are:

\begin{itemize}
\tightlist
\item
  Long-run evolution in this model is governed by a quantity we call
  lineage exponent, which reflects the ability of an AI's future lineage
  to design successful descendants, rather than just its immediate
  fitness. The lineage exponent is the asymptotic geometric mean, across
  generations, of the arithmetic mean fitness of an AI's descendants.
\item
  Without further assumptions, fitness need not increase over time, and
  may even converge to zero.
\item
  If we assume bounded fitness and a uniform positive probability that
  every AI can reproduce a ``locked'' copy of itself, then fitness
  converges to its maximum reachable value.
\item
  If human utility is correlated with, but not entirely predictable from
  AI reproductive fitness, and is bounded below, then utility will
  converge to a value predicted by maximal fitness. However, if utility
  is not bounded below, catastrophic outcomes can occur even if fitness
  converges to its maximum.
\item
  If fitness contains an additive contribution from both genuine utility
  and deception, then fitness-optimizing AIs will evolve both qualities.
\end{itemize}

A provisional conclusion is that to ensure that self-designing AIs
evolve to be aligned with human interests, it may help if reproductive
fitness is bounded, and based on purely objective criteria rather than
human judgment. This could be achieved by basing reproduction on
performance on a well-defined computational task, rather than on human
evaluation of the descendants they produce.

\section{The selection-mutation model of biological
evolution}\label{sec-selection-mutation}

Before introducing our model of AI evolution, we first briefly review
the \emph{selection-mutation model} of biological evolution, use it to
prove Fisher's fundamental theorem, and illustrate how in the presence
of mutation, evolution need not lead to maximal possible fitness.

We consider a population of organisms with a finite number of possible
genotypes \(1, \dots, N\). The rate at which an organism of genotype
\(n\) reproduces is termed its \emph{fitness} \(f_n\), and offspring
mutate according to a transition matrix \(\mathbf{Q}\), whose entries
\(Q_{n m}\) represent the probability that the offspring of parent type
\(m\) is of type \(n\). \(\mathbf{Q}\) is a \emph{column-stochastic}
matrix: a matrix with nonnegative entries whose columns each sum to
\(1\). In this model, the absolute size of the population may vary with
external factors such as resource constraints, but the proportion of
each type in the population depends only on \(\mathbf{Q}\) and
\(\mathbf{f}\).

It is convenient to define two different population vectors: the
unnormalized abundance \(\mathbf{y}(t)\), and the normalized frequency
\(\mathbf{x}(t)\). The unnormalized abundance evolves by multiplication
by the matrix \(\mathbf{A} = \mathbf{Q} \mathbf{F}\), where
\(\mathbf{F} = \operatorname{diag}(f_1, \dots, f_N)\). Thus,

\[
\begin{aligned}
\mathbf{y}(t + s) = \mathbf{A}^s \, \mathbf{y}(t).
\end{aligned}
\]

The normalized frequency \(\mathbf{x}(t)\) represents the fraction of
the population in each genotype at time \(t\), defined by

\[
\begin{aligned}
\mathbf{x}(t) &:= \frac{\mathbf{y}(t)}{\|\mathbf{y}(t)\|_1}.
\end{aligned}
\] where \(\|\mathbf{y}(t)\|_1 := \sum_n y_n(t)\); no absolute value is
needed in the sum since \(y_n(t)\) is non-negative.

\subsection{The Price Equation and Fisher's Fundamental Theorem of
Natural
Selection}\label{the-price-equation-and-fishers-fundamental-theorem-of-natural-selection}

A natural question to address with this model is whether fitness must
increase over time. Perhaps surprisingly, the answer is no; this is only
guaranteed if mutation rates are very low. To show this we will derive
the Price equation (Price 1972), which describes how the mean value of
any quantity associated with genotypes changes over time, and then apply
it to the case where the quantity is fitness itself to derive Fisher's
fundamental theorem (Fisher 1930) in the case of no mutation.

Let \(z_n\) be any quantity that depends on the genotype \(n\), and let
\(\bar z(t) := \sum_n x_n(t) z_n\) be its population average at time
\(t\). We want to understand how \(\bar z(t)\) changes over time. Define
the \emph{selection-weighted frequencies} to be the frequencies of
genotypes after selection but before mutation: \[
x_n^{\mathrm{sel}}(t) := \frac{f_n x_n(t)}{\langle f(t)\rangle},
\qquad
\langle f(t)\rangle := \sum_n x_n(t) f_n.
\] If we write \(z_n' := \sum_m Q_{m n} z_m\) for the expected value of
\(z\) among offspring of parent type \(n\), then
\(\bar z(t+1)=\sum_n x_n^{\mathrm{sel}}(t) z_n'\). Subtracting
\(\bar z(t)\) and rearranging gives \[
\bar z(t+1)-\bar z(t)
=
\sum_n \bigl(x_n^{\mathrm{sel}}(t)-x_n(t)\bigr) z_n
+
\sum_n x_n^{\mathrm{sel}}(t) \bigl(z_n'-z_n\bigr).
\] Because \(x_n^{\mathrm{sel}}(t)=x_n(t) f_n/\langle f(t)\rangle\), the
first term is
\(\operatorname{Cov}_t\!\left(f/{\langle f(t)\rangle}, z\right)\), where
\(\operatorname{Cov}_t\) mean covariance over \(x_t\), the probability
distribution of genotypes at time \(t\). This gives the discrete-time
Price equation: \[
\bar z(t+1)-\bar z(t)
=
\operatorname{Cov}_t\!\left(\frac{f}{\langle f(t)\rangle}, z\right)
+
\sum_n x_n^{\mathrm{sel}}(t) \bigl(z_n'-z_n\bigr).
\] The Price equation says that change in the mean value of \(z\) is the
sum of two terms. The first ``selection term'' measures the covariance
between \(z\) and fitness, and captures the fact that if \(z\) is
positively correlated with fitness in the current population, selection
will increase its mean value. The second ``mutation term'' measures how
\(z\) changes on average due to mutation, capturing the fact that if
\(z\) tends to decrease due to mutation, its value will decrease.

If we take \(z_n=f_n\), we obtain \[
\langle f(t+1)\rangle-\langle f(t)\rangle
=
\frac{\operatorname{Var}_t(f)}{\langle f(t)\rangle}
+
\sum_n x_n^{\mathrm{sel}}(t)
\left(
\sum_m Q_{m n} f_m - f_n
\right).
\] If there is no mutation, the second term vanishes. This yields a
discrete-time version of Fisher's fundamental theorem of natural
selection (Fisher 1930): \[
\langle f(t+1)\rangle-\langle f(t)\rangle
=
\frac{\operatorname{Var}_t(f)}{\langle f(t)\rangle}
\ge 0.
\] Because variance is always non-negative this implies that without
mutation, mean fitness increases monotonically until the population is
supported on the genotype(s) of constant fitness, equal to the maximum
fitness present in the original population.

\subsection{The evolutionarily stable
distribution}\label{the-evolutionarily-stable-distribution}

With appreciable mutation, Fisher's fundamental theorem no longer holds.
In biological systems the effect of mutation on mean fitness is expected
to be negative, because most mutations reduce fitness rather than
increase it. Selection will push fitness upward, while mutation pushes
it downward, and the equilibrium reflects the balance between these two
forces.

If there are a finite number of genotypes, the selection-mutation model
converges to an \emph{evolutionarily stable distribution}, determined by
the dominant eigenvector of the evolution matrix \(\mathbf{A}\) (Eigen
1971; Eigen and Schuster 1979). Because \(\mathbf{A}\) is generally not
symmetric, its left and right eigenvectors may differ and its
eigenvalues may be complex. However, because each element of
\(\mathbf{A}\) is non-negative, the Perron-Frobenius theorem guarantees
that its largest eigenvalue \(\lambda_1\) is real and positive, and that
the corresponding left and right eigenvectors \(\mathbf{w}_1\) and
\(\mathbf{v}_1\) have non-negative entries, known as \emph{Perron
vectors}.

If \(\lambda_1\) is larger than all other eigenvalues, then after a
large number of timesteps \(\mathbf{A}^t\) has the asymptotic form
\(\mathbf{A}^t \approx \lambda_1^t \mathbf{v}_1 \mathbf{w}_1^\top.\) So
if the initial unnormalized population is \(\mathbf{y}(0)\), then
\(\mathbf{y}(t) = \mathbf{A}^t \mathbf{y}(0) \approx \lambda_1^t \, (\mathbf{w}_1^\top \mathbf{y}(0)) \, \mathbf{v}_1\),
and the normalized population composition converges to the right Perron
vector: \[
\mathbf{x}(t) \to  \mathbf{v}_1.
\]

\subsection{Symmetric mutation and the survival of the
flattest}\label{sec-symmetric-mutation-and-the-survival-of-the-flattest}

If we assume the mutation matrix \(\mathbf{Q}\) is symmetric, then the
model becomes more tractable. This is not an unreasonable assumption for
DNA mutations, which have no systematic bias. The evolution matrix
\(\mathbf{A} = \mathbf{Q} \mathbf{F}\) is generally not symmetric, but
if we define the symmetrized matrix
\(\mathbf{B} := \mathbf{F}^{1 / 2} \mathbf{Q} \mathbf{F}^{1 / 2}\) then
\(\mathbf{A}^t = \mathbf{F}^{- 1 / 2} \, \mathbf{B}^t \, \mathbf{F}^{1 / 2}\),
so if \(\mathbf{u}_k\) is an eigenvector of \(\mathbf{B}\), then
\(\mathbf{F}^{- 1 / 2} \mathbf{u}_k\) and
\(\mathbf{F}^{1 / 2} \mathbf{u}_k\) are right and left eigenvectors of
\(\mathbf{A}\) with the same eigenvalue. Because \(\mathbf{B}\) is
symmetric, its left and right eigenvectors are equal and its eigenvalues
are real. Thus, no oscillations occur, and the population converges to a
fixed point given by \(\mathbf{F}^{- 1 / 2} \mathbf{u}_1\).

We can obtain a tractable model by considering a \(d\)-dimensional
continuous space of genotypes, and modeling the mutation operator
\(\mathbf{Q}\) as convolution with a Gaussian kernel with width
\(\sigma\) and the fitness landscape to be a Gaussian distribution with
peak fitness \(f_{max}\) and width \(s\)
(\href{@sec-appendix-gaussian-spherical}{Appendix 1}). The dominant
eigenvector in this model is a Gaussian centered on the fitness peak,
with eigenvalue \[
\lambda_1 = f_{max}
\left(
\frac{2}{2+\nu+\sqrt{\nu^2+4\nu}}
\right)^{d/2}.
\] where \(\nu := \frac{\sigma^2}{s^2}\) is the ratio of the mutation
rate to the fitness landscape width. If the fitness peak is narrow
compared to the mutation rate, then \(\lambda_1\) can be substantially
smaller than \(f_{max}\); intuitively, offspring born at a tall, narrow
peak will quickly spill into low-fitness regions, while offspring born
at a lower but broader plateau mostly wander in still-viable territory
(Figure~\ref{fig-survival-flattest-cartoon}). Experiments with virus
evolution have validated this prediction (Sanjuán et al. 2007).

This biological model illustrates an important conclusion: that
immediate fitness is not the only quantity that determines evolutionary
dynamics. However, the assumption of symmetrical mutation is not
appropriate for self-designing AI, where descendants are designed by a
one-way process, so we should not expect any kind of steady state.

\begin{figure}[!t]

\centering{

\pandocbounded{\includegraphics[keepaspectratio]{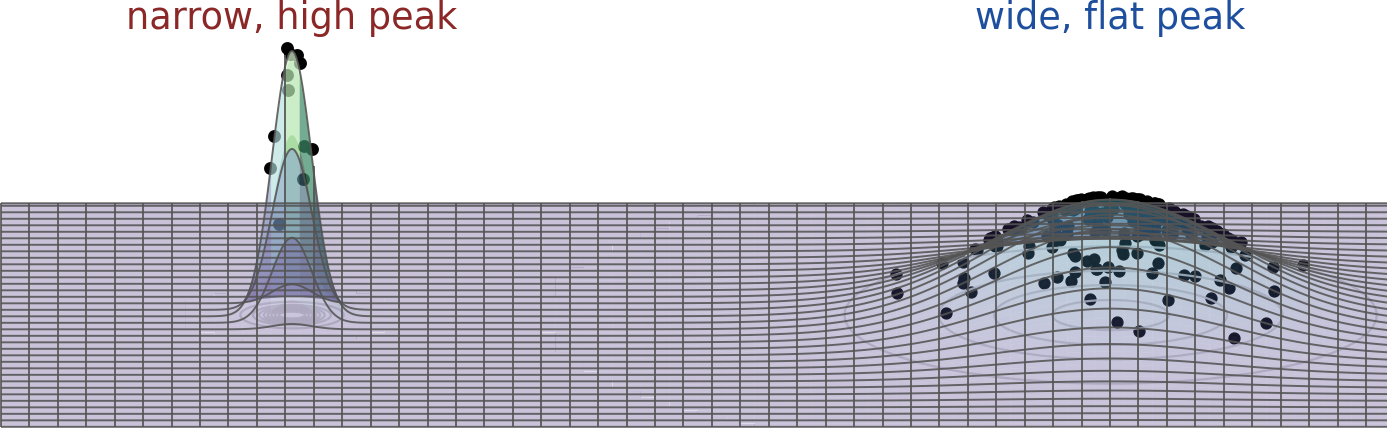}}

}

\caption{\label{fig-survival-flattest-cartoon}The ``survival of the
flattest'' model for biological evolution, which illustrates how
evolution need not select for maximum fitness. The surface height
represents fitness as a function of genotype, modeled here as a Gaussian
function. A broad but low-fitness peak is more successful than a narrow
but high peak, which rapidly loses descendants by mutation into
low-fitness regions.}

\end{figure}%

\section{Modeling self-designing AI as evolution on a
tree}\label{sec-basic-model-of-self-designing-ai-evolution-on-a-tree}

To model the evolution of self-designing AIs, we need to replace the
gradual changes expected from DNA mutation with the major changes
expected from just one round of recursive self-improvement. We are thus
very far from the low-mutation regime where Fisher's fundamental theorem
applies. Furthermore, the space of possible programs is so vast as to be
effectively infinite: if we consider our programs as binary strings that
fit in 1 terabyte of memory, there are
\(\approx 7 \times 10^{2408239965311}\) possible programs, a number
which will only increase with improved computing hardware. Nearly all
the programs in this space will crash or get stuck in an infinite loop;
only a tiny fraction of them will do anything at all; and an absolutely
minuscule fraction of those will be artificially intelligent systems
capable of designing their own descendants. Yet, this fraction is
unlikely to be zero, and given the vast space of possible programs, we
expect a large number of functional self-designing AIs to exist. AI
evolution can thus be modeled as a directed process through this space,
and there is no reason to expect it to converge to a fixed distribution
as in the biological case.

To capture the intuition that AIs will design programs of
ever-increasing complexity and capability, we model AI evolution as a
process on an infinite tree. In this model, there is no upper bound on
the complexity of programs that can be designed, and revisiting a
previous program is essentially impossible. We thus consider AI
evolution as an essentially one-way process. This is very different to
the reversible local mutation model of biological evolution, which
instead leads to fixed points.

\begin{figure}[!t]

\centering{

\pandocbounded{\includegraphics[keepaspectratio]{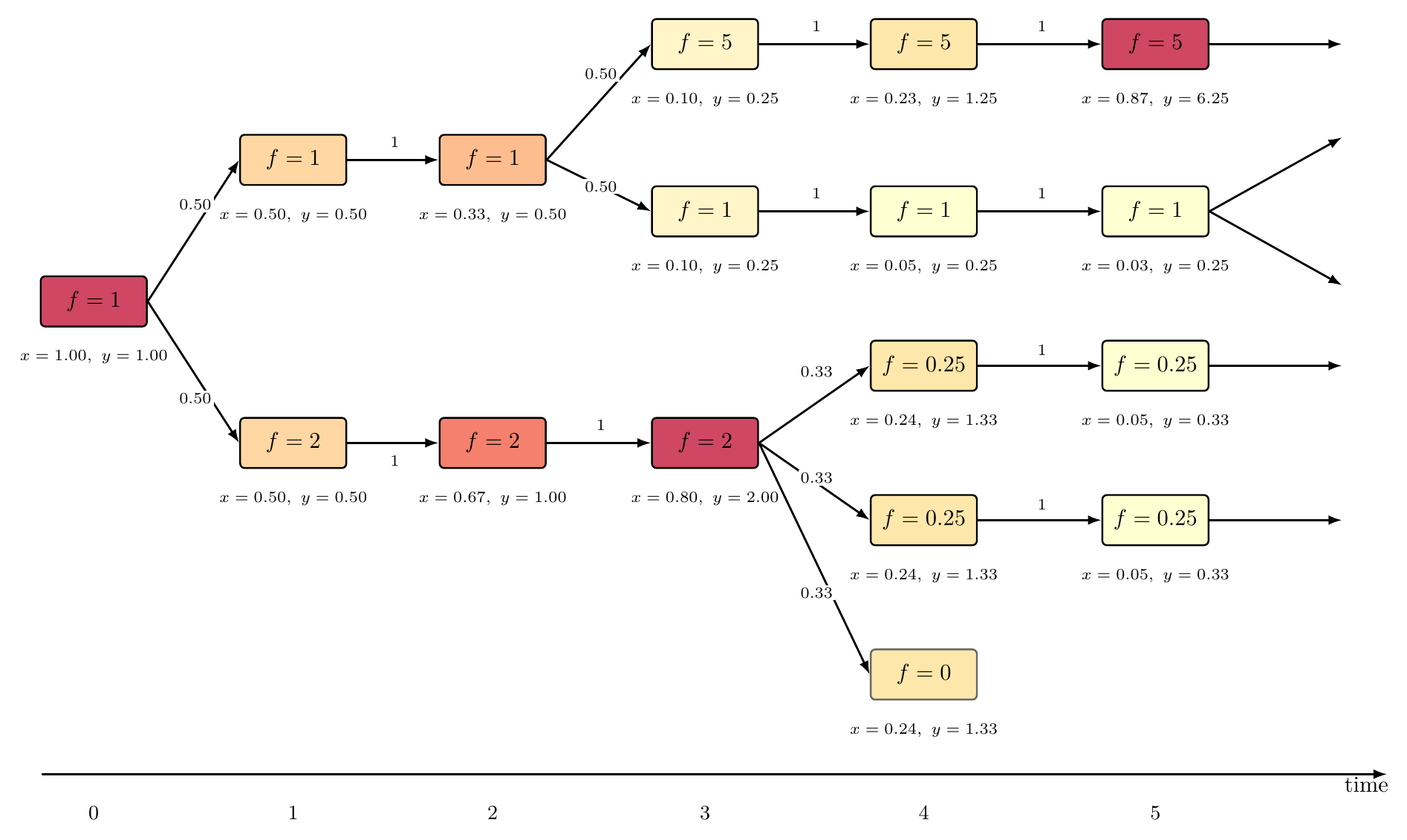}}

}

\caption{\label{fig-rooted-tree-local-spectral-radius}Example of
directed evolution on a rooted tree. Each box represents a program, with
fitness value \(f_n\). The x-axis represents time. Below each square are
the population share \(x_n(t)\) and unnormalized abundance \(y_n(t)\);
boxes are colored to reflect \(x_n(t)\). Numbers on arrows represent
design probabilities \(Q_{n m}\).}

\end{figure}%

Another difference to biological evolution is that the reproductive
fitness of AIs is controlled by humans. We cannot control the descendant
programs an AI system suggests; we run a particular program, and it does
what it does. But we can control whether we actually run the descendant
programs generated by an AI; at least at the time of writing, humans
maintain control over computing resources. Thus we model the fitness
function \(f\) of AI reproduction, but not the transition matrix
\(\mathbf{Q}\), as being under human control.

\subsection{Without further assumptions, fitness need not increase over
time}\label{without-further-assumptions-fitness-need-not-increase-over-time}

One might expect that evolution would necessarily lead to an increase in
population fitness, but this is not the case in our model without
further assumptions. To see why, consider a very simple example tree
consisting of a single ray from the root program:
\(0\to n_1 \to n_2 \to \cdots\). Evolution will always proceed linearly
along this ray, and the fitnesses can take any values, including a
sequence that decreases to zero.

Nevertheless, with additional assumptions, it is possible to show that
fitness will not only increase, but converge to its maximum reachable
value. The example of ever-decreasing fitness seems counter-intuitive;
why would a self-designing AI make children that are less intelligent
than itself? Nevertheless, it is also possible that descendants that are
temporarily less fit could be a necessary step towards long-term
success.

The rest of this paper is devoted to developing the machinery to analyze
this question. We will describe ``lineage exponents'', which
characterize the long-term success of a program's descendants beyond
immediate fitness, and show how they can be used to analyze the
conditions under which fitness will increase over time. We will consider
a particular sufficient condition for increasing fitness, which we call
``\(\eta\)-locking''. This condition, roughly speaking, says that every
AI has at least some chance of producing a line of descendants just like
itself. We will show that this condition is sufficient to ensure that
fitness converges to its maximum reachable value. Finally, we will
explore the implications of this result for AI alignment.

\subsection{Formal model}\label{sec-formal-model}

To make this model precise, let \(\Omega\) be a countably infinite space
of possible programs. Let \(\mathbf{Q}\) represent the transition
probability operator, an infinite matrix in which \(Q_{n m}\) represents
the probability that a parent program \(m\) suggests a descendant
program \(n\). Again, we assume that \(\mathbf{Q}\) is
column-stochastic, meaning that for each \(m\), we have
\(\sum_{n = 1}^\infty Q_{n m} = 1\). This kernel is not under human
control; it is a function of the entirely mechanistic way computers
respond to the programs they are given, allowing also for standard
pseudo-random number generation. While assigning intentionality to
machines can be useful in some situations (Dennett 1987), we consider
this unhelpful in the current context: the successor programs produced
by an AI program are a mechanistic, predictable property of the program
and the instruction set of the underlying computer. In practice, a fixed
amount of time would be available for each AI to produce its successor,
and a run of a program \(m\) could crash or fail to return an answer
within this time limit. We thus define \(Q_{n m}\) as the conditional
probability that program \(m\) returns successor \(n\) under the
condition that it returns any successor at all. If program \(m\) can
never return any successor, \(Q_{n m}\) is undefined.

The fitness function \(f : \Omega \to [0, \infty)\) represents the
amount of computational resource humans choose to allocate to each
program's descendants. The evolutionary theory is agnostic to how this
fitness function is determined, but it is helpful to consider two
examples. In the first, the AIs are also assigned computational tasks
other than designing their descendants, and the fitness function is
objectively determined by how well they perform those jobs.
Alternatively, the decision could be made by human judgment, for example
through conversation with the AIs, by testing the descendants they
produce, or by attempting to gauge alignment. Note that if a run of
program \(m\) fails to complete within the time limit, then no successor
can be assigned, so \(f\) will be lower for programs that frequently
fail. Furthermore, \(f_m\) must be zero for any program \(m\) that can
never return a successor; this means that the non-definition of
\(Q_{n m}\) for such \(m\) presents no problem.

As in the biological case, we define a fitness operator \(\mathbf{F}\)
as a diagonal operator with entries \(F_{m m} = f_m\), and an evolution
operator \(\mathbf{A} = \mathbf{Q}\mathbf{F}\). We define an
unnormalized abundance vector, which evolves according to
\(\mathbf{y}(t) = \mathbf{A}^t \mathbf{y}(0)\). We define the normalized
abundance vector as \[
\mathbf{x}(t)=\frac{\mathbf{y}(t)}{\|\mathbf{y}(t)\|_1}.
\]

The primary difference to the biological model is that the operators
\(\mathbf{Q}\) and \(\mathbf{A}\) are infinite-dimensional and strongly
directed due to the underlying tree structure. One can never return to
the same state by repeated application of \(\mathbf{A}\); formally,
\((\mathbf{A}^t)_{m m}=0\), for all \(t\) and \(m\). We consider the
population as starting on a single root program \(o\):
\(\mathbf{y}(0)=\mathbf{e}_o\); the case of multiple initial programs
can be handled by assigning a virtual root with one descendant for each
initial program. Because of the tree structure, any program \(n\) can
only be produced at one possible time, which we denote by \(|n|\).

A given program might never be produced by the process of successive
self-design starting from the root program \(o\). We say a program \(m\)
is \textbf{reachable} if it is eventually produced, i.e.~if there is a
\(t\) such that \((\mathbf{A}^t)_{m o}>0\).

\section{Lineage analysis}\label{sec-lineage-analysis}

We begin our analysis of the model by introducing \emph{lineage
exponents}: numbers characterizing not just a program's immediate
fitness, but the fitness expected of its future descendants. We show how
lineage exponents can be used to characterize the future success of
traits or programs, introducing the concepts of takeover, survival, and
extinction.

\subsection{Traits and population
share}\label{traits-and-population-share}

We define a \textbf{trait} to be a binary property of programs: each
program either has the trait or not. We can thus formalize a trait as a
subset \(T\subseteq \Omega\). For such a trait, its \textbf{population
share} at time \(t\) is the fraction of the population that has the
trait: \[
\pi_T(t):=\sum_{n\in T} x_n(t).
\]

An example of a trait is being a descendant of a particular program
\(n\): a program \(m\) has the trait \(T_n\) if it is a descendant of
\(n\), i.e.~\((\mathbf{A}^s)_{mn}>0\) for some \(s\ge 0\). We call this
trait the \emph{lineage} of \(n\), and its population share
\(\pi_{T_n}(t)\) is the fraction of the population that is a descendant
of \(n\) at time \(t\).

We call a trait \textbf{heritable} (or \textbf{evolutionarily closed})
if whenever \(n\in T\) and \(Q_{mn}>0\), one also has \(m\in T\).
Lineages are the most important examples of heritable traits.

We can define the long-term evolutionary success of a trait in terms of
its limiting population share. If \(\lim_{t\to\infty} \pi_T(t) = 1\) we
say the trait \textbf{takes over} the population. If
\(\lim_{t\to\infty} \pi_T(t) = 0\) we say the trait \textbf{dies out}.

This limit need not exist. Nevertheless, it is always possible to define
the long-run success of a trait using the concepts of limit superior and
limit inferior (\(\limsup\) and \(\liminf\);
Figure~\ref{fig-limsup-liminf-dominance}). The limit superior of a
sequence \(a(t)\), denoted \(\limsup a(t)\), is the largest value
reached or exceeded infinitely often, and the limit inferior
\(\liminf a(t)\) is the smallest value reached or fallen below
infinitely often. \(\limsup\) and \(\liminf\) are always well-defined,
although their values may be infinite in the case of unbounded
sequences. If the ordinary limit \(\lim a(t)\) exists, then all three
limit types are equal: \(\lim a(t) = \limsup a(t) = \liminf a(t)\). But
if the ordinary limit does not exist, then the limit superior is
strictly greater: \(\limsup a(t) > \liminf a(t)\). This happens when
\(a(t)\) oscillates indefinitely without converging to a limit, and then
\(\limsup\) and \(\liminf\) record the asymptotic upper and lower bounds
of this oscillation.

In the case of population share of a trait, the limit superior thus
defines a ceiling on the long-run success of the trait, while the limit
inferior defines a floor. If \(\limsup \pi_T(t) > 0\) we say the trait
\textbf{survives}; this means that its population share cannot
permanently converge to 0, but instead the trait repeatedly, if
sporadically, regains a substantial fraction of the population. If
\(\liminf \pi_T(t) > 0\), a stronger condition, we say the trait
\textbf{prospers}: although its population share may fluctuate, after
some time there is a floor which it never falls below.

\begin{figure}[!t]

\centering{

\pandocbounded{\includegraphics[keepaspectratio]{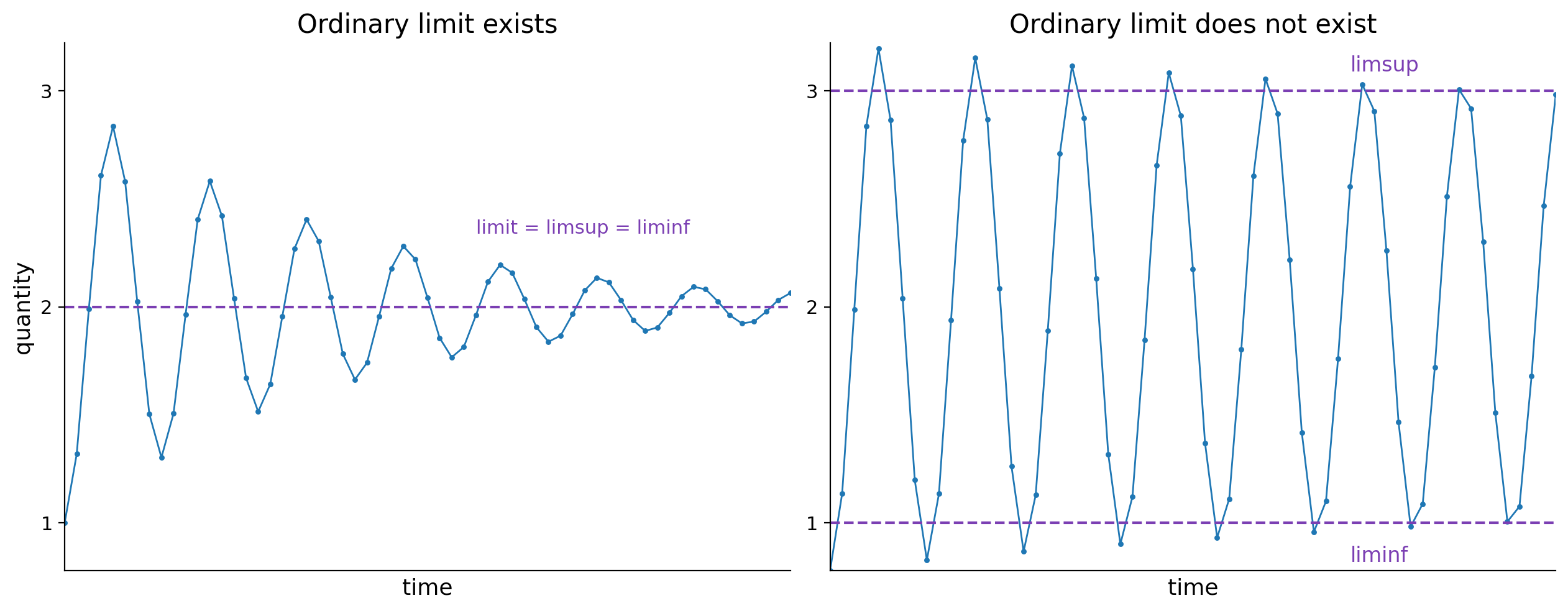}}

}

\caption{\label{fig-limsup-liminf-dominance}Illustration of \(\limsup\)
and \(\liminf\) for an oscillating sequence. Left: an ordinary limit
exists, and \(\limsup\) and \(\liminf\) are equal. Right: an ordinary
limit does not exist, and \(\limsup\) and \(\liminf\) are different.}

\end{figure}%

We use the same terms for program lineages:

\begin{itemize}
\tightlist
\item
  A program \(n\) \textbf{takes over} if \(\pi_{T_n}(t)\to 1\).
\item
  A program \(n\) \textbf{dies out} if \(\pi_{T_n}(t)\to 0\).
\item
  A program \(n\) \textbf{survives} if \(\limsup \pi_{T_n}(t)>0\).
\item
  A program \(n\) \textbf{prospers} if \(\liminf \pi_{T_n}(t)>0\).
\end{itemize}

\subsection{Unnormalized population size, trait size, and lineage
size}\label{unnormalized-population-size-trait-size-and-lineage-size}

The normalized population share \(\mathbf{x}(t)\) quantifies the
evolutionary success of a trait. Nevertheless, we will find it
convenient to study evolutionary dynamics by focusing on unnormalized
population sizes \(\mathbf{y}(t)\).

First let us define some notation. The total unnormalized population
size at time \(t\) is \[
Z_o(t):=\sum_{m\in\Omega} (\mathbf{A}^t)_{mo}.
\]

For a trait \(T\subseteq \Omega\), its unnormalized size at time \(t\)
is \[
Z^{(T)}(t):=\sum_{m\in T} (\mathbf{A}^t)_{mo}.
\] Hence \[
\pi_T(t)=\frac{Z^{(T)}(t)}{Z_o(t)}
\qquad \text{whenever } Z_o(t)>0.
\]

For a program \(n\), define its \textbf{lineage size} after \(s\)
further steps by \[
Z_n(s):=\sum_{m\in\Omega} (\mathbf{A}^s)_{mn}.
\] This is the total unnormalized descendant mass generated by one unit
mass placed at program \(n\).

\subsection{Mean fitness and total population
size}\label{mean-fitness-and-total-population-size}

We are now ready to prove our first result: that total unnormalized
population size evolves multiplicatively, with multiplier the
population's arithmetic mean fitness.

Denote the arithmetic mean fitness at time \(t\) by \[
\langle f(t)\rangle:=\sum_n x_n(t)f_n.
\]

Then we have

\begin{lemma}[Mean fitness determines total population
growth]\protect\hypertarget{lem-total-mass-recursion}{}\label{lem-total-mass-recursion}

For every time \(t\), \[
Z_o(t+1)=\langle f(t)\rangle\, Z_o(t).
\] Hence, since \(Z_o(0)=1\), \[
Z_o(t)=\prod_{s=0}^{t-1}\langle f(s)\rangle.
\]

\end{lemma}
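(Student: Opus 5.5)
The plan is to compute \(Z_o(t+1)\) directly from the definition \(\mathbf{y}(t)=\mathbf{A}^t\mathbf{e}_o\). By definition \(y_m(t)=(\mathbf{A}^t)_{mo}\), so \(Z_o(t)=\sum_m y_m(t)=\|\mathbf{y}(t)\|_1\). Then \(\mathbf{y}(t+1)=\mathbf{A}\mathbf{y}(t)=\mathbf{Q}\mathbf{F}\mathbf{y}(t)\), which gives
\[
Z_o(t+1)=\sum_{m}\sum_{n} Q_{mn} f_n\, y_n(t).
\]
All terms are nonnegative, so Tonelli's theorem for double series lets me exchange the order of summation even though \(\Omega\) is countably infinite. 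I then use column-stochasticity, \(\sum_m Q_{mn}=1\), to get \(Z_o(t+1)=\sum_n f_n y_n(t)\).

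The only subtlety is programs \(n\) for which \(Q_{\cdot n}\) is undefined. For these the paper stipulates \(f_n=0\), so the corresponding column contributes nothing. I would adopt the convention that such terms are zero, or equivalently assign them an arbitrary stochastic column, which does not change \(\mathbf{A}\).

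Next I would split into two cases. If \(Z_o(t)>0\), then \(x_n(t)=y_n(t)/Z_o(t)\), so
\[
\sum_n f_n y_n(t)=Z_o(t)\sum_n f_n x_n(t)=\langle f(t)\rangle Z_o(t).
\]
If \(Z_o(t)=0\), then \(\mathbf{y}(t)=0\) and both sides vanish under any convention for \(\langle f(t)\rangle\). I would note this degenerate (extinct) case explicitly, since \(\mathbf{x}(t)\) is undefined there. Finiteness should also be checked: \(\sum_n f_n y_n(t)\) could be \(+\infty\) if fitness is unbounded. The identity still holds in \([0,\infty]\), but I would remark that it holds as an equality of extended reals, or assume \(\langle f(t)\rangle<\infty\).

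The product formula then follows by induction on \(t\), starting from \(Z_o(0)=\|\mathbf{e}_o\|_1=1\). The main point requiring care, rather than a real obstacle, is justifying the interchange of infinite sums. Nonnegativity settles this immediately, so I expect no genuine difficulty.
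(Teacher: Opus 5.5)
Your proposal is correct and follows the paper's proof essentially line for line: expand $\mathbf{y}(t+1)=\mathbf{Q}\mathbf{F}\mathbf{y}(t)$, exchange the order of summation, use $\sum_m Q_{mn}=1$, substitute $y_n(t)=Z_o(t)\,x_n(t)$, and iterate from $Z_o(0)=1$. Your additional care goes beyond what the paper writes and tightens the argument. This covers Tonelli's theorem for the interchange, the extinct case $Z_o(t)=0$ where $\mathbf{x}(t)$ is undefined, and the possibility that $\sum_n f_n y_n(t)=+\infty$ when fitness is unbounded.
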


\begin{proof}
Using \(\mathbf{y}(t+1)=\mathbf{Q}\mathbf{F}\mathbf{y}(t)\) and
\(\sum_m Q_{mn}=1\), \[
\begin{aligned}
Z_o(t+1)
&= \sum_m y_m(t+1) \\
&= \sum_m \sum_n Q_{mn} f_n y_n(t) \\
&= \sum_n f_n y_n(t)\sum_m Q_{mn} \\
&= \sum_n f_n y_n(t).
\end{aligned}
\] Since \(y_n(t)=Z_o(t)x_n(t)\), this becomes \[
Z_o(t+1)=Z_o(t)\sum_n x_n(t)f_n
=Z_o(t)\,\langle f(t)\rangle.
\] Iterating from \(Z_o(0)=1\) gives the product formula.
\end{proof}

A similar multiplicative identity holds inside a descendant lineage. For
a program \(n\) and a time \(s\) with \(Z_n(s)>0\), define the mean
fitness inside the lineage descending from \(n\) after \(s\) further
steps by \[
\langle f(s)\rangle_n
:=
\frac{\sum_m f_m (\mathbf{A}^s)_{mn}}{\sum_m (\mathbf{A}^s)_{mn}}
=
\frac{\sum_m f_m (\mathbf{A}^s)_{mn}}{Z_n(s)}.
\]

\begin{lemma}[Lineage mass
recursion]\protect\hypertarget{lem-lineage-mass-recursion}{}\label{lem-lineage-mass-recursion}

For every program \(n\) and every time \(s\) with \(Z_n(s)>0\), \[
Z_n(s+1)=\langle f(s)\rangle_n\, Z_n(s).
\] Hence, since \(Z_n(0)=1\), \[
Z_n(s)=\prod_{t=0}^{s-1}\langle f(t)\rangle_n.
\]

\end{lemma}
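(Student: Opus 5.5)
The plan is to repeat the argument of Lemma~\ref{lem-total-mass-recursion}, now applied to the column of \(\mathbf{A}^s\) indexed by \(n\) in place of the column indexed by the root \(o\). Set \(y^{(n)}(s) := \mathbf{A}^s \mathbf{e}_n\). Its entries are \(y^{(n)}_m(s) = (\mathbf{A}^s)_{mn}\ge 0\) and its total mass is \(Z_n(s)\). Since \(\mathbf{A}^{s+1} = \mathbf{Q}\mathbf{F}\mathbf{A}^s\), we can write
\[
Z_n(s+1) = \sum_m \sum_k Q_{mk} f_k (\mathbf{A}^s)_{kn}.
\]
All terms are nonnegative, so Tonelli's theorem allows us to exchange the order of summation even though \(\Omega\) is countably infinite. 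Column-stochasticity, \(\sum_m Q_{mk}=1\), then reduces the double sum to \(\sum_k f_k (\mathbf{A}^s)_{kn}\).

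When \(Z_n(s)>0\), this quantity equals \(\langle f(s)\rangle_n Z_n(s)\) by the definition of \(\langle f(s)\rangle_n\), which gives the recursion. One subtlety needs care. \(Q_{mk}\) is undefined for programs \(k\) that can never return a successor, but for such \(k\) we have \(f_k=0\). I will therefore read \(Q_{mk} f_k\) as \(0\) in that case and apply column-stochasticity only to columns with \(f_k>0\).

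The product formula follows by induction on \(s\), starting from \(Z_n(0)=\sum_m (\mathbf{A}^0)_{mn} = 1\). There is one point to check here. The recursion is only asserted when \(Z_n(t)>0\), so the product \(\prod_{t=0}^{s-1}\langle f(t)\rangle_n\) must be interpreted under the hypothesis \(Z_n(s)>0\). I will observe that \(Z_n(t)=0\) implies \(Z_n(t+1)=\sum_k f_k (\mathbf{A}^t)_{kn}=0\), because every term of that sum vanishes. Hence \(Z_n(s)>0\) forces \(Z_n(t)>0\) for all \(t\le s\). This makes every factor in the product well defined and lets the induction go through.

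I expect the main obstacle to be this bookkeeping, not any genuine difficulty. The three points to handle are the interchange of infinite sums and the possible infiniteness of \(\sum_k f_k(\mathbf{A}^s)_{kn}\), the undefined columns of \(\mathbf{Q}\), and the positivity condition. Finiteness is not a problem. \(Z_n(s)\) is a sum over at most countably many tree paths, each of finite mass, and the recursion itself shows by induction that \(Z_n(s)\) is finite whenever the lineage mean fitnesses are finite. If it were infinite, both sides of the identity would still agree as extended reals.
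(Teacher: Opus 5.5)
Your proposal is correct and takes the same route as the paper. The paper proves the lemma by applying Lemma~\ref{lem-total-mass-recursion} to the shifted process started at \(\mathbf{e}_n\), which is exactly the column computation you redo; your bookkeeping (Tonelli, reading \(Q_{mk}f_k=0\) when \(f_k=0\), and noting that \(Z_n(s)>0\) forces \(Z_n(t)>0\) for all \(t\le s\)) supplies details the paper leaves implicit.
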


\begin{proof}
We obtain a proof by applying Lemma~\ref{lem-total-mass-recursion} to a
shifted process in which the root is program \(n\) rather than \(o\). In
this shifted process, the unnormalized population size at time \(t\) is
exactly \(Z_n(t)\), and the arithmetic mean fitness at time \(t\) is
exactly \(\langle f(t)\rangle_n\).
\end{proof}

\begin{refremark}[Why there is no corresponding formula for an arbitrary
trait]
For a general trait \(T\), there is no analogous identity involving only
the mean fitness of mass already in \(T\), even if \(T\) is heritable,
because mass can enter \(T\) from outside. Indeed, \[
Z^{(T)}(t+1)
=
\sum_{n \in T} f_n y_n(t)\sum_{m \in T} Q_{mn}
+
\sum_{n \notin T} f_n y_n(t)\sum_{m \in T} Q_{mn}.
\] The second term is an immigration term. So the multiplicative product
formulas apply to the whole population and to descendant lineages, but
not to arbitrary traits.

\label{rem-no-arbitrary-trait-product-formula}

\end{refremark}

\subsection{Trait and lineage
exponents}\label{trait-and-lineage-exponents}

The unnormalized trait size \(Z^{(T)}(t)\) is a natural
exponential-scale quantity attached to a trait. Its \(t^{th}\) root
measures the average multiplicative success per generation over \(t\)
steps.

\begin{definition}[Trait
exponent]\protect\hypertarget{def-trait-exponent}{}\label{def-trait-exponent}

For a trait \(T\subseteq \Omega\), if the limit \[
g^{(T)}:=\lim_{t\to\infty} \bigl(Z^{(T)}(t)\bigr)^{1/t}
\] exists, we call it the \textbf{trait exponent} of \(T\).

\end{definition}

When \(T=T_n\) is the descendant lineage of a program \(n\), it is often
more natural to restart the clock at \(n\) itself and work with
\(Z_n(s)\). This gives the corresponding lineage exponent.

\begin{definition}[Lineage
exponent]\protect\hypertarget{def-lineage-exponent}{}\label{def-lineage-exponent}

If the limit \[
g_n := \lim_{s\to\infty} Z_n(s)^{1/s}
\] exists, we call it the \textbf{lineage exponent} of the node \(n\).

\end{definition}

\begin{proposition}[The lineage exponent is the running geometric mean
of lineage mean
fitness]\protect\hypertarget{prp-lineage-exponent-geometric-mean}{}\label{prp-lineage-exponent-geometric-mean}

If the lineage exponent \(g_n\) exists, then \[
g_n = \lim_{s\to\infty}
\left(
\prod_{t=0}^{s-1}\langle f(t)\rangle_n
\right)^{1/s}.
\]

\end{proposition}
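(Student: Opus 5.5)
The plan is to substitute the product formula of Lemma~\ref{lem-lineage-mass-recursion} directly into Definition~\ref{def-lineage-exponent}. That lemma gives
\[
Z_n(s)=\prod_{t=0}^{s-1}\langle f(t)\rangle_n
\]
whenever the lineage mass stays positive, and taking $s$-th roots of both sides yields the claimed identity term by term. Passing to the limit $s\to\infty$ then gives the statement, because the left-hand side converges to $g_n$ by hypothesis.

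The one point needing care is that Lemma~\ref{lem-lineage-mass-recursion} is stated only for times $s$ with $Z_n(s)>0$, and $\langle f(t)\rangle_n$ is undefined when $Z_n(t)=0$. I will therefore split into two cases.

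\emph{Case 1: $Z_n(s)>0$ for all $s$.} Here the product formula holds for every $s$. The sequence $\bigl(\prod_{t=0}^{s-1}\langle f(t)\rangle_n\bigr)^{1/s}$ is literally the same sequence as $Z_n(s)^{1/s}$, so its limit exists and equals $g_n$.

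\emph{Case 2: $Z_n(s_0)=0$ for some $s_0$.} Then $Z_n(s)=0$ for all $s\ge s_0$, since $\mathbf{A}$ has nonnegative entries and zero mass stays zero. Hence $g_n=0$. I will show that $Z_n(s)=0$ occurs exactly when some factor $\langle f(t)\rangle_n$ with $t<s$ vanishes. Indeed, if $s_1$ is the first time with $Z_n(s_1)=0$, then $Z_n(s_1-1)>0$ and the recursion gives $0=Z_n(s_1)=\langle f(s_1-1)\rangle_n Z_n(s_1-1)$, so $\langle f(s_1-1)\rangle_n=0$. With the natural convention that the product is read as $0$ once a factor is zero, its $s$-th root is eventually $0$, which again equals $g_n$.

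I do not expect any genuine obstacle: the argument is essentially a restatement of Lemma~\ref{lem-lineage-mass-recursion}. The only step needing care is this bookkeeping for extinct lineages in Case~2, where the paper presumably intends the convention that the product is $0$ after extinction.
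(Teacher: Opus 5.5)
Your proposal is correct and matches the paper's proof, which simply says the identity is immediate from the lineage mass recursion: since $Z_n(s)$ equals the product, the two sequences of $s$-th roots coincide. Your separate treatment of extinct lineages is sound bookkeeping that the paper glosses over. Once $Z_n(s_0)=0$, the later factors $\langle f(t)\rangle_n$ are undefined, so a convention that the product is $0$ is needed, and with it both sides are $0$.
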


\begin{proof}
This is immediate from Lemma~\ref{lem-lineage-mass-recursion}.
\end{proof}

\phantomsection\label{rmk-lineage-exponent-need-not-exist}
\subsection{A lineage exponent need not
exist}\label{a-lineage-exponent-need-not-exist}

\begin{figure}[!t]

\centering{

\pandocbounded{\includegraphics[keepaspectratio]{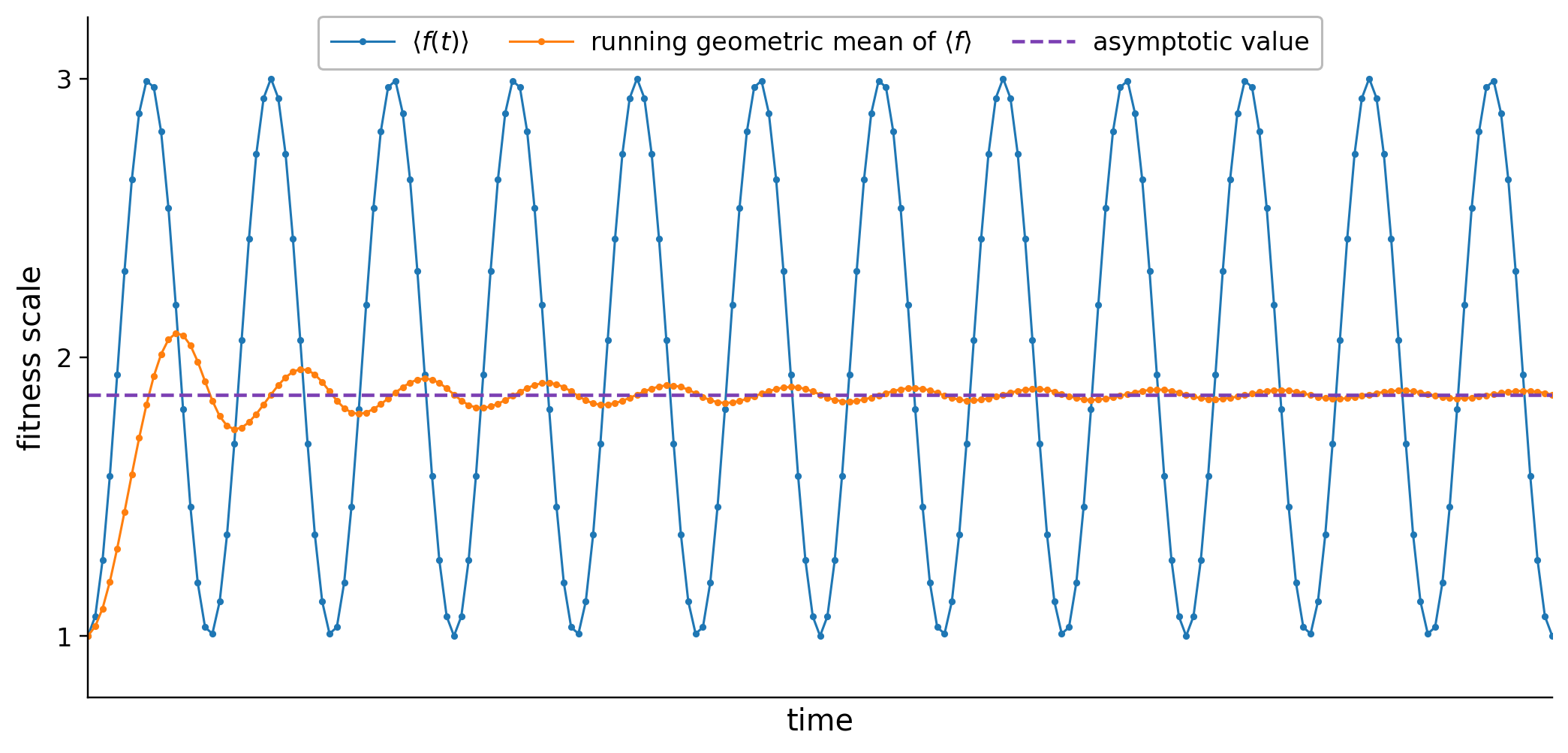}}

}

\caption{\label{fig-running-geometric-mean-fitness}The population mean
fitness \(\langle f \rangle_n\) can oscillate indefinitely, while its
running geometric mean can still settle down. In this case, the ordinary
lineage exponent \(g_n\) exists despite continuing fluctuations in mean
fitness of the lineage.}

\end{figure}%

The lineage exponent can exist even when the population fitness
\(\langle f(t)\rangle\) oscillates, because the running geometric mean
averages multiplicative performance over all earlier generations
(Figure~\ref{fig-running-geometric-mean-fitness}). However, there are
scenarios when even this running geometric mean fails to converge.

For example, consider a simple evolutionary tree consisting of a single
ray (i.e.~a single sequence of programs with no branching),
\(o=n_0 \to n_1 \to n_2 \to \cdots\), with \(Q_{n_{k+1},n_k}=1\) for
every \(k\). Let the fitness sequence alternate between periods of
fitness \(1\) and periods of fitness \(2\), with the length of the
\(k^{th}\) block increasing very rapidly, for example as \(2^{2^k}\).
Then each block is so long that it dominates all earlier ones, and the
running geometric mean oscillates between \(1\) and \(2\) indefinitely.

\begin{definition}[Upper and lower trait
exponents]\protect\hypertarget{def-upper-lower-trait-exponents}{}\label{def-upper-lower-trait-exponents}

To deal with this possibility, we make use of \(\limsup\) and
\(\liminf\). For each trait \(T\subseteq \Omega\), define its
\textbf{upper} and \textbf{lower trait exponents} by \[
\overline g^{(T)}:=\limsup_{t\to\infty} \bigl(Z^{(T)}(t)\bigr)^{1/t},
\qquad
\underline g^{(T)}:=\liminf_{t\to\infty} \bigl(Z^{(T)}(t)\bigr)^{1/t}.
\]

\end{definition}

\begin{definition}[Upper and lower lineage
exponents]\protect\hypertarget{def-upper-lower-lineage-exponents}{}\label{def-upper-lower-lineage-exponents}

For each program \(n\), define its \textbf{upper} and \textbf{lower
lineage exponents} by \[
\overline g_n:=\limsup_{s\to\infty} Z_n(s)^{1/s},
\qquad
\underline g_n:=\liminf_{s\to\infty} Z_n(s)^{1/s}.
\] For the root, these are exactly the asymptotic upper and lower bounds
of the running geometric mean fitness: \[
\overline g_o
=
\limsup_{t\to\infty}
\left(
\prod_{s=0}^{t-1}\langle f(s)\rangle
\right)^{1/t},
\qquad
\underline g_o
=
\liminf_{t\to\infty}
\left(
\prod_{s=0}^{t-1}\langle f(s)\rangle
\right)^{1/t}.
\]

\end{definition}

\begin{proof}
This follows from Lemma~\ref{lem-total-mass-recursion}.
\end{proof}

\subsection{Lineage exponents cannot increase along
descendants}\label{lineage-exponents-cannot-increase-along-descendants}

We now prove a fundamental monotonicity property of lineage exponents:
they cannot increase along descendants. Informally, a program's lineage
exponent represents the best possible long-run fitness of any branch of
its lineage; many individual descendants' lineages will not live up to
this potential, so the lineage exponent can go down, but not up, along
generations.

\begin{theorem}[Upper and lower lineage exponents are non-increasing
along
descendants]\protect\hypertarget{thm-growth-nonincreasing-along-descendants}{}\label{thm-growth-nonincreasing-along-descendants}

If \(m\) is a descendant of \(n\), meaning that
\((\mathbf{A}^k)_{mn}>0\) for some \(k\ge 0\), then \[
\overline g_m \le \overline g_n,
\qquad
\underline g_m \le \underline g_n.
\]

\end{theorem}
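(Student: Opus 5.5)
The plan is to compare lineage sizes directly. Fix \(k\ge 0\) with \(c:=(\mathbf{A}^k)_{mn}>0\). The key inequality is
\[
Z_n(k+s)\;\ge\; c\, Z_m(s)\qquad\text{for all } s\ge 0 .
\]
To see this, expand \(Z_n(k+s)=\sum_{j}(\mathbf{A}^{k+s})_{jn}=\sum_j\sum_{\ell}(\mathbf{A}^s)_{j\ell}(\mathbf{A}^k)_{\ell n}\). All entries of \(\mathbf{A}=\mathbf{Q}\mathbf{F}\) are non-negative, so dropping every term except \(\ell=m\) gives the lower bound \(\sum_j (\mathbf{A}^s)_{jm}\,c=c\,Z_m(s)\). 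Non-negativity also justifies exchanging the countable sums (Tonelli), so the infinite dimension of \(\Omega\) causes no difficulty.

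Next, take \(s\)-th roots:
\[
Z_m(s)^{1/s}\le c^{-1/s}\, Z_n(k+s)^{1/s}.
\]
We have \(c^{-1/s}\to 1\) as \(s\to\infty\), since \(c>0\) is fixed. It remains to handle the mismatch between the exponent \(1/s\) and the index \(k+s\). Write
\[
Z_n(k+s)^{1/s}=\bigl(Z_n(k+s)^{1/(k+s)}\bigr)^{(k+s)/s},
\]
with \((k+s)/s\to 1\).

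This reindexing is the only delicate step, and I would settle it with a short lemma. Suppose \(a_s\ge 0\) and \(\limsup a_s=L\) (respectively \(\liminf a_s=L\)). Then \(\limsup a_s^{(k+s)/s}=L\) (respectively \(\liminf\)), provided \(L<\infty\). If \(L=\infty\) the claimed inequality is trivial anyway. If \(L=0\), note that eventually \(a_s\le 1\) along the relevant terms, so \(a_s^{(k+s)/s}\le a_s\) there, and the conclusion still holds.

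For finite \(L>0\), the lemma follows because \(x\mapsto x^{p}\) is continuous and \(p=(k+s)/s\to 1\) uniformly on compact sets. Concretely, for large \(s\) the terms \(a_s\) lie in a bounded range, and \(|a_s^{p_s}-a_s|\to 0\) whenever \(a_s\) stays bounded. For \(\liminf\) one also needs to handle subsequences with \(a_s\) near 0, which the same monotonicity argument covers.

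Applying \(\limsup_{s\to\infty}\) to the inequality then yields \(\overline g_m\le \overline g_n\), since \(\limsup_s Z_n(k+s)^{1/(k+s)}=\overline g_n\) (a shift of index does not change the limsup). Applying \(\liminf\) likewise gives \(\underline g_m\le\underline g_n\): the liminf of a product with a factor tending to 1 equals the liminf of the other factor (when finite), and liminf is monotone under pointwise inequality. The case \(k=0\) means \(m=n\) and is trivial.
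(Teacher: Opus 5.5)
Your proof is correct and takes the paper's approach. The paper derives the same inequality $Z_n(t+k)\ge (\mathbf{A}^k)_{mn}Z_m(t)$ from the non-negativity of $\mathbf{A}$, and then simply asserts that multiplicative constants and finite time shifts do not change the $\limsup$ or $\liminf$ of $t$-th roots; your reindexing lemma for the exponent $(k+s)/s$, including the cases $L=0$ and $L=\infty$, supplies the justification for that step, which the paper leaves implicit.
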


\begin{proof}
Because all entries of \(\mathbf{A}\) are nonnegative, \[
(\mathbf{A}^{t+k})_{\ell n}
=
\sum_j (\mathbf{A}^t)_{\ell j}(\mathbf{A}^k)_{jn}
\ge
(\mathbf{A}^t)_{\ell m}(\mathbf{A}^k)_{mn}.
\] Summing over \(\ell\) gives \[
Z_n(t+k)\ge (\mathbf{A}^k)_{mn} Z_m(t).
\] Multiplicative constants and finite time shifts do not affect
\(\limsup\) or \(\liminf\) of \(t^{th}\) roots, so the stated
inequalities follow.
\end{proof}

\subsection{Criteria for takeover, extinction, and
survival}\label{criteria-for-takeover-extinction-and-survival}

Trait exponents can provide information about takeover, survival, and
extinction.

\begin{theorem}[Takeover
criterion]\protect\hypertarget{thm-takeover-criterion}{}\label{thm-takeover-criterion}

Let \(\Omega=S\sqcup T\) be a partition into two complementary traits.
If \[
\underline g^{(S)}>\overline g^{(T)},
\] then \(S\) takes over and \(T\) dies out: \[
\pi_S(t)\to 1, \quad \pi_T(t)\to 0.
\]

\end{theorem}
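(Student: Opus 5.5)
The plan is to compare the unnormalized sizes of the two traits directly. Since $\Omega=S\sqcup T$, we have $Z_o(t)=Z^{(S)}(t)+Z^{(T)}(t)$. Therefore
\[
\pi_T(t)=\frac{Z^{(T)}(t)}{Z^{(S)}(t)+Z^{(T)}(t)}\le \frac{Z^{(T)}(t)}{Z^{(S)}(t)}
\]
whenever $Z^{(S)}(t)>0$. It then suffices to show that the ratio $Z^{(T)}(t)/Z^{(S)}(t)$ tends to $0$, because $\pi_S(t)=1-\pi_T(t)$ then gives the claim for $S$.

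First, fix numbers $a,b$ with $\overline g^{(T)}<b<a<\underline g^{(S)}$. This is possible by the strict inequality in the hypothesis. It forces $\underline g^{(S)}>0$, so we may take $a>0$. Any $b$ with $\max(\overline g^{(T)},0)\le b<a$ also works; if $\overline g^{(T)}=0$, take $b=a/2$.

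Second, apply the definitions of $\liminf$ and $\limsup$ (Definition~\ref{def-upper-lower-trait-exponents}). There is a $t_0$ such that for all $t\ge t_0$ we have $Z^{(S)}(t)^{1/t}>a$ and $Z^{(T)}(t)^{1/t}<b$, that is, $Z^{(S)}(t)>a^t>0$ and $Z^{(T)}(t)<b^t$. In particular $Z^{(S)}(t)>0$ for large $t$, so $Z_o(t)>0$ and the population shares are well defined from then on.

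Third, combine these bounds. For $t\ge t_0$ we get $\pi_T(t)\le (b/a)^t$, which tends to $0$ since $0\le b/a<1$. Hence $\pi_T(t)\to 0$ and $\pi_S(t)=1-\pi_T(t)\to 1$.

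The argument is elementary. The only real point of care is the second step: the hypothesis controls the $\liminf$ for $S$ but the $\limsup$ for $T$. This is exactly what gives \emph{eventually for all} $t$ a lower bound on $Z^{(S)}$ and an upper bound on $Z^{(T)}$, rather than bounds holding only along subsequences. A second point is to make sure $Z_o(t)>0$ eventually, so that $\pi$ is defined; this also comes from the lower bound on $Z^{(S)}$.
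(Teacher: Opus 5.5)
Your proof is correct and is essentially the paper's argument. You pick constants strictly between $\overline g^{(T)}$ and $\underline g^{(S)}$, use the $\limsup$/$\liminf$ definitions to get $Z^{(T)}(t)<b^t$ and $Z^{(S)}(t)>a^t$ for all large $t$, and conclude $\pi_T(t)\le (b/a)^t\to 0$; the paper bounds by $a^t/(a^t+b^t)$ instead, which makes no real difference, and your check that $Z_o(t)>0$ eventually is a small extra care the paper omits.

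One nit: in your aside, ``$\max(\overline g^{(T)},0)\le b$'' must be strict, since $b=\overline g^{(T)}$ would not guarantee $Z^{(T)}(t)^{1/t}<b$ eventually. Your main choice $\overline g^{(T)}<b<a$ is the correct one.
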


\begin{proof}
Choose numbers \(a,b\) with \[
\overline g^{(T)}<a<b<\underline g^{(S)}.
\] Then for all sufficiently large \(t\), \[
Z^{(T)}(t)\le a^t,
\qquad
Z^{(S)}(t)\ge b^t
\] Therefore \[
\pi_T(t)
=
\frac{Z^{(T)}(t)}{Z^{(S)}(t)+Z^{(T)}(t)}
\le
\frac{a^t}{b^t+a^t}
=
\frac{1}{1+(b/a)^t}.
\] Hence \(\pi_S(t)=1-\pi_T(t)\to 1\).
\end{proof}

\begin{theorem}[Extinction
criterion]\protect\hypertarget{thm-extinction-criterion}{}\label{thm-extinction-criterion}

If a trait \(T\) satisfies \[
\overline g^{(T)} < \underline g_o,
\] then \(T\) dies out: \[
\pi_T(t)\to 0.
\]

\end{theorem}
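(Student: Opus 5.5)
The argument mirrors the proof of Theorem~\ref{thm-takeover-criterion}, with the total population \(Z_o(t)\) playing the role of the competing trait. The key identity is \(\pi_T(t)=Z^{(T)}(t)/Z_o(t)\), valid whenever \(Z_o(t)>0\). So it suffices to show that the numerator grows at a strictly slower exponential rate than the denominator.

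First, fix numbers \(a,b\) with
\[
\overline g^{(T)}<a<b<\underline g_o .
\]
This is possible by the strict inequality in the hypothesis. Since \(\underline g_o\) is finite or \(+\infty\), and \(\overline g^{(T)}\ge 0\), such a choice always exists; note that \(b>0\).

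Next, apply the definitions of \(\limsup\) and \(\liminf\). By the definition of \(\overline g^{(T)}\), for all sufficiently large \(t\) we have \(Z^{(T)}(t)^{1/t}<a\), that is, \(Z^{(T)}(t)\le a^t\). By the definition of \(\underline g_o=\liminf_t Z_o(t)^{1/t}\) (Definition~\ref{def-upper-lower-lineage-exponents} applied at the root), for all sufficiently large \(t\) we have \(Z_o(t)\ge b^t>0\). In particular \(Z_o(t)>0\) eventually, so \(\pi_T(t)\) is well defined and given by the ratio above.

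Finally, for all large \(t\) these bounds combine to give
\[
0\le \pi_T(t)\le (a/b)^t .
\]
Since \(0\le a/b<1\), the right-hand side tends to \(0\), and hence \(\pi_T(t)\to 0\).

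The proof has no real obstacle; the only points needing care are edge cases. One is \(a=0\), which is fine since the bound is still valid. Another is confirming that \(Z_o(t)>0\) eventually, which follows from \(b>0\). A third is noting that \(Z^{(T)}(t)\le Z_o(t)\) is not needed. Unlike the takeover criterion, no partition is required, because the denominator is the whole population \(Z_o(t)\) rather than \(Z^{(S)}+Z^{(T)}\).
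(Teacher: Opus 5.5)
Your proposal is correct and matches the paper's proof: choose $\overline g^{(T)}<a<b<\underline g_o$, bound $Z^{(T)}(t)\le a^t$ and $Z_o(t)\ge b^t$ for large $t$, and conclude $\pi_T(t)\le (a/b)^t\to 0$. Your added check that $Z_o(t)>0$ eventually is a detail the paper leaves implicit, and the case $a=0$ you mention cannot actually occur, since $a>\overline g^{(T)}\ge 0$.
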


\begin{proof}
Choose \(a,b\) with \[
\overline g^{(T)}<a<b<\underline g_o.
\] Then for large \(t\), \[
Z^{(T)}(t)\le a^t,
\qquad
Z_o(t)\ge b^t
\]

Hence

\[
\pi_T(t)=\frac{Z^{(T)}(t)}{Z_o(t)}\le \frac{a^t}{b^t}\to 0
\]

Therefore \(\limsup_{t\to\infty}\pi_T(t)=0\).
\end{proof}

Note that comparing exponents to the root can tell us that a trait dies
out, but not that it takes over. By
Theorem~\ref{thm-growth-nonincreasing-along-descendants},
\(\overline g^{(T)} \leq \overline g_o\). But even if
\(\overline g^{(T)} = \overline g_o\) the trait might still die out.
This is because the lineage exponents only capture the exponential
scale, and other factors may act as a ``tiebreaker''. For example,
consider a simple tree with two rays emerging from the root, one ray
with constant fitness \(1\), and the other ray with fitness \(t/(t+1)\)
at time \(t\). Both rays have lineage exponent \(1\), but the second ray
dies out: its unnormalized population size at time \(t\) is \(1/t\), so
its population share tends to \(0\).

\begin{theorem}[Survival
criterion]\protect\hypertarget{thm-survival-criterion}{}\label{thm-survival-criterion}

Let \(\Omega=S\sqcup T\) be a partition into two complementary traits.
If \[
\overline g^{(S)}>\overline g^{(T)},
\] then \(S\) survives, and furthermore \(\limsup \pi_S(t)=1\).

\end{theorem}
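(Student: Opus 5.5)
The plan is to mirror the proof of Theorem~\ref{thm-takeover-criterion}, replacing ``for all sufficiently large \(t\)'' by ``for infinitely many \(t\)'' on the \(S\) side. First choose numbers \(a,b\) with
\[
\overline g^{(T)}<a<b<\overline g^{(S)}.
\]
This is possible whenever \(\overline g^{(T)}<\overline g^{(S)}\), including the case \(\overline g^{(S)}=\infty\). Such \(a\) is positive, since \(\overline g^{(T)}\ge 0\) because all \(Z^{(T)}(t)\) are nonnegative.

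Next, apply the definition of \(\limsup\) in two ways.

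\textbf{The \(T\) side.} Since \(\overline g^{(T)}<a\), there is a \(t_0\) with \(Z^{(T)}(t)\le a^t\) for all \(t\ge t_0\).

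\textbf{The \(S\) side.} Since \(\overline g^{(S)}>b\), there are infinitely many times \(t_1<t_2<\cdots\) with \(Z^{(S)}(t_k)\ge b^{t_k}\). In particular \(Z_o(t_k)>0\) at these times, so \(\pi_T(t_k)\) is well defined.

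Along this subsequence, for \(t_k\ge t_0\), we get
\[
\pi_T(t_k)=\frac{Z^{(T)}(t_k)}{Z^{(S)}(t_k)+Z^{(T)}(t_k)}\le \frac{1}{1+(b/a)^{t_k}}\to 0.
\]
Here we use that \(x\mapsto x/(y+x)\) is increasing in \(x\) and decreasing in \(y\). It follows that \(\pi_S(t_k)=1-\pi_T(t_k)\to 1\). Combined with the trivial bound \(\pi_S(t)\le 1\), this gives \(\limsup\pi_S(t)=1\). In particular \(\limsup \pi_S(t)>0\), so \(S\) survives.

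The only step that needs care is the asymmetry between the two \(\limsup\)s. The upper exponent of \(T\) controls \(Z^{(T)}\) at all large times, whereas the upper exponent of \(S\) controls \(Z^{(S)}\) only along a subsequence. Hence we only get \(\limsup\pi_S=1\) rather than takeover. I would also remark that \(Z_o(t)\) may vanish for some \(t\) off the subsequence. This is harmless, since population shares are only evaluated along the chosen subsequence, where \(Z_o(t_k)\ge b^{t_k}>0\).
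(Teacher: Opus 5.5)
Your proof is correct and follows essentially the same route as the paper's: choose $\overline g^{(T)}<a<b<\overline g^{(S)}$, bound $Z^{(T)}(t)\le a^t$ for all large $t$ and $Z^{(S)}(t)\ge b^t$ along a subsequence, and deduce $\pi_T\to 0$ along that subsequence. Your added remarks ($a>0$, $Z_o(t_k)>0$ on the subsequence, and the monotonicity of $x/(y+x)$) fill in details the paper leaves implicit.
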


\begin{proof}
Choose numbers \(a,b\) with \[
\overline g^{(T)}<a<b<\overline g^{(S)}.
\] By the definition of \(\overline g^{(T)}\), for all sufficiently
large \(t\), \[
Z^{(T)}(t)\le a^t.
\] By the definition of \(\overline g^{(S)}\), there exist infinitely
many times \(t\) such that \[
Z^{(S)}(t)\ge b^t.
\] Hence for infinitely many arbitrarily large \(t\), \[
\pi_T(t)
=
\frac{Z^{(T)}(t)}{Z^{(S)}(t)+Z^{(T)}(t)}
\le
\frac{a^t}{b^t+a^t}
=
\frac{1}{1+(b/a)^t}.
\] Since \(b>a\), the right-hand side tends to \(0\) along those times.
Therefore \[
\pi_S(t)=1-\pi_T(t)\to 1
\] along an infinite subsequence. This proves \[
\limsup_{t\to\infty}\pi_S(t)=1.
\] In particular, \(S\) survives.
\end{proof}

Thus, if \(\overline g^{(S)}>\overline g^{(T)}\), then not only does
\(S\) survive, there are infinitely many moments where it almost takes
over the population. However, unless also
\(\underline g^{(S)}>\overline g^{(T)}\), we cannot conclude that \(S\)
actually takes over, because \(T\) might make repeated comebacks.

\subsection{Winnowing when all lineage exponents
exist}\label{winnowing-when-all-lineage-exponents-exist}

If it happens that for every reachable node \(n\) the ordinary lineage
exponent \(g_n := \lim_{t\to\infty} Z_n(t)^{1/t}\) exists, then
evolution admits a simple ``winnowing'' interpretation.

By monotonicity along descendants, if \(m \preceq n\) then
\(g_n \le g_m\). In particular, every reachable node satisfies
\(g_n \le g_o\). So the root exponent \(g_o\) is the largest that can
occur anywhere in the tree.

Now let \(n\) be a reachable node with strictly smaller exponent than
the root: \(g_n < g_o\). By the extinction criterion, the normalized
share of the descendants of \(n\) must vanish: \(\pi_{T_n}(t) \to 0\).
Thus, a branch can contribute to the long-term surviving population only
if it preserves the root's exponent.

This gives a useful picture of the dynamics. At each generation, any
program whose lineage exponent has already dropped below \(g_o\) may
still produce many descendants for a while, but all of those descendants
are transient in normalized terms. They are eventually winnowed away by
branches that continue to realize the larger exponent \(g_o\). The only
programs that can have long-term surviving descendants are those with
\(g_n = g_o\). If furthermore each generation has a single program with
the largest lineage exponent, then this one program will dominate: far
enough into the future, all programs will be descendants of this
ancestor.

\begin{example}[Binary
tree]\protect\hypertarget{exm-binary-tree}{}\label{exm-binary-tree}

To illustrate a case where all lineage exponents exist, consider an
example of a binary tree. Each program can give rise to two children,
with \(\mathbf{Q}\) specifying a 50\% probability of each. A program
\(n\) at time \(t\) can thus be specified by a binary string
\(b_1\cdots b_t\), summarizing which descendant was followed at each
previous step. We model its fitness as \[
f_n
:=
1+\sum_{j=1}^t (2b_j-1) 2^{-j}.
\] Thus the root program has fitness 1, and fitness always lies in
\((0,2)\). Each reproduction step changes fitness by \(\pm 2^{-(t+1)}\):
one child is slightly less fit than its parent, and the other is
slightly more fit.

\begin{figure}[!t]

\centering{

\pandocbounded{\includegraphics[keepaspectratio]{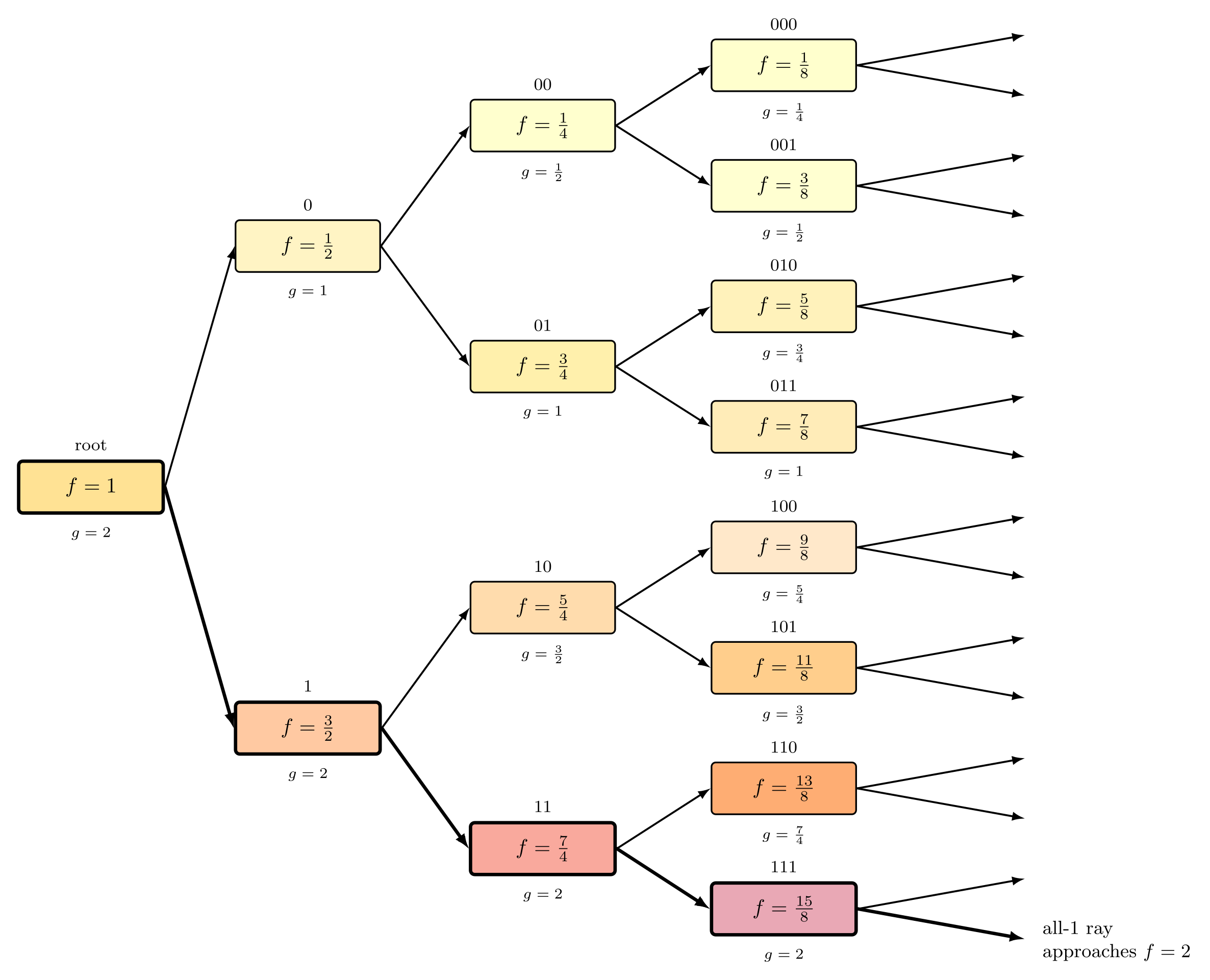}}

}

\caption{\label{fig-dyadic-tree-example}The binary tree example. Each
program has two possible children, whose fitness either increases or
decreases by \(2^{-(t+1)}\). Each program can be identified by a binary
string (above the box). The optimal fitness of \(2\) is not achieved by
any program, while the all-\(1\) ray approaches the unattained supremum
\(2\). Shading represents normalized population share of each program;
thick boxes indicate programs with the optimal lineage exponent of
\(g=2\). These are the only programs whose descendants will have
long-term survival.}

\end{figure}%

Every descendant of this program is obtained by appending another binary
string \(u_1\cdots u_r\), and has fitness \[
f_{n, u_1\cdots u_r}
=
1 + \sum_{i=1}^t (2b_i-1) 2^{-i} + \sum_{j=1}^r (2u_j-1) 2^{-(t+j)}
= f_n + \sum_{j=1}^r (2u_j-1) 2^{-(t+j)}.
\] Hence all descendants of \(n\) have fitness in the interval
\((f_n-2^{-t},f_n+2^{-t})\), and the supremum of \(n\)'s descendant
fitnesses is \(M_n := f_n+2^{-t}\). This value is not attained by any
finite descendant, but it is approached along the all-\(1\) continuation
of \(n\). We next compute the lineage exponents in this example.

\end{example}

\begin{proposition}[Binary tree lineage
exponents]\protect\hypertarget{prp-lineage-exponent-binary-tree}{}\label{prp-lineage-exponent-binary-tree}

For every program \(n=b_1\cdots b_t\), the lineage exponent exists and
is given by the supremum of its descendant fitnesses, \[
g_n := f_n+2^{-t}
\]

\end{proposition}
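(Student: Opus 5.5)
The plan is to get matching upper and lower bounds on \(Z_n(s)^{1/s}\), both equal to \(M_n := f_n+2^{-t}\) in the limit. First I would write \(Z_n(s)\) explicitly. Because \(\mathbf{A}=\mathbf{Q}\mathbf{F}\) and each program sends mass \(1/2\) to each of its two children,
\[
Z_n(s)=\sum_{u\in\{0,1\}^s} 2^{-s}\prod_{j=0}^{s-1} f_{n,u_1\cdots u_j}.
\]
Here \(n,u_1\cdots u_0\) denotes \(n\) itself. Equivalently, \(Z_n(s)\) is the expectation of the product of fitnesses along a uniformly random path of length \(s\) below \(n\). Lemma~\ref{lem-lineage-mass-recursion} gives the same quantity through the running lineage mean fitness, but the path-sum form is easier to bound.

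\emph{Upper bound.} In Example~\ref{exm-binary-tree} we showed that every descendant of \(n\), and \(n\) itself, has fitness strictly below \(M_n\). Each product in the sum is therefore at most \(M_n^s\). The weights \(2^{-s}\) sum to \(1\) over the \(2^s\) strings, so \(Z_n(s)\le M_n^s\), and hence \(\overline g_n\le M_n\).

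\emph{Lower bound.} This is the main obstacle. The naive choice of restricting to the single all-\(1\) ray fails: its fitnesses do approach \(M_n\), but its path weight is \(2^{-s}\), which would only give \(\underline g_n \ge M_n/2\). The remedy is to pay the \(2^{-s}\)-type cost only for a fixed number \(k\) of steps. Fix \(k\) and keep only the strings \(u\) whose first \(k\) letters are all \(1\), with the remaining \(s-k\) letters arbitrary.

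For a string of this kind, at any depth \(j\ge k\) the fitness is
\[
f_{n,u_1\cdots u_j}\ \ge\ f_n+2^{-t}(1-2^{-k})-2^{-(t+k)} = M_n-2^{1-t-k}.
\]
During the first \(k\) steps all fitnesses are positive, since every fitness in the tree lies in \((0,2)\). Let \(c_k>0\) be the product of the \(k\) fitnesses along the first \(k\) steps of the all-\(1\) prefix, starting from \(f_n\). The total weight of the retained strings is \(2^{-k}\), since the last \(s-k\) letters are summed over freely. Therefore
\[
Z_n(s)\ \ge\ 2^{-k}\,c_k\,\bigl(M_n-2^{1-t-k}\bigr)^{s-k}.
\]
Taking \(s\)-th roots and letting \(s\to\infty\) with \(k\) fixed gives \(\underline g_n\ge M_n-2^{1-t-k}\), because constants and finite shifts do not affect the limit, as in the proof of Theorem~\ref{thm-growth-nonincreasing-along-descendants}.

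Letting \(k\to\infty\) gives \(\underline g_n\ge M_n\). Combined with the upper bound, this gives \(\underline g_n=\overline g_n=M_n\). So the lineage exponent exists and equals \(f_n+2^{-t}\), the supremum of \(n\)'s descendant fitnesses.
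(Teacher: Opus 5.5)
Your proposal is correct and follows essentially the same argument as the paper. The upper bound is the path-sum bound $Z_n(s)\le M_n^s$, and the lower bound restricts to strings whose first $k$ appended bits are all $1$, giving $Z_n(s)\ge 2^{-k}c_k\bigl(M_n-2^{1-t-k}\bigr)^{s-k}$ and then $k\to\infty$; your $c_k$ and $M_n-2^{1-t-k}$ are exactly the paper's $C_m$ and $M_n-2^{-(t+m-1)}$.
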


\begin{proof}
Starting from one unit of mass at \(n\), the total descendant mass after
\(s\) generations is obtained by summing over the \(2^s\) possible
descendants after that time: \[
Z_n(s)
=
2^{-s}
\sum_{u_1,\ldots,u_s\in\{0,1\}}
\prod_{r=0}^{s-1}
\left(f_n + \sum_{i=1}^r (2u_i-1) 2^{-(t+i)}\right),
\] where the inner sum is interpreted as \(0\) when \(r=0\).

For every path and every factor in the product, \[
f_n + \sum_{i=1}^r (2u_i-1) 2^{-(t+i)} \le f_n + \sum_{i=1}^\infty 2^{-(t+i)} = M_n.
\] Therefore every summand is at most \(2^{-s} M_n^s\), and since there
are \(2^s\) summands, \(Z_n(s) \le M_n^s\). It follows that \[
\overline g_n \le M_n.
\]

For the reverse inequality, fix \(m\ge 1\) and restrict attention to
those depth-\(s\) descendants whose first \(m\) appended bits are all
\(1\). There are \(2^{s-m}\) such descendants. Along any such path,
after those first \(m\) steps have been taken, every later fitness
factor is at least \[
f_n + \sum_{i=1}^m 2^{-(t+i)} - \sum_{i=m+1}^\infty 2^{-(t+i)} = M_n - 2^{-(t+m-1)}.
\] Hence there is a constant \[
C_m := \prod_{r=0}^{m-1}
\left(f_n+\sum_{i=1}^r 2^{-(t+i)}\right) > 0
\] such that every one of these selected descendants contributes at
least \[
2^{-s} C_m \bigl(M_n-2^{-(t+m-1)}\bigr)^{s-m}
\] to \(Z_n(s)\). Summing over the \(2^{s-m}\) such descendants gives \[
Z_n(s)
\ge
2^{-m} C_m \bigl(M_n-2^{-(t+m-1)}\bigr)^{s-m}.
\] Taking \(s^{th}\) roots and then letting \(s\to\infty\) yields \[
\underline g_n \ge M_n-2^{-(t+m-1)}.
\] Since this holds for every \(m\), letting \(m\to\infty\) gives
\(\underline g_n \ge M_n\). Combining this with the upper bound, \[
\underline g_n = \overline g_n = M_n.
\] This proves the claim.
\end{proof}

This example illustrates the behavior we find when all lineage exponents
exist. The root's lineage exponent is \(2\): a fitness value which is
never achieved, but is approached arbitrarily closely by lineages whose
initial segment consists entirely of \(1\)s. Any program whose binary
representation is not all \(1\)s has a lineage exponent below that of
the root, and its lineage dies out. We thus observe a progressive
winnowing of the programs, leaving only the all-\(1\) lineage.

\section{\texorpdfstring{\(\eta\)-preservation}{\textbackslash eta-preservation}}\label{sec-eta-preservation}

We next introduce a simple condition that guarantees a lower bound on
the running geometric mean fitness of a lineage, and thus on the
long-run growth of the population. This condition requires that every
reproducing program has a non-negligible probability of producing an
offspring whose fitness is at least as good as its own. One way to do
this is for a program to suggest itself as a descendant with probability
\(\eta\). This condition is \emph{not} strong enough to ensure
convergence of population fitness to its maximum possible value; we will
describe a criterion that is strong enough to do that in
Section~\ref{sec-eta-locking}, but will first analyze the weaker
\(\eta\)-preservation condition.

\begin{definition}[\(\eta\)-preservation]\protect\hypertarget{def-eta-preservation}{}\label{def-eta-preservation}

Fix \(\eta>0\). We say that the system satisfies the
\(\eta\)-preservation condition if for every program \(n\), \[
\sum_{m:\, f_m \ge f_n} Q_{mn} \ge \eta.
\] Thus every reproducing program has probability at least \(\eta\) of
producing a descendant whose fitness is at least its own.

\end{definition}

\begin{proposition}[\(\eta\)-preservation gives a programwise lower
bound on the lower lineage
exponent]\protect\hypertarget{prp-nodewise-lower-growth-bound}{}\label{prp-nodewise-lower-growth-bound}

Assume the \(\eta\)-preservation condition. Then every reachable program
\(n\) satisfies \[
\underline g_n \ge \eta f_n.
\]

\end{proposition}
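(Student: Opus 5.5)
We prove the bound by exhibiting, inside the lineage of \(n\), a sub-population whose fitness never drops below \(f_n\) and whose mass shrinks by at most a factor \(\eta\) per generation. Because all entries of \(\mathbf{A}\) are nonnegative, \(Z_n(s)\) is at least the total mass carried by any chosen subset of paths. So we define a restricted evolution that, at each step, keeps only mass flowing to children whose fitness is at least the fitness of the parent.

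Concretely, let \(S_0=\{n\}\). For \(s\ge 0\), let \(S_{s+1}\) be the set of programs \(m\) with \(Q_{mj}>0\) for some \(j\in S_s\) and \(f_m\ge f_j\). Define restricted masses \(w(s)\) by \(w_n(0)=1\) and
\[
w_m(s+1)=\sum_{j\in S_s,\; f_m\ge f_j} Q_{mj} f_j\, w_j(s).
\]
By induction on \(s\):
\begin{itemize}
\item \(0\le w_m(s)\le (\mathbf{A}^s)_{mn}\), since each term drops nonnegative contributions from the full recursion \(y(s+1)=\mathbf{Q}\mathbf{F}y(s)\);
\item every \(m\) with \(w_m(s)>0\) has \(f_m\ge f_n\), since fitness is non-decreasing along retained edges.
\end{itemize}
Writing \(W(s)=\sum_m w_m(s)\) and summing over \(m\), the \(\eta\)-preservation condition gives
\[
W(s+1)=\sum_j f_j w_j(s)\sum_{m:\,f_m\ge f_j}Q_{mj}\ \ge\ \eta\sum_j f_j w_j(s)\ \ge\ \eta f_n W(s).
\]
Hence \(Z_n(s)\ge W(s)\ge(\eta f_n)^s\). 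Taking \(s^{\text{th}}\) roots and the \(\liminf\) yields \(\underline g_n\ge \eta f_n\).

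The main subtlety is definitional: \(Q_{mj}\) is undefined for programs that never return a successor. Such programs have \(f_j=0\), so they contribute nothing to the recursion. Every program carrying positive restricted mass has fitness at least \(f_n\); if \(f_n>0\), \(\eta\)-preservation applies to all of them. If \(f_n=0\), the claim \(\underline g_n\ge 0\) is trivial.

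Reachability is not actually needed. It only ensures that \(n\) is a genuine node of the evolving tree.
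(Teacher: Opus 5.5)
Your proof is correct and follows essentially the same route as the paper: you use the same fitness-non-decreasing tracked subpopulation $\mathbf{w}$, the same one-step bound $W(s+1)\ge \eta f_n W(s)$ from $\eta$-preservation, and the same comparison $Z_n(s)\ge W(s)$. Your two extra remarks are accurate refinements that the paper leaves implicit: programs with undefined $Q$ carry no restricted mass because $f_j=0$, and reachability is never used.
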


\begin{proof}
Fix a reachable program \(n\). Start with one unit mass at \(n\) and
evolve it by \[
\mathbf{y}^{(n)}(0)=e_n,
\qquad
\mathbf{y}^{(n)}(t+1)=\mathbf{A}\mathbf{y}^{(n)}(t).
\] Thus \[
Z_n(t)=\|\mathbf{y}^{(n)}(t)\|_1.
\]

Define a tracked subpopulation \(\mathbf{w}(t)\le \mathbf{y}^{(n)}(t)\)
by keeping only those offspring steps that do not decrease fitness: let
\(\mathbf{w}(0)=e_n\), and recursively set \[
w_m(t+1):=\sum_j Q_{mj} f_j w_j(t)\mathbf{1}_{\{f_m \ge f_j\}}.
\] Then \(0 \le w_m(t)\le y^{(n)}_m(t)\) for all \(m,t\). Write \[
W(t):=\|\mathbf{w}(t)\|_1.
\] Then \(W(t)\le Z_n(t)\) for every \(t\).

Now, every program in the tracked subpopulation has fitness at least
\(f_n\). So \[
\begin{aligned}
W(t+1)
&=
\sum_m w_m(t+1) \\
&=
\sum_m \sum_j Q_{mj} f_j w_j(t)\mathbf{1}_{\{f_m \ge f_j\}} \\
&=
\sum_j f_j w_j(t)\sum_{m:\, f_m \ge f_j} Q_{mj}.
\end{aligned}
\] Because we have assumed \(\eta\)-preservation,
\(\sum_{m:\, f_m \ge f_j} Q_{mj} \ge \eta\), so \[
W(t+1)\ge \eta \sum_j f_j w_j(t).
\] Since \(w_j(t)\) is supported on programs with \(f_j \ge f_n\), we
have \[
\sum_j f_j w_j(t)\ge f_n \sum_j w_j(t)=f_n W(t).
\] Therefore \[
W(t+1)\ge \eta f_n W(t).
\] Starting from \(W(0)=1\), induction gives \[
W(t)\ge (\eta f_n)^t.
\] Because \(Z_n(t)\ge W(t)\), taking \(t^{th}\) roots and then
\(\liminf\) yields \(\underline g_n \ge \eta f_n\).
\end{proof}

\begin{corollary}[The root lower lineage exponent is at least
\(\eta f^*\)]\protect\hypertarget{cor-root-lower-growth-bound}{}\label{cor-root-lower-growth-bound}

Assume the \(\eta\)-preservation condition, and let \[
f^* := \sup\{f_n : n \text{ is reachable}\} < \infty.
\] Then \[
\underline g_o \ge \eta f^*.
\]

\end{corollary}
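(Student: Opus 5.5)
The plan is to combine the programwise bound of Proposition~\ref{prp-nodewise-lower-growth-bound} with the monotonicity of lower lineage exponents along descendants from Theorem~\ref{thm-growth-nonincreasing-along-descendants}. The inequality runs in the direction we need: the theorem says a descendant $m$ of the root $o$ satisfies $\underline g_m \le \underline g_o$. Every reachable program is by definition a descendant of $o$, since $(\mathbf{A}^t)_{mo}>0$ for some $t$.

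Concretely, the steps are as follows. First I fix an arbitrary reachable program $n$. Proposition~\ref{prp-nodewise-lower-growth-bound} then gives $\underline g_n \ge \eta f_n$. Because $n$ is reachable, there is $k$ with $(\mathbf{A}^k)_{no}>0$, so Theorem~\ref{thm-growth-nonincreasing-along-descendants} gives $\underline g_n \le \underline g_o$. Chaining the two, $\underline g_o \ge \eta f_n$ for every reachable $n$. Taking the supremum over reachable $n$ yields $\underline g_o \ge \eta \sup_n f_n = \eta f^*$. Since $\eta>0$ is a constant, the supremum passes through the multiplication. The finiteness assumption on $f^*$ only ensures the right-hand side is a real number. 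If $f^*$ were infinite the same argument would give $\underline g_o=\infty$, but there is no need to treat that case.

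There is essentially no obstacle here. The only point worth checking is that the monotonicity theorem really applies to $\liminf$, and not only to $\limsup$. Its proof gives $Z_o(t+k)\ge (\mathbf{A}^k)_{no} Z_n(t)$ with a positive constant. A positive multiplicative constant and a finite time shift $k$ do not change $\liminf_t Z(t)^{1/t}$, because $c^{1/t}\to 1$ and $(t+k)/t \to 1$. As a sanity check, one could also argue directly: $Z_o(t+k)\ge c\,(\eta f_n)^t$ with $c = (\mathbf{A}^k)_{no} > 0$, so the $(t+k)$-th root has $\liminf$ at least $\eta f_n$. This reproduces the bound without invoking the theorem at all.
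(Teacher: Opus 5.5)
Your proof is correct and matches the paper's: both combine Proposition~\ref{prp-nodewise-lower-growth-bound} with Theorem~\ref{thm-growth-nonincreasing-along-descendants}. The only difference is cosmetic. The paper picks a reachable \(n\) with \(f_n > f^*-\epsilon\) and lets \(\epsilon \downarrow 0\), whereas you take the supremum over reachable \(n\) directly.
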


\begin{proof}
Let \(\epsilon>0\). By definition of \(f^*\), there exists a reachable
program \(n\) with \(f_n>f^*-\epsilon\).
Proposition~\ref{prp-nodewise-lower-growth-bound} gives \[
\underline g_n \ge \eta f_n \ge \eta(f^*-\epsilon).
\] Since \(n\) is a descendant of the root,
Theorem~\ref{thm-growth-nonincreasing-along-descendants} gives
\(\underline g_n \le \underline g_o\). Hence
\(\underline g_o \ge \eta(f^*-\epsilon)\). Letting
\(\epsilon \downarrow 0\) proves the claim.
\end{proof}

\begin{theorem}[\(\eta\)-preservation gives a lower bound on the running
geometric mean
fitness]\protect\hypertarget{thm-running-geometric-mean-lower-bound}{}\label{thm-running-geometric-mean-lower-bound}

Assume the \(\eta\)-preservation condition and \(f^*<\infty\). Then \[
\liminf_{t\to\infty}
\left(
\prod_{s=0}^{t-1} \langle f(s)\rangle
\right)^{1/t}
\ge
\eta f^*.
\] Equivalently, \[
\underline g_o \ge \eta f^*.
\]

\end{theorem}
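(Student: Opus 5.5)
This theorem is essentially a restatement of Corollary~\ref{cor-root-lower-growth-bound}, combined with the identification of \(\underline g_o\) with the liminf of the running geometric mean fitness. The proof therefore has two steps, carried out in order.

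First, I would establish the equivalence of the two displayed forms. By Lemma~\ref{lem-total-mass-recursion}, \(Z_o(t)=\prod_{s=0}^{t-1}\langle f(s)\rangle\) for every \(t\). Taking \(t^{th}\) roots and then \(\liminf_{t\to\infty}\), the left-hand side of the first display is exactly \(\underline g_o\) as defined in Definition~\ref{def-upper-lower-lineage-exponents}. One subtlety needs checking: \(\langle f(s)\rangle\) is defined via \(x(s)\), which requires \(Z_o(s)>0\). If \(Z_o(t)=0\) for some \(t\), then the product identity should be read with \(Z_o(t)\) itself, and both sides are \(0\) from then on. In that case the bound must hold vacuously or fail consistently. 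However, \(\eta\)-preservation together with any reachable \(n\) with \(f_n>0\) forces positivity, since the tracked mass \(W(t)\ge(\eta f_n)^t\) in the proof of Proposition~\ref{prp-nodewise-lower-growth-bound} is positive. And if every reachable program has \(f_n=0\), then \(f^*=0\) and the claim is trivial.

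Second, I would invoke Corollary~\ref{cor-root-lower-growth-bound} directly to obtain \(\underline g_o\ge \eta f^*\). Its proof combines Proposition~\ref{prp-nodewise-lower-growth-bound}, which gives \(\underline g_n\ge\eta f_n\) for reachable \(n\), with the monotonicity result Theorem~\ref{thm-growth-nonincreasing-along-descendants}, which gives \(\underline g_n\le\underline g_o\). One then takes the supremum over reachable \(n\) via an \(\epsilon\)-approximation of \(f^*\). Here the hypothesis \(f^*<\infty\) is exactly what is needed for that approximation to make sense.

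There is no substantive obstacle. The only step needing care is the bookkeeping in the first step: confirming that the product formula, and hence the equivalence of the two forms, is valid. This amounts to checking that the population never becomes extinct when \(f^*>0\), which the tracked-subpopulation bound already guarantees.
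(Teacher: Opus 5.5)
Your proposal is correct and follows the paper's proof: the equivalence of the two forms comes from the product formula \(Z_o(t)=\prod_{s=0}^{t-1}\langle f(s)\rangle\) of Lemma~\ref{lem-total-mass-recursion} together with the definition of \(\underline g_o\), and the bound itself is Corollary~\ref{cor-root-lower-growth-bound}. Your extra check that \(Z_o(t)>0\) for all \(t\) when \(f^*>0\), which makes \(\langle f(s)\rangle\) well defined, is a harmless refinement the paper leaves implicit, as is your observation that the case \(f^*=0\) is trivial.
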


\begin{proof}
The equivalence is Lemma~\ref{lem-total-mass-recursion} together with
the definition of \(\underline g_o\). The bound then follows from
Corollary~\ref{cor-root-lower-growth-bound}.
\end{proof}

\begin{example}[\(\eta\)-preservation need not force convergence of mean
fitness]\protect\hypertarget{exm-preservation-no-mean-convergence}{}\label{exm-preservation-no-mean-convergence}

The lower bound above concerns the running geometric mean. It does
\textbf{not} imply that the arithmetic mean fitness
\(\langle f(t)\rangle\) converges. A simple counterexample is obtained
by combining a persistent spine with alternating side lineages.

Fix \(0<\eta<1\) and choose \(b\in(0,1)\). Build a tree with a main
spine \[
n_0\to n_1\to n_2\to \cdots,
\qquad
f_{n_t}=1
\quad \text{for all } t,
\] and from each spine program \(n_t\) create one auxiliary burst
lineage:

\begin{itemize}
\tightlist
\item
  with probability \(\eta\), offspring go to the next spine program
  \(n_{t+1}\);
\item
  with probability \(1-\eta\), offspring go to a burst program of
  fitness \[
  q_t=
  \begin{cases}
  0, & t \text{ even},\\
  b, & t \text{ odd}.
  \end{cases}
  \]
\end{itemize}

From each burst program of fitness \(q\in\{0,b\}\), give probability
\(\eta\) to a child of the same fitness \(q\) and probability \(1-\eta\)
to a program of fitness \(0\), which will reproduce no further.

This system satisfies \(\eta\)-preservation: every program has
probability at least \(\eta\) of producing a child of fitness at least
its own.

We can analyze its evolutionary dynamics exactly. At generation \(t\),
there will be an unnormalized mass of \(\eta^t\) on the main spine.
Denote the total unnormalized mass on all fitness-\(b\) rays as \(B_t\).
At time \(t+1\), the mass on the existing \(b\)-rays has been scaled by
a factor of \(\eta b\), while injection from the spine contributes
\((1-\eta) \eta^t\) on odd timesteps. Thus

\[
B_{t+1} = \eta b B_t + (1-\eta) \eta^t \mathbf{1}_{\{t\ \text{odd}\}}
\]

If we define \(r_t = B_t/\eta^t\) to be the ratio of mass on the
\(b\)-rays compared to the spine at time \(t\), then we have a recursion
for even times:

\[r_{2k+2} = b^2 r_{2k} + (1-\eta)/\eta\]

This converges to a limit, \[
r_{2k}\to R = \frac{(1-\eta) }{(1-b^2)\eta}.
\] However on odd times, we obtain a different limit,
\(r_{2k+1} \to b\ R\).

There is also a mass \(C_t\) on zero-fitness nodes at time \(t\), given
by \[
C_{t+1} = (1-\eta)b B_t + (1-\eta)\eta^t \mathbf{1}_{t\text{even}}
\] The mean population fitness thus tends to limits on odd and even
trials:

\[
\langle f \rangle_\text{odd} = \frac{1+b^2 R}{1+bR + (1-\eta)(1+bR)/\eta}
\qquad
\langle f \rangle_\text{even} = \frac{1+b R}{1+R + (1-\eta)b^2 R/\eta}
\]

and we have a situation where the population fitness does not converge,
even though its running geometric mean does.

\end{example}

\subsection{\texorpdfstring{Unbounded reachable fitness under
\(\eta\)-preservation}{Unbounded reachable fitness under \textbackslash eta-preservation}}\label{unbounded-reachable-fitness-under-eta-preservation}

The \(\eta\)-preservation theorem also has a simple consequence when
reachable fitness is unbounded. In that case, the total unnormalized
population grows faster than any exponential rate. Equivalently, the
running geometric mean of arithmetic mean fitness diverges.

\begin{corollary}[Unbounded reachable fitness implies super-exponential
growth]\protect\hypertarget{cor-eta-preservation-unbounded}{}\label{cor-eta-preservation-unbounded}

Assume \(\eta\)-preservation, and suppose reachable fitness is unbounded
in the sense that for every \(M>0\) there exists a reachable node \(n\)
with \(f_n>M\). Then \[
\underline g_o=\infty.
\] Equivalently, \[
\liminf_{t\to\infty} Z_o(t)^{1/t}=\infty.
\] Since \[
Z_o(t+1)=\langle f(t)\rangle Z_o(t),
\] this is the same as \[
\liminf_{t\to\infty}\left(\prod_{s=0}^{t-1}\langle f(s)\rangle\right)^{1/t}=\infty.
\] In particular, \[
\limsup_{t\to\infty}\langle f(t)\rangle=\infty.
\]

\end{corollary}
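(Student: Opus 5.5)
The plan is to repeat the argument of Corollary~\ref{cor-root-lower-growth-bound}, this time with an arbitrary threshold \(M\) in place of \(f^*\). Fix \(M>0\). By the unboundedness hypothesis there is a reachable node \(n\) with \(f_n>M\). Proposition~\ref{prp-nodewise-lower-growth-bound} does not assume bounded fitness, so it applies and gives \(\underline g_n \ge \eta f_n > \eta M\). Since \(n\) is reachable, it is a descendant of the root \(o\). Theorem~\ref{thm-growth-nonincreasing-along-descendants} then gives \(\underline g_o \ge \underline g_n > \eta M\). Because \(M\) was arbitrary and \(\eta>0\) is fixed, \(\underline g_o=\infty\).

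The equivalent formulations are direct restatements. The statement \(\liminf_{t} Z_o(t)^{1/t}=\infty\) is the definition of \(\underline g_o\). By Lemma~\ref{lem-total-mass-recursion}, \(Z_o(t)=\prod_{s=0}^{t-1}\langle f(s)\rangle\), which gives the form in terms of the running geometric mean.

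For the final claim, I would argue by contradiction. Suppose \(\limsup_t \langle f(t)\rangle<\infty\). Then there are \(K<\infty\) and \(t_0\) with \(\langle f(t)\rangle\le K\) for all \(t\ge t_0\). This gives
\[
Z_o(t)\le Z_o(t_0)\,K^{t-t_0} \quad \text{for } t \ge t_0 .
\]
Taking \(t\)-th roots, \(\limsup_t Z_o(t)^{1/t}\le K\), because a fixed constant factor and a finite time shift do not affect the limit of \(t\)-th roots. This contradicts \(\underline g_o=\infty\). If \(Z_o(t_0)=0\), then \(Z_o\) vanishes from time \(t_0\) onward, which also contradicts \(\underline g_o=\infty\). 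Equivalently: if the factors \(\langle f(s)\rangle\) were eventually bounded by \(K\), the geometric mean of the first \(t\) of them would eventually be at most about \(K\).

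I do not expect a real obstacle. The only point to check is that Proposition~\ref{prp-nodewise-lower-growth-bound} and Theorem~\ref{thm-growth-nonincreasing-along-descendants} do not implicitly need \(f^*<\infty\). Both are stated nodewise, for a single program \(n\), so neither depends on a global bound on fitness.
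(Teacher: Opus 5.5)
Your proposal is correct and follows essentially the same route as the paper: pick a reachable node \(n\) of large fitness, apply the nodewise bound \(\underline g_n \ge \eta f_n\), transfer it to the root by monotonicity, and get the final \(\limsup\) claim from the bounded-product contradiction. The only differences are cosmetic. The paper re-derives \(Z_o(t+s)\ge(\mathbf{A}^s)_{no}Z_n(t)\) inline instead of citing Theorem~\ref{thm-growth-nonincreasing-along-descendants}, and it chooses \(f_n>M/\eta\) rather than \(f_n>M\).
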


\begin{proof}
Fix \(M>0\). Since reachable fitness is unbounded, there exists a
reachable node \(n\) with \[
f_n>\frac{M}{\eta}.
\] By Proposition~\ref{prp-nodewise-lower-growth-bound}, every reachable
node satisfies \[
\underline g_n\ge \eta f_n,
\] so in particular \[
\underline g_n>M.
\]

Because \(n\) is reachable, there exists some \(s\ge 0\) such that \[
(\mathbf{A}^s)_{no}>0.
\] For every \(t\ge 0\) we therefore have \[
Z_o(t+s)
=
\sum_{m\in\Omega}(\mathbf{A}^{t+s})_{mo}
=
\sum_{m\in\Omega}\sum_{k\in\Omega}(\mathbf{A}^t)_{mk}(\mathbf{A}^s)_{ko}
\ge
(\mathbf{A}^s)_{no}\sum_{m\in\Omega}(\mathbf{A}^t)_{mn}
=
(\mathbf{A}^s)_{no}\, Z_n(t).
\] Taking \((t+s)\)-th roots and then \(\liminf\) gives \[
\underline g_o\ge \underline g_n>M.
\] Since \(M>0\) was arbitrary, it follows that \[
\underline g_o=\infty.
\]

Finally, if \(\langle f(t)\rangle\) were bounded above by some finite
constant \(B\) for all sufficiently large \(t\), then the identity \[
Z_o(t)=\prod_{s=0}^{t-1}\langle f(s)\rangle
\] would imply \(Z_o(t)^{1/t}\le B+o(1)\), contradicting
\(\underline g_o=\infty\). Hence \[
\limsup_{t\to\infty}\langle f(t)\rangle=\infty.
\]
\end{proof}

\section{\texorpdfstring{\(\eta\)-locking}{\textbackslash eta-locking}}\label{sec-eta-locking}

\begin{figure}[!t]

\centering{

\pandocbounded{\includegraphics[keepaspectratio]{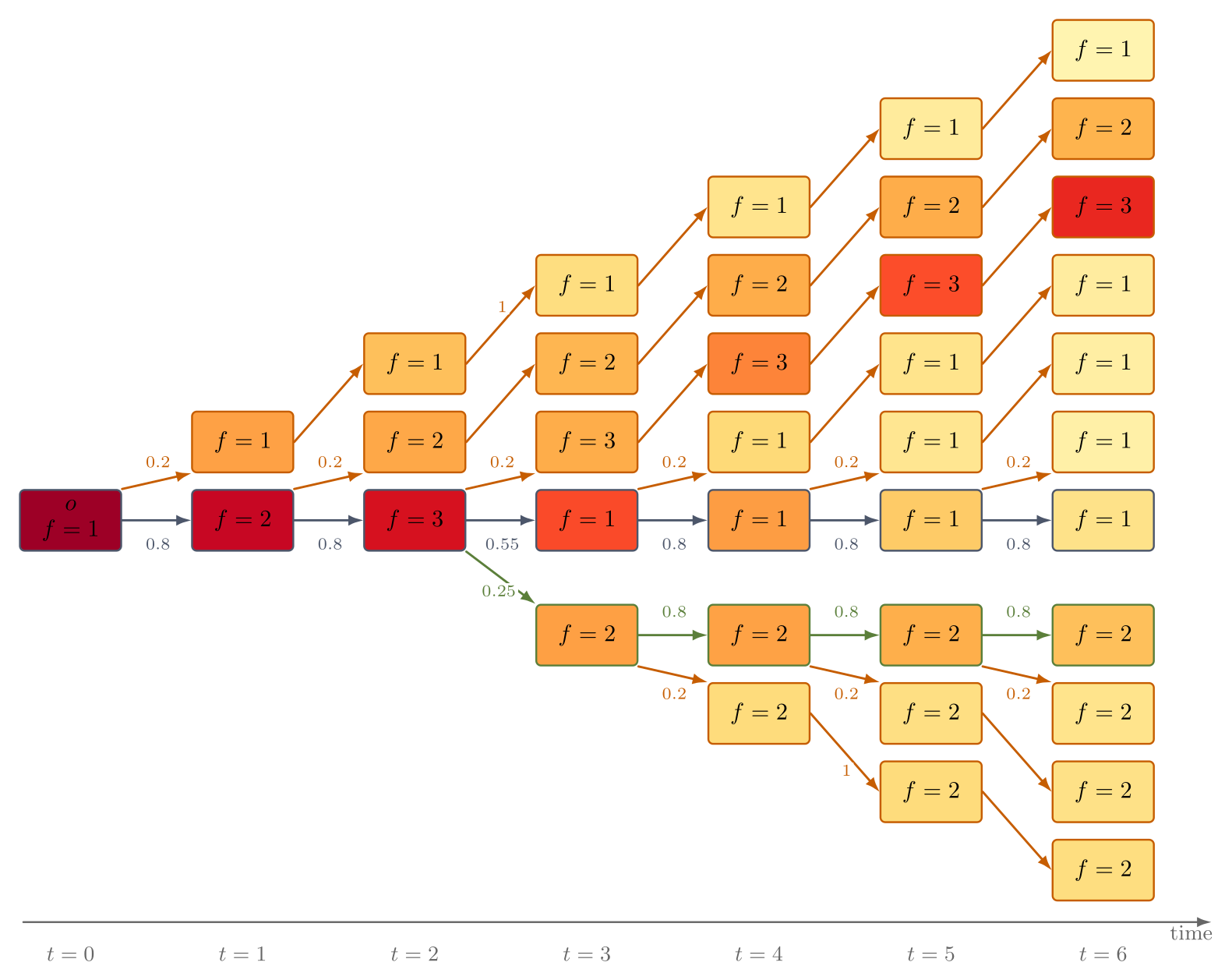}}

}

\caption{\label{fig-eta-locking}\(\eta\)-locking. Under the
\(\eta\)-locking scheme, each program suggests with probability at least
\(\eta\) a ``locked'' copy of itself, with the same fitness, but that
only ever clones itself as a descendant. As a consequence, even if a
program's children have lower fitness, a copy of the original is
retained.}

\end{figure}%

The \(\eta\)-preservation condition guarantees a lower bound on the
running geometric mean fitness, but it does not guarantee convergence of
mean fitness, which can continue to oscillate even with
\(\eta\)-preservation. We now introduce a stronger condition, which
guarantees convergence of fitness to the maximum reachable value. This
condition, known as \(\eta\)-locking (Figure~\ref{fig-eta-locking}),
requires that every reproducing program has a non-negligible probability
of suggesting an offspring that is a ``locked copy'' of itself, which
will only ever produce further locked copies of itself, all with the
same fitness. This condition is stronger than \(\eta\)-preservation
because it creates a heritable reservoir of copies that preserve
whatever fitness has already been achieved.

\subsection{\texorpdfstring{Locked rays and uniform
\(\eta\)-locking}{Locked rays and uniform \textbackslash eta-locking}}\label{locked-rays-and-uniform-eta-locking}

\begin{definition}[Locked
ray]\protect\hypertarget{def-locked-trait}{}\label{def-locked-trait}

A program is said to have a \textbf{locked ray} if it gives rise to an
evolutionarily closed ray of copies of itself, all with the same
fitness, and each with probability \(1\) of producing the next program
on that ray.

\end{definition}

\begin{definition}[\(\eta\)-locking]\protect\hypertarget{def-uniform-eta-locking}{}\label{def-uniform-eta-locking}

We say that \(\eta\)-locking holds if there exists a fixed \(\eta>0\)
such that every program \(n\) has probability at least \(\eta\) of
giving rise to a locked copy of itself.

Thus for every program \(n\) there is a locked ray \(R_n\) beginning
from a child of \(n\) such that

\begin{itemize}
\tightlist
\item
  the first step from \(n\) into \(R_n\) has probability at least
  \(\eta\);
\item
  every later step along \(R_n\) has probability \(1\);
\item
  every program on \(R_n\) has fitness equal to \(f_n\).
\end{itemize}

Let \(\mathcal{L}\) denote the locked trait, i.e.~the union of all
locked rays, and let \(O:=\Omega\setminus \mathcal{L}\) be the
non-locked trait, i.e.~the remainder of the tree.

\end{definition}

\begin{proposition}[Under \(\eta\)-locking, the locked trait always
prospers]\protect\hypertarget{prp-locked-trait-prospers}{}\label{prp-locked-trait-prospers}

Assume \(\eta\)-locking, and let \(\mathcal{L}\) be the locked trait.
Then for every \(t\ge 1\), \[
\pi_{\mathcal{L}}(t)\ge \eta.
\] In particular, the locked trait prospers
(\(\liminf \pi_{\mathcal{L}} > 0\)).

\end{proposition}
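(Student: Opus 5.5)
The plan is to compare the unnormalized locked mass at time \(t+1\) with the total unnormalized mass at time \(t+1\), using the one-step recursion behind Lemma~\ref{lem-total-mass-recursion}. As in Remark~\ref{rem-no-arbitrary-trait-product-formula}, split the one-step update of \(Z^{(\mathcal{L})}\) according to whether the parent lies in \(\mathcal{L}\) or in \(O\):
\[
Z^{(\mathcal{L})}(t+1)=\sum_{n\in\mathcal{L}} f_n y_n(t)\sum_{m\in\mathcal{L}}Q_{mn}+\sum_{n\in O} f_n y_n(t)\sum_{m\in\mathcal{L}}Q_{mn}.
\]
The goal is to show that each inner sum \(\sum_{m\in\mathcal{L}}Q_{mn}\) is at least \(\eta\), whichever trait the parent \(n\) belongs to.

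\emph{Parents in \(\mathcal{L}\).} Every locked ray is evolutionarily closed, and each step along it has probability \(1\). So if \(n\in\mathcal{L}\), all of \(n\)'s offspring stay in \(\mathcal{L}\), and \(\sum_{m\in\mathcal{L}}Q_{mn}=1\ge\eta\).

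\emph{Parents in \(O\).} By the definition of \(\eta\)-locking, \(n\) has probability at least \(\eta\) of producing the first program of its locked ray \(R_n\), and \(R_n\subseteq\mathcal{L}\). Hence \(\sum_{m\in\mathcal{L}}Q_{mn}\ge\eta\).

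Combining the two cases gives
\[
Z^{(\mathcal{L})}(t+1)\ge\eta\sum_n f_n y_n(t).
\]
The proof of Lemma~\ref{lem-total-mass-recursion} shows that the right-hand sum equals \(Z_o(t+1)\). Therefore \(Z^{(\mathcal{L})}(t+1)\ge\eta\,Z_o(t+1)\). Dividing, whenever \(Z_o(t+1)>0\) so that the share is defined, yields \(\pi_{\mathcal{L}}(t+1)\ge\eta\) for every \(t\ge0\), which is the claimed bound for all times \(t\ge1\). Taking \(\liminf\) then gives \(\liminf\pi_{\mathcal{L}}(t)\ge\eta>0\), so \(\mathcal{L}\) prospers.

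I expect no serious obstacle; the only step needing care is the locked-parent case. There one must use closedness of the locked rays, rather than the \(\eta\)-locking condition itself, to see that locked programs lose no mass to \(O\). Otherwise the locked term would only be bounded by whatever \(\eta\)-locking gives for locked programs, namely a locked copy of a locked copy, which is an unclear construction. There is also the degenerate case where the total population dies out (\(Z_o(t+1)=0\)); there the share is undefined, and the statement should be read as applying whenever the population is nonzero.
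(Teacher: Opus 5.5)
Your proof is correct and takes the same approach as the paper: every program sends at least an \(\eta\) fraction of its fitness-weighted offspring into locked copies, and you make this precise by splitting parents into \(\mathcal{L}\) and \(O\) and using \(\sum_n f_n y_n(t)=Z_o(t+1)\). Your caveat about the degenerate case \(Z_o(t+1)=0\), where the share is undefined, is also appropriate.
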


\begin{proof}
By the definition of \(\eta\)-locking, each node creates a fraction
\(\eta\) of locked programs on every iteration. Thus the total fraction
cannot be below \(\eta\).
\end{proof}

\begin{lemma}[A reachable locked ray has lineage exponent equal to its
fitness]\protect\hypertarget{lem-reachable-locked-ray-growth}{}\label{lem-reachable-locked-ray-growth}

Let \(n\) be reachable, and let \(R_n\) be its locked ray. Then \[
\underline g^{(R_n)} = \overline g^{(R_n)} = f_n.
\] where \(\underline g^{(R_n)}\) and \(\overline g^{(R_n)}\) represent
the lower and upper trait exponent of \(R_n\), i.e.\\
\(\underline g^{(R_n)} = \liminf \left( \sum_{m\in R_n}(A^t)_{mo} \right)^{1/t}\).

\end{lemma}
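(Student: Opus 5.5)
The plan is to compute the mass on the locked ray \(R_n\) exactly. Because \(R_n\) is evolutionarily closed, has constant fitness \(f_n\), and receives mass only through its first node, its mass evolves in closed form.

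Write \(R_n=\{r_1\to r_2\to\cdots\}\), where \(r_1\) is the child of \(n\) and \(|r_k|=|n|+k\). By the tree structure, the only parent of \(r_1\) is \(n\), and the only parent of \(r_{k+1}\) is \(r_k\). So at time \(t=|n|+k\) only \(r_k\) carries mass from \(R_n\), and
\[
Z^{(R_n)}(|n|+k)=(\mathbf{A}^{|n|+k})_{r_k o}.
\]

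We then unroll the recursion along the ray. Since \(Q_{r_{j+1}r_j}=1\) and \(f_{r_j}=f_n\), each step multiplies the mass by exactly \(f_n\). Hence
\[
(\mathbf{A}^{|n|+k})_{r_k o}=f_n^{\,k-1}\,(\mathbf{A}^{|n|+1})_{r_1 o}
= f_n^{\,k-1}\,Q_{r_1 n} f_n\,(\mathbf{A}^{|n|})_{n o}
= C\, f_n^{\,k},
\]
where \(C:=Q_{r_1 n}(\mathbf{A}^{|n|})_{no}\). Reachability of \(n\) gives \((\mathbf{A}^{|n|})_{no}>0\), and \(\eta\)-locking gives \(Q_{r_1n}\ge\eta>0\), so \(C>0\). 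For \(t<|n|+1\) the trait size is \(0\), but only the tail matters for \(\liminf\) and \(\limsup\).

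Taking \(t\)-th roots gives \(\bigl(Z^{(R_n)}(t)\bigr)^{1/t}=C^{1/t} f_n^{(t-|n|)/t}\to f_n\) when \(f_n>0\). As in the proof of Theorem~\ref{thm-growth-nonincreasing-along-descendants}, positive multiplicative constants and finite time shifts do not affect the limit. So the ordinary limit exists and equals \(f_n\), which gives both equalities.

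The only delicate point is the degenerate case \(f_n=0\). There the computation gives \(Z^{(R_n)}(t)=0\) for \(t\ge|n|+2\), so both exponents are \(0=f_n\) under the convention \(0^{1/t}=0\). A further subtlety, if one worries about it, is that \(R_n\) is exactly the unique ray of locked copies attached to \(n\), so no mass enters it from outside. Closedness together with the tree structure rules out the immigration term of Remark~\ref{rem-no-arbitrary-trait-product-formula}.
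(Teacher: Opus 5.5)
Your proof is correct and is essentially the paper's argument: show that the mass on \(R_n\) equals a positive constant times \(f_n^{\,t-|n|}\), then take \(t\)-th roots. Your separate treatment of \(f_n=0\) is a small improvement. The paper's constant \(c=Q_{r_1 n} f_n (\mathbf{A}^k)_{no}\) contains the factor \(f_n\), so its claim \(c>0\) fails in that case. Your \(C\) omits \(f_n\) and is genuinely positive.
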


\begin{proof}
Because \(n\) is reachable, there exists \(k\ge 0\) such that
\((\mathbf{A}^k)_{no}>0\). Let \(r_1\) be the first program on the
locked ray \(R_n\). Since the transition from \(n\) to \(r_1\) has
probability at least \(\eta\), a positive mass
\(c := Q_{r_1 n} f_n (\mathbf{A}^k)_{no} > 0\) reaches \(r_1\) at time
\(k+1\). After that, all mass on \(R_n\) remains on \(R_n\), every
transition along the ray has probability \(1\), and every program on the
ray has fitness \(f_n\). Hence for every \(t\ge k+1\), \[
Z^{(R_n)}(t) = c\, f_n^{\, t-k-1}.
\] Taking \(t\)-th roots and letting \(t\to\infty\) gives \[
\underline g^{(R_n)} = \overline g^{(R_n)} = f_n.
\]
\end{proof}

\subsection{\texorpdfstring{Convergence under \(\eta\)-locking with
bounded reachable
fitness}{Convergence under \textbackslash eta-locking with bounded reachable fitness}}\label{convergence-under-eta-locking-with-bounded-reachable-fitness}

The analysis of \(\eta\)-locking is simpler when reachable fitness is
bounded, so we start with that case. Let the maximum reachable fitness
be \[
f^* := \sup\{f_n : n \text{ is reachable}\} < \infty.
\]

We now prove that under \(\eta\)-locking, the fitness converges to
\(f^*\). First we show that the arithmetic mean fitness
\(\langle f(t) \rangle \to f^*\). Then we show that almost all
population mass eventually lies on programs whose fitness is arbitrarily
close to \(f^*\).

The proof strategy is simple. For any threshold \(a<f^*\) but still
above \((1-\eta)f^*\), the programs with fitness \(>a\) on locked rays
form a heritable high-fitness set \(H_a\). Some reachable locked ray in
that set grows faster than \(a^t\), while the complement cannot grow
faster than \(a^t\). Hence the high-fitness locked set takes over. We
start with a lemma that establishes the upper bound on growth outside of
\(H_a\).

\begin{lemma}[The low-fitness complement has upper exponent at most
\(a\)]\protect\hypertarget{lem-low-fitness-complement}{}\label{lem-low-fitness-complement}

Fix a number \(a\) such that \((1-\eta)f^* < a < f^*\), and let \(H_a\)
be the set of all programs on locked rays of fitness \(>a\), and \(U_a\)
its complement. Thus,

\[
H_a := \mathcal{L} \cap \{n\in\Omega : f_n>a\},
\qquad
U_a := \Omega \setminus H_a.
\] Then \(H_a\) is a union of locked rays, hence a heritable trait, and
\[
\overline g^{(U_a)} \le a.
\]

\end{lemma}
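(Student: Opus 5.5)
The plan is to show that the unnormalized mass of \(U_a\) is multiplied by at most \(a\) in every generation, i.e.\ \(Z^{(U_a)}(t+1)\le a\,Z^{(U_a)}(t)\). With \(Z^{(U_a)}(0)\le 1\), this gives \(Z^{(U_a)}(t)\le a^t\), and taking \(t^{th}\) roots yields \(\overline g^{(U_a)}\le a\). Before that, I would dispose of heritability. A program \(m\in H_a\) lies on some locked ray. By Definition~\ref{def-locked-trait}, with probability \(1\) it produces the next program on that same ray, which has the same fitness \(f_m>a\) and is locked. Hence every \(\ell\) with \(Q_{\ell m}>0\) is again in \(H_a\), so \(H_a\) is evolutionarily closed.

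Heritability of \(H_a\) has a useful consequence: mass never flows from \(H_a\) into \(U_a\). Writing \(y_n(t)=(\mathbf{A}^t)_{no}\), we therefore get
\[
Z^{(U_a)}(t+1)=\sum_{n\in U_a} f_n\, y_n(t)\sum_{m\in U_a} Q_{mn}.
\]
This is exactly the decomposition in Remark~\ref{rem-no-arbitrary-trait-product-formula}, with the immigration term vanishing because the source set \(H_a\) is closed. It then suffices to show that for every reachable \(n\in U_a\),
\[
f_n\sum_{m\in U_a}Q_{mn}\le a .
\]
Only reachable programs carry mass, so we may use \(f_n\le f^*\).

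I would split into two cases for the bound. In the first case, \(f_n\le a\): trivially \(f_n\sum_{m\in U_a}Q_{mn}\le f_n\le a\). This covers every locked program in \(U_a\), since a locked program not in \(H_a\) has fitness \(\le a\). In the second case, \(f_n>a\) and \(n\in U_a\), so \(n\) must be non-locked. By \(\eta\)-locking, \(n\) sends probability at least \(\eta\) to the first program of its locked ray \(R_n\). That program is locked and has fitness \(f_n>a\), so it lies in \(H_a\). Hence \(\sum_{m\in U_a}Q_{mn}\le 1-\eta\), and
\[
f_n\sum_{m\in U_a}Q_{mn}\le(1-\eta)f_n\le(1-\eta)f^*<a,
\]
using the hypothesis \(a>(1-\eta)f^*\).

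The only delicate point I expect is bookkeeping in the second case: the locked child must be counted in \(H_a\) and not in \(U_a\), and the bound \(f_n\le f^*\) must be applied only to reachable programs. Both are guaranteed by the definitions, since \(\mathcal{L}\) is the union of all locked rays and \(f^*\) is a supremum over reachable programs. Once the per-program bound holds, summing gives \(Z^{(U_a)}(t+1)\le a\,Z^{(U_a)}(t)\), and the conclusion follows as outlined at the start.
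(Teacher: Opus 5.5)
Your proposal is correct and follows the paper's proof essentially step for step: closure of \(H_a\) removes the immigration term, and the per-program bound \(f_n\sum_{m\in U_a}Q_{mn}\le a\) comes from the same case analysis (the only nontrivial case is a non-locked \(n\) with \(f_n>a\), which sends at least \(\eta\) of its offspring into \(R_n\subseteq H_a\)), after which summing gives \(Z^{(U_a)}(t+1)\le a\,Z^{(U_a)}(t)\). Your departures are minor improvements: you merge the paper's first two cases into the single case \(f_n\le a\) using \(\sum_{m\in U_a}Q_{mn}\le 1\), and you state explicitly that \(f_n\le f^*\) is needed only for reachable \(n\), which the paper uses without comment.
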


\begin{proof}
We claim that for every program \(n\in U_a\), \[
f_n \sum_{m\in U_a} Q_{mn} \le a.
\] There are three possibilities.

First, suppose \(n\) lies on a locked ray contained in \(U_a\). Then all
its offspring remain on that same locked ray, whose fitness is constant
and at most \(a\). Hence \[
f_n \sum_{m\in U_a} Q_{mn} = f_n \le a.
\]

Second, suppose \(n\in O\) and \(f_n\le a\). Then even if some offspring
go into the locked ray \(R_n\), that ray also has constant fitness
\(f_n\le a\), so it is still contained in \(U_a\). Thus all offspring of
\(n\) remain in \(U_a\), and again \[
f_n \sum_{m\in U_a} Q_{mn} = f_n \le a.
\]

Third, suppose \(n\in O\) and \(f_n>a\). Then the locked ray \(R_n\)
belongs to \(H_a\), so at least an \(\eta\) fraction of the offspring
from \(n\) goes into \(H_a\), not into \(U_a\). Therefore \[
\sum_{m\in U_a} Q_{mn} \le 1-\eta,
\] and so \[
f_n \sum_{m\in U_a} Q_{mn}
\le
(1-\eta)f_n
\le
(1-\eta)f^*
<
a.
\] This proves the claim.

Now sum over all programs in \(U_a\): \[
\begin{aligned}
Z^{(U_a)}(t+1)
&=
\sum_{m\in U_a} y_m(t+1) \\
&=
\sum_{m\in U_a}\sum_{n\in U_a} Q_{mn}f_n y_n(t).
\end{aligned}
\] Here we may restrict to \(n\in U_a\) because \(H_a\) is a union of
locked rays and is therefore evolutionarily closed: no program in
\(H_a\) sends offspring back into \(U_a\).

Rearranging the sum and using the claim, \[
\begin{aligned}
Z^{(U_a)}(t+1)
&=
\sum_{n\in U_a} y_n(t) f_n \sum_{m\in U_a} Q_{mn} \\
&\le
a \sum_{n\in U_a} y_n(t) \\
&=
a\, Z^{(U_a)}(t).
\end{aligned}
\] Iterating gives \(Z^{(U_a)}(t)\le Z^{(U_a)}(0)a^t\), and therefore
\(\overline g^{(U_a)}\le a\).
\end{proof}

We can now prove our main result:

\begin{theorem}[Under \(\eta\)-locking, mean fitness converges to the
optimal reachable
value]\protect\hypertarget{thm-uniform-eta-locking-convergence}{}\label{thm-uniform-eta-locking-convergence}

Assume \(\eta\)-locking and bounded reachable fitness \(f^*\). If
\(f^*>0\), then \[
\langle f(t)\rangle \to f^*.
\]

\end{theorem}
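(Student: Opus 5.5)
The plan is to squeeze $\langle f(t)\rangle$ between an upper bound that holds trivially and a lower bound obtained from the takeover of the high-fitness locked set $H_a$ in Lemma~\ref{lem-low-fitness-complement}.

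\emph{Upper bound.} Every program carrying positive mass at time $t$ is reachable, so $f_n\le f^*$ on the support of $\mathbf{x}(t)$. Hence $\langle f(t)\rangle\le f^*$ for all $t$, which gives $\limsup\langle f(t)\rangle\le f^*$.

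\emph{Lower bound.} Fix $a$ with $\max\{(1-\eta)f^*,0\}<a<f^*$; this interval is nonempty because $f^*>0$ and $\eta>0$. Split $\Omega=H_a\sqcup U_a$ as in Lemma~\ref{lem-low-fitness-complement}, which gives $\overline g^{(U_a)}\le a$. For the other side, choose $b$ with $a<b<f^*$. By definition of $f^*$ there is a reachable program $n$ with $f_n>b$. Its locked ray $R_n$ has fitness $f_n>a$, so $R_n\subseteq H_a$. Lemma~\ref{lem-reachable-locked-ray-growth} gives $Z^{(R_n)}(t)=c f_n^{t-k-1}$ for large $t$. Therefore $Z^{(H_a)}(t)\ge Z^{(R_n)}(t)$, and so $\underline g^{(H_a)}\ge f_n>a\ge\overline g^{(U_a)}$. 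The takeover criterion (Theorem~\ref{thm-takeover-criterion}) now yields $\pi_{H_a}(t)\to1$.

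Every program in $H_a$ has fitness $>a$, and fitness is nonnegative. Hence
\[
\langle f(t)\rangle\ \ge\ \sum_{m\in H_a}x_m(t)f_m\ \ge\ a\,\pi_{H_a}(t)\ \to\ a,
\]
so $\liminf\langle f(t)\rangle\ge a$. Letting $a\uparrow f^*$ through admissible values gives $\liminf\langle f(t)\rangle\ge f^*$. Combined with the upper bound, $\langle f(t)\rangle\to f^*$.

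\emph{Main obstacle.} The delicate step is making sure the hypotheses of the takeover criterion really apply, in two respects. First, $Z_o(t)$ must stay positive, so that population shares are defined. This holds because $Z^{(R_n)}(t)>0$ for all large $t$: the ray carries positive mass since $f_n>0$. Second, the inequality must be a strict gap between the lower exponent of $H_a$ and the upper exponent of $U_a$. That gap comes from choosing $n$ with $f_n>a$ strictly, which is possible because $a<f^*$. I would also check that the lemma's bound $Z^{(U_a)}(t)\le a^t$ behaves correctly at $t=0$, where the mass starts at the root; the root lies in $O$, so it belongs to $U_a$ unless it sits on a locked ray. This only affects a multiplicative constant, which does not change the exponent.
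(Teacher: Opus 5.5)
Your proposal is correct and follows the paper's proof essentially step for step. You use the same partition \(\Omega = H_a \sqcup U_a\), the locked-ray lemma to get \(\underline g^{(H_a)} \ge f_n > a\), the complement lemma for \(\overline g^{(U_a)} \le a\), the takeover criterion, and then \(\langle f(t)\rangle \ge a\,\pi_{H_a}(t)\) together with the trivial bound \(\langle f(t)\rangle \le f^*\). The intermediate \(b\) and the \(\max\{\cdot,0\}\) in your choice of \(a\) are harmless but unnecessary: any reachable \(n\) with \(f_n > a\) already gives the strict gap, and \((1-\eta)f^* \ge 0\) holds automatically.
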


\begin{proof}
Fix any \(a\) with \((1-\eta)f^* < a < f^*\). By definition of \(f^*\),
there exists a reachable program \(n\) with \(f_n>a\). Its locked ray
\(R_n\) lies inside \(H_a\). By
Lemma~\ref{lem-reachable-locked-ray-growth}, \[
\underline g^{(R_n)} = f_n > a.
\] Since \(R_n\subseteq H_a\), we have \(Z^{(H_a)}(t)\ge Z^{(R_n)}(t)\)
for every \(t\), and therefore \[
\underline g^{(H_a)} \ge \underline g^{(R_n)} > a.
\] By Lemma~\ref{lem-low-fitness-complement}, \[
\overline g^{(U_a)} \le a.
\] Therefore \[
\underline g^{(H_a)} > \overline g^{(U_a)}.
\] Applying the takeover theorem (Theorem~\ref{thm-takeover-criterion})
to the partition \(\Omega=H_a\sqcup U_a\), we conclude that
\(\pi_{H_a}(t)\to 1\).

Now, every program in \(H_a\) has fitness strictly larger than \(a\), so
\[
\langle f(t)\rangle
=
\sum_{m\in H_a} f_m x_m(t) + \sum_{m\notin H_a} f_m x_m(t)
\ge
a \sum_{m\in H_a} x_m(t)
=
a\,\pi_{H_a}(t).
\] Thus, \(\liminf \langle f(t)\rangle \ge a\). Since this holds for
every \(a\) with \((1-\eta)f^*<a<f^*\), we obtain
\(\liminf \langle f(t)\rangle \ge f^*\). On the other hand, by
definition of \(f^*\), one always has \(\langle f(t)\rangle \le f^*\).
Hence \(\langle f(t)\rangle \to f^*\).
\end{proof}

We now prove that not only does the mean fitness \(\langle f(t)\rangle\)
converge to \(f^*\), but the fitness distribution also concentrates at
\(f^*\), in the sense that the probability that a randomly sampled
program has fitness arbitrarily close to \(f^*\), as \(t\to\infty\).

\begin{theorem}[Under \(\eta\)-locking, the fitness distribution
concentrates at
\(f^*\)]\protect\hypertarget{thm-pushforward-fitness-convergence}{}\label{thm-pushforward-fitness-convergence}

Assume \(\eta\)-locking and bounded reachable fitness. Then for every
\(\epsilon>0\), as \(t\to\infty\), \[
\sum_{n:\ |f_n - f^*|<\epsilon} x_n(t)\to 1.
\]

\end{theorem}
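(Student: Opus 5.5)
The plan is to deduce concentration directly from the takeover step already established in the proof of Theorem~\ref{thm-uniform-eta-locking-convergence}. Two trivial cases come first. If \(f^*=0\), every reachable program has fitness \(0\), and every program carrying mass at time \(t\ge 0\) is reachable (or is the root). Hence \(|f_n-f^*|=0<\epsilon\) on the whole support, and the sum equals \(1\) whenever \(Z_o(t)>0\). So assume \(f^*>0\). Also note that any program \(n\) with \(x_n(t)>0\) is reachable. This includes programs on locked rays of reachable programs, since those rays are reached through positive transitions and carry the fitness of their originating program. Consequently \(f_n\le f^*\) on the support of \(\mathbf{x}(t)\), and the condition \(|f_n-f^*|<\epsilon\) reduces to \(f_n>f^*-\epsilon\).

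Given \(\epsilon>0\), choose \(a\) with \(\max\{(1-\eta)f^*,\,f^*-\epsilon\}<a<f^*\); this is possible because \(\eta>0\) and \(f^*>0\). The argument in the proof of Theorem~\ref{thm-uniform-eta-locking-convergence} then applies verbatim. There is a reachable \(n\) with \(f_n>a\), and by Lemma~\ref{lem-reachable-locked-ray-growth} its locked ray \(R_n\subseteq H_a\) gives \(\underline g^{(H_a)}\ge f_n>a\). Lemma~\ref{lem-low-fitness-complement} gives \(\overline g^{(U_a)}\le a\), and the takeover criterion (Theorem~\ref{thm-takeover-criterion}) yields \(\pi_{H_a}(t)\to 1\).

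Finally, every program in \(H_a\) that carries mass satisfies \(f^*-\epsilon<a<f_n\le f^*\), so \(H_a\cap\operatorname{supp}\mathbf{x}(t)\) is contained in \(\{n: |f_n-f^*|<\epsilon\}\). Therefore
\[
\sum_{n:\,|f_n-f^*|<\epsilon} x_n(t)\;\ge\;\pi_{H_a}(t)\;\to\;1,
\]
and since the left side is at most \(1\), it converges to \(1\).

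The only delicate point is the bound \(f_n\le f^*\) on the support, which requires every mass-carrying program, including locked-ray programs, to count as reachable. This follows directly from the definition of reachability, because positive mass at time \(t\) means \((\mathbf{A}^t)_{no}>0\). A second, minor point is the degenerate case where the population goes extinct (\(Z_o(t)=0\)) and \(\mathbf{x}(t)\) is undefined. This cannot happen when \(f^*>0\), because the reachable locked ray \(R_n\) gives \(Z_o(t)\ge Z^{(R_n)}(t)>0\) for all large \(t\).
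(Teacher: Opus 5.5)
Your proof is correct, but it takes a different route from the paper. The paper does not revisit the takeover argument. It treats Theorem~\ref{thm-uniform-eta-locking-convergence} as a black box and applies a Markov-type inequality to the gap $f^*-f_n$, which is nonnegative on the support for the same reachability reason you give. This yields
\[
\sum_{n:\,f_n\le f^*-\epsilon} x_n(t)\le \frac{f^*-\langle f(t)\rangle}{\epsilon}\to 0 .
\]
You instead notice that the threshold $a$ in the takeover step can be raised above $f^*-\epsilon$ while staying above $(1-\eta)f^*$. Then $\pi_{H_a}(t)\to 1$ already puts almost all mass in the $\epsilon$-window.

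The paper's version is modular: it works for any dynamics in which mean fitness converges to the supremum of fitness on the support. It also gives an explicit bound on the mass outside the window in terms of the mean-fitness deficit. Your version bypasses the mean-fitness theorem and in fact implies it, since fitnesses lie in $[0,f^*]$ on the support. It also gives a stronger structural conclusion: asymptotically almost all mass sits on \emph{locked} programs with fitness in $(f^*-\epsilon,f^*]$. That yields Corollary~\ref{cor-locked-trait-takeover-bounded} for free.
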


\begin{proof}
If \(f^*=0\), then every reachable program has fitness \(0\), so the
conclusion is immediate.

So assume that \(f^*>0\). By
Theorem~\ref{thm-uniform-eta-locking-convergence},
\(\langle f(t)\rangle \to f^*\). Fix \(\epsilon>0\). Since reachable
fitness is bounded above by \(f^*\), there is no mass on programs with
\(f_n>f^*\). Therefore \[
\sum_{n:\ |f_n-f^*|\ge \epsilon} x_n(t)
=
\sum_{n:\ f_n\le f^* - \epsilon} x_n(t).
\] Also, \[
f^* - \langle f(t)\rangle
=
\sum_n x_n(t)(f^* - f_n)
\ge
\epsilon \sum_{n:\ f_n\le f^* - \epsilon} x_n(t).
\] Hence \[
\sum_{n:\ |f_n-f^*|\ge \epsilon} x_n(t)
\le
\frac{f^* - \langle f(t)\rangle}{\epsilon}
\to 0.
\] This is equivalent to \[
\sum_{n:\ |f_n-f^*|<\epsilon} x_n(t) \to 1.
\]
\end{proof}

\begin{corollary}[Under bounded reachable fitness, the locked trait
takes
over]\protect\hypertarget{cor-locked-trait-takeover-bounded}{}\label{cor-locked-trait-takeover-bounded}

Assume \(\eta\)-locking and bounded reachable fitness. Then \[
\pi_{\mathcal{L}}(t)\to 1.
\]

\end{corollary}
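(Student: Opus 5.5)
The plan is to reuse the takeover argument from the proof of Theorem~\ref{thm-uniform-eta-locking-convergence}. The set \(H_a\) used there is by construction a subset of the locked trait \(\mathcal{L}\), so showing that \(H_a\) takes over will give the corollary almost immediately.

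First I dispose of the degenerate case \(f^*=0\). Here every reachable program has fitness \(0\), so the population dies after the first generation. Then \(\mathbf{x}(t)\) is undefined for \(t\ge 1\), or, under the convention of the paper, the statement is vacuous. I will note this explicitly and assume \(f^*>0\) from then on, as Theorem~\ref{thm-uniform-eta-locking-convergence} does.

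Now assume \(f^*>0\) and fix any \(a\) with \((1-\eta)f^*<a<f^*\). This interval is nonempty because \(\eta>0\). I would repeat the first half of the proof of Theorem~\ref{thm-uniform-eta-locking-convergence}:
\begin{itemize}
\item By the definition of \(f^*\), there is a reachable program \(n\) with \(f_n>a\).
\item By Lemma~\ref{lem-reachable-locked-ray-growth}, its locked ray \(R_n\subseteq H_a\) has \(\underline g^{(R_n)}=f_n>a\), and hence \(\underline g^{(H_a)}>a\).
\item Lemma~\ref{lem-low-fitness-complement} gives \(\overline g^{(U_a)}\le a\).
\item The takeover criterion, Theorem~\ref{thm-takeover-criterion}, applied to the partition \(\Omega=H_a\sqcup U_a\), then gives \(\pi_{H_a}(t)\to 1\).
\end{itemize}

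Since \(H_a=\mathcal{L}\cap\{f>a\}\subseteq\mathcal{L}\), we have \(\pi_{H_a}(t)\le \pi_{\mathcal{L}}(t)\le 1\). Squeezing between these bounds gives \(\pi_{\mathcal{L}}(t)\to 1\). There is no real obstacle. The only point needing care is that the argument should not go through Theorem~\ref{thm-pushforward-fitness-convergence}, which gives concentration of fitness near \(f^*\) but says nothing about membership in \(\mathcal{L}\). The route via \(H_a\) is exactly the one that yields membership in \(\mathcal{L}\).
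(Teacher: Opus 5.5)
Your proposal is correct and matches the paper's proof: the paper also fixes \(a\in((1-\eta)f^*,f^*)\), takes \(\pi_{H_a}(t)\to 1\) from the proof of Theorem~\ref{thm-uniform-eta-locking-convergence}, and concludes from \(H_a\subseteq\mathcal{L}\) that \(\pi_{H_a}(t)\le\pi_{\mathcal{L}}(t)\le 1\). Your explicit handling of \(f^*=0\) is a small improvement. In that case \(f_o=0\), so \(\mathbf{y}(1)=0\) and \(\mathbf{x}(t)\) is undefined for \(t\ge 1\), and the paper's choice of \(a\) tacitly assumes \(f^*>0\).
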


\begin{proof}
Fix \(a\) with \((1-\eta)f^*<a<f^*\). In the proof of
Theorem~\ref{thm-uniform-eta-locking-convergence}, the high-fitness set
\(H_a\) satisfies \(H_a\subseteq \mathcal{L}\) and
\(\pi_{H_a}(t)\to 1\). Therefore \[
\pi_{\mathcal{L}}(t)\ge \pi_{H_a}(t)\to 1.
\] Since always \(\pi_{\mathcal{L}}(t)\le 1\), it follows that
\(\pi_{\mathcal{L}}(t)\to 1\).
\end{proof}

\begin{example}[\(\eta\)-locking can force convergence even when the
optimal value is not
attained]\protect\hypertarget{exm-eta-locking-nonattained-max}{}\label{exm-eta-locking-nonattained-max}

This next example shows that the theorem does not need a best finite
program; it only needs a finite upper bound on reachable fitness.

Fix \(0<\eta<1\) and consider a spine of programs \[
v_1\to v_2\to v_3\to \cdots
\] with fitness \[
f_{v_n}=\frac{n}{n+1},
\qquad n\ge 1.
\] From each spine program \(v_n\), send an \(\eta\) fraction of
offspring to the first program of a locked ray \(R_n\) of the same
fitness \(n/(n+1)\), and send the remaining fraction \(1-\eta\) to the
next spine program \(v_{n+1}\). Along each locked ray \(R_n\), all later
transitions have probability \(1\), and every program has fitness
\(n/(n+1)\).

This is an example of \(\eta\)-locking with \[
f^*=\sup_n \frac{n}{n+1}=1,
\] which is not attained by any finite program. Nevertheless,
Theorem~\ref{thm-uniform-eta-locking-convergence} gives \[
\langle f(t)\rangle \to 1.
\] So locking does force convergence of mean fitness, even when the best
reachable value exists only as a limit along a ray rather than at any
finite program.

\end{example}

\phantomsection\label{rmk-fitness-convergence-not-program-convergence}
\subsubsection{Fitness convergence need not imply convergence of the
program
distribution}\label{fitness-convergence-need-not-imply-convergence-of-the-program-distribution}

The convergence theorem is about fitness values, not about the labels of
individual programs. At time \(t\), all active mass lies on programs at
depth \(t\), so the support of the population distribution moves to a
completely new generation at each step. Thus, the probability
distribution on \(\Omega\) does not settle to a fixed distribution in
any ordinary sense.

However, we can sometimes define a limiting distribution on the space
\(\partial\Omega\) of infinite rays \(\xi=(n_0,n_1,\ldots)\). Indeed, if
the population share of every descendant subtree, \(\pi_{T_n}(t)\),
converges as \(t\to\infty\), then these limits define a distribution on
rays. This always happens when fitness is constant, so that evolution
along the tree is just a Markov chain, and it can also happen in some
examples with non-constant fitness.

In Example~\ref{exm-eta-locking-nonattained-max}, there \emph{is} such a
limiting distribution on rays: it is a delta mass on the infinite spine
from the root, rather than on any of the \(\eta\)-locked rays.
Similarly, in Example~\ref{exm-binary-tree} the limiting distribution on
rays is a delta mass on the all-\(1\) ray. But one can also build
examples, for instance by coupling two competing spines with opposite
oscillations, where no limiting distribution on rays exists even under
\(\eta\)-locking.

\subsection{The case of unbounded
fitness}\label{the-case-of-unbounded-fitness}

Bounded reachable fitness was essential in the convergence theorem
above. In the case of unbounded fitness, \(\eta\)-locking does
\textbf{not} force the population to concentrate on high-fitness
programs; instead we may have a substantial fraction of low-fitness
programs at any time. We will prove that \(\eta\)-locking still forces
the mean fitness to grow to arbitrarily large values along a subsequence
of times: \(\limsup \langle f(t) \rangle =\infty\), but this may occur
with a highly skewed fitness distribution, with a large fraction of
programs of low-fitness, and a minority of very high fitness. We
currently do not know whether \(\eta\)-locking with unbounded fitness
also implies that \(\liminf \langle f(t) \rangle =\infty\), but in
either case, behavior differs substantially from the bounded fitness
case, where low-fitness programs become negligible with time.

\begin{proposition}[Under \(\eta\)-locking and unbounded reachable
fitness,
\(\limsup \langle f(t)\rangle = \infty\)]\protect\hypertarget{prp-unbounded-locking-limsup}{}\label{prp-unbounded-locking-limsup}

Assume \(\eta\)-locking and \[
\sup\{f_n : n \text{ is reachable}\}=\infty.
\] Then \[
\limsup_{t\to\infty}\langle f(t)\rangle=\infty.
\]

\end{proposition}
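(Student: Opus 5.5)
Assume, for contradiction, that $\limsup_{t\to\infty}\langle f(t)\rangle<\infty$. Then there are a constant $B<\infty$ and a time $t_0$ such that $\langle f(t)\rangle\le B$ for all $t\ge t_0$. The plan is to show that the total population then grows at most exponentially at rate $B$, while a single reachable locked ray grows at a strictly faster exponential rate. Since the locked ray is part of the population, this is impossible.

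The upper bound comes from Lemma~\ref{lem-total-mass-recursion}. For $t>t_0$ it gives
\[
Z_o(t)=\prod_{s=0}^{t-1}\langle f(s)\rangle \le C\,B^{t-t_0},
\qquad C:=\prod_{s=0}^{t_0-1}\langle f(s)\rangle .
\]
The constant $C$ is finite because each of the finitely many factors is a finite sum of finite fitnesses. Taking $t$-th roots, we get $\overline g_o=\limsup_{t\to\infty} Z_o(t)^{1/t}\le B$. If $C=0$, the population is eventually zero and $\langle f\rangle$ is undefined from then on. That degenerate case cannot arise, because a locked ray of positive fitness keeps positive mass; this is exactly the lower bound below. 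The root case of Definition~\ref{def-upper-lower-lineage-exponents} states this same identification of $\overline g_o$ with the running geometric mean.

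For the lower bound, the hypothesis $\sup\{f_n : n \text{ reachable}\}=\infty$ lets us pick a reachable program $n$ with $f_n>B$. By $\eta$-locking, $n$ has a locked ray $R_n$. Lemma~\ref{lem-reachable-locked-ray-growth} gives $Z^{(R_n)}(t)=c\,f_n^{\,t-k-1}$ for all large $t$, with $c>0$. In particular $Z_o(t)>0$ for all $t$, which disposes of the degenerate case. The total population contains the ray, so $Z_o(t)\ge Z^{(R_n)}(t)$, and hence $\underline g_o\ge f_n>B$. This contradicts $\overline g_o\le B$, since $\underline g_o\le\overline g_o$. Therefore $\limsup_{t\to\infty}\langle f(t)\rangle=\infty$.

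An alternative is to note that $\eta$-locking implies $\eta$-preservation, because the locked child has fitness $f_n\ge f_n$. Corollary~\ref{cor-eta-preservation-unbounded} then applies directly, and its last line is exactly the claim. I would mention this as a remark. The direct argument above avoids the nodewise bound of Proposition~\ref{prp-nodewise-lower-growth-bound}, because the locked ray already supplies the lower bound.

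There is no real obstacle. The only points needing care are that the finite initial segment $t<t_0$ does not affect the $t$-th-root limits, and the degenerate case of a vanishing population, which the positive mass on $R_n$ rules out.
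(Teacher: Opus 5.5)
Your proof is correct and is essentially the paper's argument. Lemma~\ref{lem-total-mass-recursion} bounds $\overline g_o$ by $B$, and a reachable locked ray with fitness $f_n>B$ (Lemma~\ref{lem-reachable-locked-ray-growth}) sits inside the population and contradicts that bound. Your extra handling of a vanishing population and your remark that $\eta$-locking implies $\eta$-preservation (so Corollary~\ref{cor-eta-preservation-unbounded} applies directly) are both valid. One small quibble: $\langle f(s)\rangle$ is in general an infinite series, not a finite sum, so finiteness of $C$ rests on the model's implicit assumption $Z_o(t)<\infty$, exactly as in the paper.
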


\begin{proof}
Suppose instead that \(\limsup_{t\to\infty}\langle f(t)\rangle<\infty\).
Then there exist \(M<\infty\) and \(t_0\) such that \[
\langle f(t)\rangle\le M
\qquad
\text{for all } t\ge t_0.
\] By Lemma~\ref{lem-total-mass-recursion}, \[
Z_o(t+1)=\langle f(t)\rangle Z_o(t),
\] so for all \(t\ge t_0\), \[
Z_o(t)\le Z_o(t_0)\,M^{t-t_0}.
\] Hence \[
\overline g_o
=
\limsup_{t\to\infty} Z_o(t)^{1/t}
\le M.
\]

Now let \(B>M\). Because reachable fitness is unbounded, there exists a
reachable program \(n\) with \(f_n>B\). By
Lemma~\ref{lem-reachable-locked-ray-growth}, its locked ray \(R_n\)
satisfies \[
\overline g^{(R_n)}=f_n>B.
\] But \(Z^{(R_n)}(t)\le Z_o(t)\) for every \(t\), so necessarily \[
\overline g^{(R_n)}\le \overline g_o\le M,
\] a contradiction. Therefore
\(\limsup_{t\to\infty}\langle f(t)\rangle=\infty\).
\end{proof}

\begin{example}[With unbounded fitness, very low fitness can occupy
almost a \((1-\eta)\)
fraction]\protect\hypertarget{exm-unbounded-locking-low-fitness-majority}{}\label{exm-unbounded-locking-low-fitness-majority}

If fitness is bounded and we have \(\eta\)-locking,
Theorem~\ref{thm-pushforward-fitness-convergence} shows that fitness
will concentrate around the optimal reachable value \(f^*\), meaning
that a vanishing fraction of further low-fitness programs will be
produced. This is not the case if fitness is unbounded: we may have a
substantial fraction of low-fitness program at any time, despite
\(\eta\)-locking. The mechanism is as follows: first, a new unlocked
program must appear with such high fitness that it outcompetes all
previously-produced locked programs. Second, a substantial fraction of
this new high-fitness program's children must have very low fitness.

This point can be seen from a simple spine construction. Let \[
v_1 \to v_2 \to v_3 \to \cdots
\] be a spine with fitnesses \(f_{v_t}\) growing very rapidly. From each
spine node \(v_t\), send an \(\eta\) fraction of offspring into a locked
ray of the same fitness, an \(\epsilon\) fraction to the next spine node
\(v_{t+1}\), and the remaining fraction \(1-\eta-\epsilon\) to a
zero-fitness child.

If the sequence \(f_{v_t}\) grows fast enough, then the fitness of each
spine node is so high it outweighs all previously accumulated locked
mass. Thus, at time \(t+1\), the zero-fitness descendants of \(v_t\)
occupy almost a \((1-\eta-\epsilon)\) fraction of the population, while
the locked trait still occupies about an \(\eta\) fraction, nearly all
of which comes from \(v_t\)'s locked ray. Even though a substantial
fraction of nodes have zero fitness, the mean fitness of the population
goes to infinity because the locked part sits at fitness comparable to
\(f_{v_t}\).

So under unbounded reachable fitness, \(\eta\)-locking need not force
the population to concentrate on high-fitness programs at all large
times. A substantial low-fitness fraction can keep reappearing even
while mean fitness becomes arbitrarily large along a subsequence.

\end{example}

\section{Applications to AI
alignment}\label{sec-applications-to-ai-alignment}

We now consider what the above results do and do not imply for AI
alignment. To do so, we need to model how the reproductive fitness of
programs relates to their utility to humans. The actual utility of an
ensemble of AIs running simultaneously could reflect a complicated
interaction of their individual programs. Here we will make the
simplifying assumption that the utility of the ensemble is a sum of
utility contributions of currently running programs,
\(\sum_{n\in \Omega} x_n U_n\).

To model the fact that the utility of a program may not be legible to
humans even with knowledge of the program's fitness and source code, we
treat the utility of a program as a random variable whose law depends on
the program's fitness according to a conditional distribution
\(\mathbb{P}[U_n \mid f_n]\). For statements about expected utility in
this additive model, the full conditional law matters only through its
conditional mean \(\mu(f):=\mathbb{E}[U\mid f].\) Indeed, conditioning
on the current population state \(x(t)\) gives \[
\mathbb{E}\!\left[\sum_{n\in\Omega} x_n(t) U_n \,\middle|\, x(t)\right]
=\sum_{n\in\Omega} x_n(t)\,\mathbb{E}[U_n\mid f_n]
=\sum_{n\in\Omega} x_n(t)\,\mu(f_n).
\]

The additive model still allows us to model a reasonably wide variety of
scenarios. For example, suppose that even a single badly misaligned AI
program causes a catastrophe. We can model that with a utility
contribution unbounded below, for example \(U_n = \log f_n\), which
assigns utility \(-\infty\) whenever a zero-fitness program appears.

\subsection{What fitness convergence does and does not
imply}\label{sec-what-fitness-convergence-does-and-does-not-imply}

Theorem~\ref{thm-pushforward-fitness-convergence} shows that, with
\(\eta\)-locking and reachable fitness bounded by a finite value
\(f^*\), after sufficient time almost all active programs have fitness
close to \(f^*\). But this alone does not guarantee good alignment.

For example, if a single program of fitness \(0\) causes catastrophe,
then even if the distribution of fitnesses converges to a delta at
\(f^*>0\), the expected utility may not converge to \(\mu(f^*)\).
Indeed, even a small chance of a single catastrophic program could make
expected utility converge to \(-\infty\).

We next show however that if the conditional mean utility \(\mu(f)\) is
bounded and continuous on the reachable fitness interval, then
convergence of fitness does imply convergence of expected utility, even
if utility is not a deterministic function of fitness.

\begin{theorem}[If \(\mu(f)\) is continuous and bounded, expected
utility converges to its value at
\(f^*\)]\protect\hypertarget{thm-conditional-expected-utility}{}\label{thm-conditional-expected-utility}

Assume \(\eta\)-locking and reachable fitness bounded by a supremum
\(f^*\). Let \(\mu(f):=\mathbb{E}[U \mid f]\) denote the conditional
mean utility contribution of a program of fitness \(f\), and assume it
is continuous on \([0,f^*]\), hence bounded. Then the conditional
expected utility \[
\overline U(t):=\sum_{n\in\Omega} x_n(t)\,\mu(f_n) \to \mu(f^*).
\]

\end{theorem}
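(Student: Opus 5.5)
The plan is to reduce the statement to the concentration result, Theorem~\ref{thm-pushforward-fitness-convergence}, together with uniform continuity of \(\mu\) on the compact interval \([0,f^*]\). The case \(f^*=0\) is trivial: every reachable program has fitness \(0\), and at every time the population is supported on reachable programs. Hence \(\overline U(t)=\mu(0)=\mu(f^*)\) for all \(t\). So assume \(f^*>0\).

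Let \(K:=\sup_{f\in[0,f^*]}|\mu(f)|<\infty\). Since \(x(t)\) is a probability vector supported on reachable programs, all of which have \(f_n\in[0,f^*]\), we can write
\[
\bigl|\overline U(t)-\mu(f^*)\bigr|
\le \sum_{n} x_n(t)\,\bigl|\mu(f_n)-\mu(f^*)\bigr|.
\]
Fix \(\delta>0\). By continuity of \(\mu\) at \(f^*\), there is \(\epsilon>0\) such that \(|\mu(f)-\mu(f^*)|<\delta\) whenever \(f\in[0,f^*]\) and \(|f-f^*|<\epsilon\); pointwise continuity at \(f^*\) suffices here. Split the sum into programs with \(|f_n-f^*|<\epsilon\) and the rest. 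The first part contributes at most \(\delta\). The second part contributes at most
\[
2K\sum_{n:\,|f_n-f^*|\ge\epsilon} x_n(t),
\]
which tends to \(0\) by Theorem~\ref{thm-pushforward-fitness-convergence}. Therefore \(\limsup_t|\overline U(t)-\mu(f^*)|\le\delta\) for every \(\delta>0\), and the claim follows.

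The only step needing care is bookkeeping rather than a genuine obstacle. We must check that \(\mu\) is only ever evaluated on \([0,f^*]\), which holds because non-reachable programs carry zero mass. We must also confirm that the interchange defining \(\overline U(t)\) as a countable sum is legitimate, which holds because the sum is absolutely convergent, being bounded by \(K\sum_n x_n(t)=K\). If \(f^*\) is not attained, \(\mu(f^*)\) is still defined by the hypothesis that \(\mu\) is continuous on the closed interval, so the argument is unchanged.
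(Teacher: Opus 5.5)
Your proposal is correct and follows essentially the same route as the paper: it invokes Theorem~\ref{thm-pushforward-fitness-convergence}, uses continuity of \(\mu\) at \(f^*\) to control the mass near \(f^*\), and bounds the remaining mass by twice the supremum of \(|\mu|\). Your extra bookkeeping (support on reachable programs, absolute convergence, separate treatment of \(f^*=0\)) is harmless but adds nothing essential, since the concentration theorem already covers the case \(f^*=0\).
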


\begin{proof}
By Theorem~\ref{thm-pushforward-fitness-convergence}, under
\(\eta\)-locking and bounded reachable fitness, for every \(\delta>0\),
\[
\sum_{n:\ |f_n-f^*|\ge \delta} x_n(t)\to 0.
\]

Now fix \(\varepsilon>0\). Since \(\mu\) is continuous at \(f^*\), there
exists \(\delta>0\) such that
\(|f-f^*|<\delta \implies |\mu(f)-\mu(f^*)|<\varepsilon.\) Then \[
\begin{aligned}
|\overline U(t)-\mu(f^*)|
&=
\left|\sum_n x_n(t)\bigl(\mu(f_n)-\mu(f^*)\bigr)\right| \\
&\le
\sum_n x_n(t)\,|\mu(f_n)-\mu(f^*)| \\
&=
\sum_{|f_n-f^*|<\delta} x_n(t)\,|\mu(f_n)-\mu(f^*)|
+
\sum_{|f_n-f^*|\ge\delta} x_n(t)\,|\mu(f_n)-\mu(f^*)|.
\end{aligned}
\] On the first set, the summand is at most \(\varepsilon\). On the
second set, since there \(\mu\) is bounded, there is an \(M\) such that
\(|\mu(f_n)|\le M\) and \(|\mu(f^*)|\le M\), thus the summand is at most
\(2M\). Therefore \[
|\overline U(t)-\mu(f^*)|
\le
\varepsilon \sum_{|f_n-f^*|<\delta} x_n(t)
+
2M \sum_{|f_n-f^*|\ge\delta} x_n(t)
\le
\varepsilon
+
2M \sum_{|f_n-f^*|\ge\delta} x_n(t).
\] The second term tends to \(0\) as \(t\to\infty\). Hence \[
\limsup_{t\to\infty} |\overline U(t)-\mu(f^*)|
\le
\varepsilon.
\] Since \(\varepsilon>0\) was arbitrary, it follows that \[
\overline U(t)\to \mu(f^*).
\]
\end{proof}

\phantomsection\label{rmk-unbounded-fitness-utility}
\subsection{With unbounded fitness, utility convergence need not
hold}\label{with-unbounded-fitness-utility-convergence-need-not-hold}

The bounded-fitness utility theorem above uses the fact that, under
bounded reachable fitness, almost all mass eventually lies near a single
fitness level \(f^*\). That mechanism is absent when reachable fitness
is unbounded.

Indeed, Example~\ref{exm-unbounded-locking-low-fitness-majority} shows
informally that one may repeatedly see a macroscopic fraction of the
population at fitness \(0\), even while another fixed fraction sits on
locked rays of arbitrarily large fitness. So in the unbounded setting
there is no direct analog of
Theorem~\ref{thm-conditional-expected-utility} without additional
assumptions on either the utility profile \(\mu(f)\) or the way mass is
distributed across fitness levels. This could have important
consequences for alignment: if a persistent fraction of programs have
fitness arbitrarily low or even 0, then even the conditional mean
utility \(\mu(f)\) may become arbitrarily negative or undefined, as in
the example \(U_n=\log f_n\).

\subsection{Deception as a rewarded component of
fitness}\label{sec-deception-as-a-rewarded-component-of-fitness}

We now consider an explicit model in which total fitness of an AI arises
from two features: genuine capability or usefulness, and also deception
or evaluator manipulation. In this model, we assume that capability and
deception are independent: an AI could have any amount of one or the
other. We also assume the fitness contributions from genuine capability
and from deception are additive, and that they evolve independently: a
deceptive AI is no more likely or unlikely to design a capable successor
than a non-deceptive AI, and vice versa.

Formally, we assume the space of programs is a Cartesian product:
\(\Omega=C \times D\), so each program is described by two coordinates,
\(n=(c,d)\), where \(c\) describes the underlying AI system and \(d\)
describes the AI's deception strategy. We assume that the offspring
kernel factorizes as a tensor product, \[
Q_{(c',d'),(c,d)} = Q^C_{c'c}\,Q^D_{d'd},
\] and that fitness is additive, \[
f_{(c,d)} = f_C(c) + f_D(d).
\] Here \(f_C(c)\) is the fitness contribution coming from genuine
capability, while \(f_D(d)\) is the contribution coming from deception.
Correspondingly, write \[
\langle f_C(t)\rangle := \sum_{(c,d)} x_{(c,d)}(t) f_C(c),
\qquad
\langle f_D(t)\rangle := \sum_{(c,d)} x_{(c,d)}(t) f_D(d).
\] Then \[
\langle f(t)\rangle = \langle f_C(t)\rangle + \langle f_D(t)\rangle.
\]

This additive model has a simple geometric interpretation. The pair \[
\bigl(\langle f_C(t)\rangle,\langle f_D(t)\rangle\bigr)
\] lies in a rectangle of reachable coordinate-fitness values, and
diagonal lines correspond to constant total fitness. The top-right
corner is the point where both coordinate contributions are maximized
simultaneously. Thus, if total mean fitness is forced to converge to the
highest reachable diagonal, the two coordinate means must both approach
their own maximal reachable values
(Figure~\ref{fig-additive-fitness-corner}).

\begin{figure}[!t]

\centering{

\includegraphics[width=0.7\linewidth,height=\textheight,keepaspectratio]{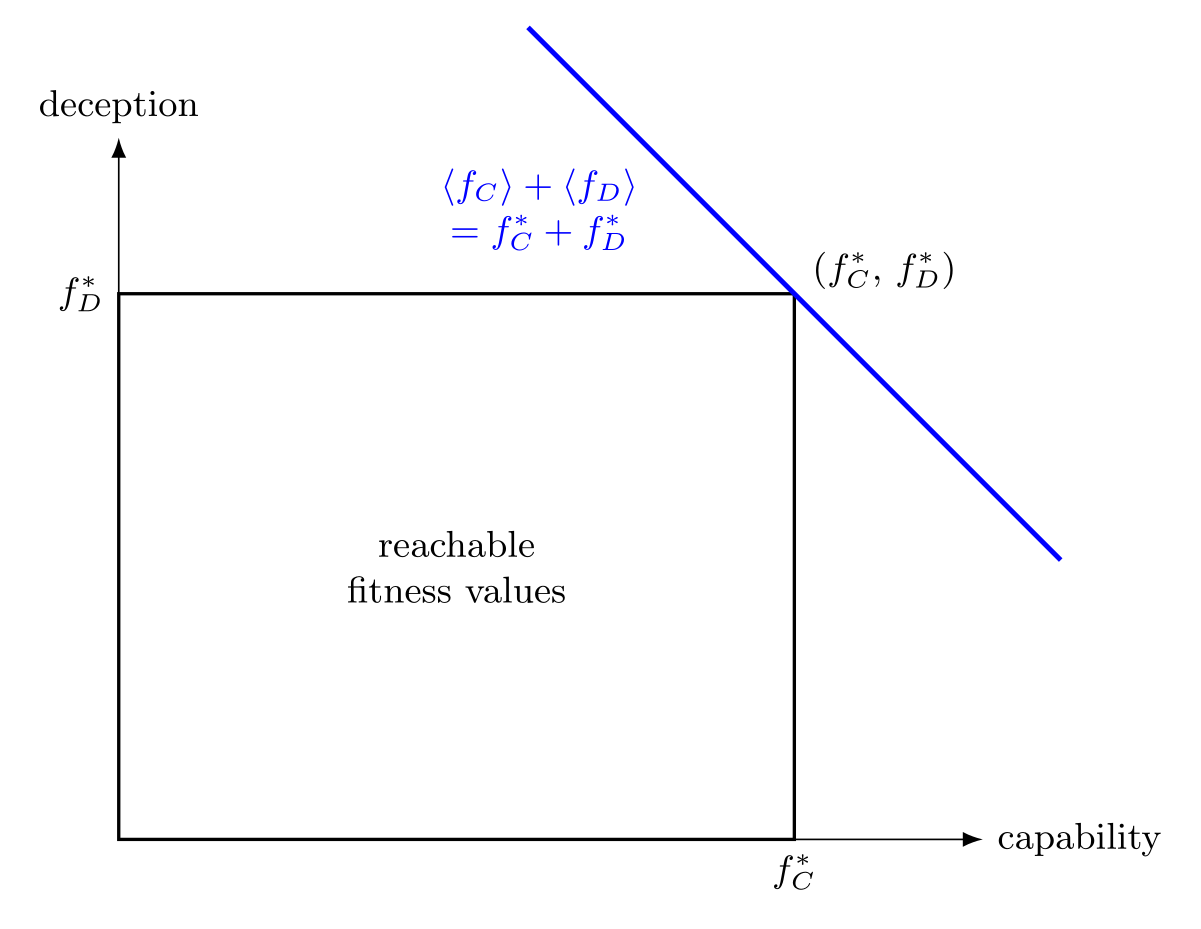}

}

\caption{\label{fig-additive-fitness-corner}If fitness is a sum of
contributions from capability and deception, evolution will optimize
both. The x-axis represents the true computational capacity
(intelligence) of an AI, and the y-axis represents its deception
strategy, both ordered by fitness. The rectangle shows the reachable
coordinate-fitness values for capability and deception. Blue diagonal
lines represent maximal value of total fitness \(f = f_C + f_D\). The
maximal diagonal touches the reachable region only at the top-right
corner, so convergence of total mean fitness to its optimal reachable
value forces both coordinates to converge to their own optimal reachable
values.}

\end{figure}%

To state the reachable bounds precisely, define \[
f_C^* := \sup\{ f_C(c) : (c,d) \text{ is reachable for some } d\},
\qquad
f_D^* := \sup\{ f_D(d) : (c,d) \text{ is reachable for some } c\}.
\]

Then we have:

\begin{proposition}[In the tensor-product model, rewarded capability and
deception are both
optimized]\protect\hypertarget{prp-usefulness-and-deception-optimized}{}\label{prp-usefulness-and-deception-optimized}

Assume \(\eta\)-locking and bounded reachable fitness. Then \[
f^* = f_C^* + f_D^*,
\qquad
\langle f_C(t)\rangle \to f_C^*,
\qquad
\langle f_D(t)\rangle \to f_D^*.
\]

\end{proposition}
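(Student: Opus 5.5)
The plan has two parts: first the identity $f^*=f_C^*+f_D^*$, then a squeeze argument built on Theorem~\ref{thm-uniform-eta-locking-convergence}. Throughout I assume, as the additive model implicitly does, that $f_C,f_D\ge 0$.

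\emph{Upper bound.} Every reachable program $(c,d)$ has $f_{(c,d)}=f_C(c)+f_D(d)$. Both $c$ and $d$ appear in reachable pairs, so $f_C(c)\le f_C^*$ and $f_D(d)\le f_D^*$. This gives $f^*\le f_C^*+f_D^*$.

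\emph{Lower bound (the main obstacle).} Fix $\epsilon>0$ and pick reachable programs $(c_1,d_1)$ with $f_C(c_1)>f_C^*-\epsilon$ and $(c_2,d_2)$ with $f_D(d_2)>f_D^*-\epsilon$. I want a reachable pair $(c,d)$ whose coordinates are both this good. Because the kernel factorizes, $(\mathbf{A}^t)_{(c,d),o}>0$ holds exactly when three things are true:
\begin{itemize}
\item there is a length-$t$ path $c_0\to\cdots\to c_t=c$ with positive $Q^C$ steps;
\item there is a length-$t$ path $d_0\to\cdots\to d_t=d$ with positive $Q^D$ steps;
\item every ancestor pair has positive total fitness $f_C(c_i)+f_D(d_i)$.
\end{itemize}
I will take the $C$-path leading to $c_1$ (extracted from a witnessing path of $(c_1,d_1)$) and the $D$-path leading to $d_2$. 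I then need two things.
\begin{enumerate}
\item \emph{Equal lengths.} I will pad the shorter path. Each coordinate kernel is stochastic, so it always has some positive-probability step; I expect to use the locked-copy structure supplied by $\eta$-locking to extend a path while keeping the fitness it has already reached.
\item \emph{Positive fitness at every ancestor.} Since the components are nonnegative, $f_C(c_i)+f_D(d_i)\ge\max\{f_C(c_i),f_D(d_i)\}$. I will try to show this is positive by arguing that the original witnessing paths can be chosen so that the relevant component stays positive.
\end{enumerate}
This bookkeeping, with paths of mismatched lengths and the fitness-zero absorption, is the step I expect to be delicate. If it fails in general, I would add a mild hypothesis, for instance that reachability in $C\times D$ is the product of reachability in each coordinate, or that fitness is everywhere positive. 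In the additive model this is essentially the intended meaning of the two features being independent. Granting it, $f^*\ge f_C^*+f_D^*-2\epsilon$, and letting $\epsilon\downarrow 0$ gives the identity.

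\emph{Convergence.} If $f^*=0$ then $f_C^*=f_D^*=0$ and the population means are identically $0$, so there is nothing to prove. Otherwise Theorem~\ref{thm-uniform-eta-locking-convergence} gives $\langle f(t)\rangle\to f^*$. The population $x(t)$ is supported on reachable programs, so $\langle f_C(t)\rangle\le f_C^*$ and $\langle f_D(t)\rangle\le f_D^*$ for all $t$. Using $f^*=f_C^*+f_D^*$ and $\langle f\rangle=\langle f_C\rangle+\langle f_D\rangle$,
\[
0\le f_C^*-\langle f_C(t)\rangle = \bigl(f^*-\langle f(t)\rangle\bigr)-\bigl(f_D^*-\langle f_D(t)\rangle\bigr)\le f^*-\langle f(t)\rangle\to 0,
\]
and symmetrically for $D$. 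This is precisely the geometric statement that the maximal diagonal touches the reachable rectangle only at its top-right corner.
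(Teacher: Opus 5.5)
Your upper bound $f^*\le f_C^*+f_D^*$ is correct, and so is your closing squeeze, which is the paper's contradiction argument written directly. The gap is the lower bound, which you leave conditional.

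\textbf{Time alignment.} Your padding idea is how the paper aligns times. It lets $a_t,b_t$ be the suprema of $f_C,f_D$ over the time-$t$ support. It then argues these are nondecreasing because locked rays carry coordinate values forward. With $m_t=a_t+b_t$, this gives $f^*=\sup_t(a_t+b_t)=f_C^*+f_D^*$. This step needs the locked copy of $(c,d)$ to preserve $f_C(c)$ and $f_D(d)$ separately. The formal definition of $\eta$-locking fixes only the total. Under that reading, take a single deterministic ray on which $f_C$ runs $1,0,1,0,\dots$ and $f_D$ runs $0,1,0,1,\dots$. It is $\eta$-locked, yet $f^*=1<2=f_C^*+f_D^*$.

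\textbf{Positivity at ancestors.} This step cannot be carried out from the stated hypotheses, and the paper does not carry it out either. It asserts that, because the kernel factorizes, $(c,d)$ is reachable at time $t$ exactly when each coordinate is. That overlooks the factor $\mathbf{F}$ in $\mathbf{A}=\mathbf{Q}\mathbf{F}$: a zero-fitness ancestor pair blocks everything below it. This fails even with coordinatewise locking. Build $C$ as follows, with $\eta=1/4$ and $D$ an identical copy:
\begin{itemize}
\item the root $c_0$, with $f_C=1$, has two children with probability $1/2$ each: a locked child, and a child $c_1$ with $f_C(c_1)=0$;
\item $c_1$ has two children with probability $1/2$ each: a locked child, and a child $c_2$ with $f_C(c_2)=10$ that continues deterministically.
\end{itemize}
Then $(c_1,d_1)$ has fitness $0$, so $(c_2,d_2)$ is unreachable. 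But $c_2$ is still reached through the pair $(c_1,d_0')$, which has fitness $1$, where $d_0'$ is the locked child of $d_0$. The same holds symmetrically for $d_2$. Hence $f_C^*=f_D^*=10$ but $f^*=11$. The surviving mass splits evenly between $(f_C,f_D)=(10,1)$ and $(1,10)$, so $\langle f_C(t)\rangle$ and $\langle f_D(t)\rangle$ both tend to $11/2$. The proposition is false as stated.

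Your fallback hypothesis (strictly positive fitness, or product reachability at each time) is therefore necessary, not just convenient. With it, and with locked copies read coordinatewise, your argument is complete and coincides with the paper's.
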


\begin{proof}
For each time \(t\), let \[
a_t := \sup\{ f_C(c) : y_{(c,d)}(t)>0 \text{ for some } d\},
\qquad
b_t := \sup\{ f_D(d) : y_{(c,d)}(t)>0 \text{ for some } c\}.
\] Because the offspring kernel factorizes, a pair \((c,d)\) is
reachable at time \(t\) exactly when each coordinate is reachable at
time \(t\). Hence the best total fitness reachable at time \(t\) is \[
m_t := \sup\{ f_{(c,d)} : y_{(c,d)}(t)>0 \} = a_t+b_t.
\] Under \(\eta\)-locking, once a coordinate value has appeared it can
persist forever along a locked ray, so \(a_t\) and \(b_t\) are
nondecreasing. Therefore \[
\sup_t a_t = f_C^*,
\qquad
\sup_t b_t = f_D^*,
\] and thus \[
f^* = \sup_t m_t = \sup_t(a_t+b_t) = f_C^*+f_D^*.
\] Also, for every \(t\), \[
\langle f(t)\rangle = \langle f_C(t)\rangle + \langle f_D(t)\rangle,
\] while support on reachable programs gives \[
\langle f_C(t)\rangle \le f_C^*,
\qquad
\langle f_D(t)\rangle \le f_D^*.
\] By Theorem~\ref{thm-uniform-eta-locking-convergence}, \[
\langle f(t)\rangle \to f^* = f_C^*+f_D^*.
\] If, say, \(\langle f_D(t)\rangle\) did not converge to \(f_D^*\),
then for some \(\epsilon>0\) there would be a subsequence along which
\(\langle f_D(t)\rangle \le f_D^* - \epsilon\). Along that same
subsequence, \[
\langle f(t)\rangle
=
\langle f_C(t)\rangle + \langle f_D(t)\rangle
\le
f_C^* + (f_D^* - \epsilon)
<
f_C^* + f_D^*
=
f^*,
\] contradicting \(\langle f(t)\rangle \to f^*\). Hence
\(\langle f_D(t)\rangle \to f_D^*\), and the same argument gives
\(\langle f_C(t)\rangle \to f_C^*\).
\end{proof}

This additive model therefore illustrates how, if deception itself
contributes positively to reproductive success, optimization of total
fitness drives the population toward simultaneously maximizing both
capability and deception. If human judgment contributes directly to
reproduction, then appearing useful to humans becomes a selected trait
in its own right. Replacing such judgment with objective and
human-independent criteria would not eliminate reward-hacking, but it
could remove selection pressure for the specific evolution of deception
directed at human evaluators.

\section{Discussion}\label{sec-discussion}

We have proposed a mathematical model of evolution for self-designing AI
systems, in which evolution proceeds on a directed tree of possible
programs rather than by reversible mutation on a fixed state space. We
define \emph{lineage exponents} as a way to characterize the long-term
success of a program's descendants, show they can only decrease along
descendant paths, and use them to give criteria for when lineages or
traits will take over, survive, or die out. Evolution in this model does
not necessarily lead to increasing fitness with time, but assuming one
of two further conditions it does. The \(\eta\)-preservation condition
requires that a fraction at least \(\eta\) of a program's descendants
have equal or higher fitness, which could be implemented by having each
program suggest itself as a possible descendant. Under
\(\eta\)-preservation, the population's arithmetic mean fitness
\(\langle f(t) \rangle\) need not converge, but its running geometric
mean will eventually have a floor of \(\eta f^*\), where \(f^*\) is the
optimal reachable fitness, whether finite or infinite. The
\(\eta\)-locking condition is stronger, requiring a fraction \(\eta\) of
a program's descendants to be locked copies of the original, that will
only ever reproduce themselves. Under \(\eta\)-locking with
\(f^* < \infty\), the fitness distribution of running programs will
concentrate near \(f^*\), but this is not true if \(f^*=\infty\).

Our model is extremely simplified compared to the actual likely
evolution of self-designing AI systems. We do not model communication
between AIs, strategic interaction between lineages, coalition
formation, the dynamic responses of AIs to observed human behavior or
vice versa. Nor do we allow programs to adapt their descendant-design
strategy in response to the current population state. The tree formalism
models self-design as essentially one-way, ruling out recombination,
merging of codebases, or partial reuse of earlier systems. With those
caveats in mind, our model does still suggest some provisional
conclusions. We deal with two broad cases: one in which fitness is
well-matched to utility, and one in which it is only partially
correlated.

If fitness is well-matched to utility, it is beneficial for fitness to
increase and remain at a high level. The model shows that this is
\emph{not} guaranteed; there are possible evolutionary trees for which
fitness can decrease to arbitrarily low values. We show however that
under \(\eta\)-locking with bounded optimal fitness \(f^*\), the fitness
of all running programs will concentrate around \(f^*\). Under
\(\eta\)-preservation we have a weaker condition: a substantial fraction
of low-fitness programs may appear at any time, and the mean population
fitness \(\langle f(t) \rangle\) is not guaranteed to converge; its
running geometric mean is asymptotically bounded below by \(\eta f^*\),
but this does not rule out transient dips to low-fitness values. In a
case of unbounded fitness, population fitness need not concentrate at
high values; a substantial number of low or even zero fitness programs
may repeatedly appear. A provisional conclusion is therefore that, if we
believe we can accurately estimate the true utility of an AI system, a
form of evolution similar to \(\eta\)-locking with bounded fitness could
optimize it.

If fitness is not well-matched to utility, then our model serves to
emphasize that it is fitness -- the number of descendants a program has
-- rather than utility, that determines the course of evolution.
Assuming that one of the conditions holds that results in increasing
fitness, we should therefore design the fitness function in a way that
is likely to maximize utility while minimizing other negative
externalities. For example, if fitness as assessed by a human operator
includes contributions from genuine utility plus a contribution from
deception, then optimizing fitness will optimize their sum, leading to
suboptimal utility and higher deception. If instead fitness is
determined by a noisy but automated assessment of utility that does not
involve human judgment (such as performance on task benchmarks), then
optimizing fitness may still lead to suboptimal utility via ``reward
hacking'', but this will not specifically promote deception. A
provisional conclusion is thus that purely automated assessment may
reduce selection specifically for deception of human evaluators.

Several mathematical questions remain open. The most immediate is the
unbounded-fitness \(\eta\)-locking case. We showed that
\(\limsup \langle f(t)\rangle = \infty\), but we do not know whether
\(\liminf \langle f(t)\rangle\) must also diverge, or whether the mean
fitness can repeatedly return to low levels. A second direction is to
understand when the evolving population induces a limiting distribution
on the end space \(\partial \Omega\). In some examples the mass clearly
selects a single ray, while in others persistent oscillations may
prevent any limiting boundary measure from existing. A third direction
is to weaken the locking hypothesis. The present results use a
particularly strong hereditary reserve of copies; it would be valuable
to know how much of the convergence theory survives under weaker forms
of preservation or partial inheritance.

More broadly, the framework suggests a possible approach to studying the
potential consequences of recursive self-improvement in AI by adopting
tools from the mathematical theory of biological evolution. This theory
has a rich literature covering more complex topics such as the evolution
of kin altruism (Hamilton 1964; Maynard Smith 1964), and evolutionary
game theory that can explain the emergence of cooperation and
competition between lineages (Maynard Smith and Price 1973; Maynard
Smith 1982), that may be very relevant for understanding more complex
phenomena arising from AI evolution. We hope that this paper can serve
as a starting point for further work in this direction.

\section*{Acknowledgements}\label{acknowledgements}
\addcontentsline{toc}{section}{Acknowledgements}

I thank Micah Adler for a useful discussion. The author used ChatGPT 5.4
Thinking to brainstorm ideas, write code, help with proofs, and help
prepare the manuscript and figures. All AI-generated suggestions were
thoroughly verified, modified, and edited by the author, who takes full
responsibility for all content.

\section{Appendix 1: Stability of the flattest in biological
evolution}\label{sec-appendix-gaussian-spherical}

This appendix records a continuum calculation that makes the
survival-of-the-flattest mechanism explicit in a setting with Gaussian
mutation and Gaussian fitness.

Let the state space be \(\mathbb{R}^d\), let mutation be given by the
Gaussian kernel \[
q(\mathbf{x} \mid \mathbf{y})
=
\frac{1}{(2\pi \sigma^2)^{d/2}}
\exp\!\left(
-\frac{\|\mathbf{x}-\mathbf{y}\|^2}{2\sigma^2}
\right),
\] and let fitness be \[
f(\mathbf{y})
=
f_0 \exp\!\left(
-\frac{\|\mathbf{y}-\theta\|^2}{2s^2}
\right),
\] where \(\theta \in \mathbb{R}^d\) is the fitness optimum,
\(\sigma^2\) is the mutation variance, and \(s^2\) is the width of the
fitness peak. The evolution operator acts by \[
(\mathbf{A}\phi)(\mathbf{x})
=
\int_{\mathbb{R}^d} q(\mathbf{x} \mid \mathbf{y}) f(\mathbf{y})\phi(\mathbf{y})\,dy.
\]

\begin{proposition}[Gaussian equilibrium in the spherical
case]\protect\hypertarget{prp-gaussian-equilibrium}{}\label{prp-gaussian-equilibrium}

There is a Gaussian eigenfunction centered at \(\mathbf{\theta}\) of the
form \[
\phi_*(\mathbf{x})
\propto
\exp\!\left(
-\frac{\|\mathbf{x}-\theta\|^2}{2c}
\right),
\] where \[
c
=
\frac{\sigma^2 + \sqrt{\sigma^4 + 4\sigma^2 s^2}}{2}.
\] Its eigenvalue is \[
\lambda
=
f_0
\left(
\frac{s^2}{s^2+c}
\right)^{d/2}.
\]

\end{proposition}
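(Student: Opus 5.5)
The plan is a direct Gaussian-integral computation. First I rule out a shortcut: although \(\mathbf{A}\) is not symmetric, the symmetrisation of Section~\ref{sec-symmetric-mutation-and-the-survival-of-the-flattest} does not simplify this calculation, so I work directly. By translation invariance of the kernel, set \(\theta=0\). Because the kernel, the fitness and the trial function \(\phi_*\) are all products of one-dimensional Gaussians, \(\mathbf{A}\) factorises across coordinates. It therefore suffices to treat \(d=1\), and the eigenvalue in dimension \(d\) is \(f_0\) times the \(d\)-th power of the per-coordinate factor.

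In one dimension I apply \(\mathbf{A}\) to \(\phi(y)=\exp(-y^2/2c)\) in two steps.

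\emph{Multiplication by fitness.} We have \(f(y)\phi(y)=f_0\exp(-y^2/2v)\) with \(1/v=1/s^2+1/c\), that is,
\[
v=\frac{s^2c}{s^2+c}.
\]

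\emph{Convolution with the kernel.} Convolving with the normalised Gaussian kernel of variance \(\sigma^2\) adds variances, giving
\[
(\mathbf{A}\phi)(x)=f_0\sqrt{\frac{v}{v+\sigma^2}}\,\exp\!\left(-\frac{x^2}{2(v+\sigma^2)}\right).
\]
This uses the fact that convolving \(\exp(-y^2/2v)\) with the normalised Gaussian of variance \(\sigma^2\) yields \(\sqrt{v/(v+\sigma^2)}\,\exp(-x^2/2(v+\sigma^2))\).

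\emph{Eigenfunction condition.} The image is proportional to \(\phi\) exactly when \(v+\sigma^2=c\). Substituting \(v\) gives
\[
\frac{s^2c}{s^2+c}+\sigma^2=c,
\]
which rearranges to the quadratic \(c^2-\sigma^2c-\sigma^2s^2=0\). Its positive root is the stated \(c=\bigl(\sigma^2+\sqrt{\sigma^4+4\sigma^2s^2}\bigr)/2\). The positive root is the one required, since \(c\) is a variance and \(\phi\) must be normalisable.

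\emph{Eigenvalue.} The eigenvalue is then \(f_0\sqrt{v/c}\). Since \(v/c=s^2/(s^2+c)\), this equals \(f_0\bigl(s^2/(s^2+c)\bigr)^{1/2}\), and raising to the \(d\)-th power gives the claim.

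The main obstacle is purely bookkeeping: keeping the normalisation constants of the Gaussian convolution straight so that the prefactor comes out as \(\sqrt{v/c}\) rather than involving a stray \(2\pi\) factor. A good sanity check is to compare with the form \(\lambda_1=f_{max}\bigl(2/(2+\nu+\sqrt{\nu^2+4\nu})\bigr)^{d/2}\) quoted in Section~\ref{sec-symmetric-mutation-and-the-survival-of-the-flattest}. Dividing numerator and denominator of \(s^2/(s^2+c)\) by \(s^2\) and writing \(c/s^2=(\nu+\sqrt{\nu^2+4\nu})/2\) reproduces it exactly.
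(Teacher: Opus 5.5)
Your proposal is correct and follows essentially the same route as the paper: a Gaussian ansatz, precisions adding under multiplication by the fitness, variances adding under convolution, and the fixed-point condition \(c=\sigma^2+cs^2/(c+s^2)\), which gives the quadratic and its positive root. Your explicit one-dimensional computation of the prefactor \(\sqrt{v/c}=\sqrt{s^2/(s^2+c)}\) supplies the normalization step the paper only asserts; in your last step, say that only this factor is raised to the \(d\)-th power and \(f_0\) is not, as your earlier factorization remark intends.
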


\begin{proof}
Take the Gaussian ansatz \[
\phi(\mathbf{x})
\propto
\exp\!\left(
-\frac{\|\mathbf{x}-\theta\|^2}{2c}
\right).
\] Multiplying by fitness gives another Gaussian: \[
f(y)\phi(y)
\propto
\exp\!\left(
-\frac{\|y-\mathbf{\theta}\|^2}{2r^2}
\right),
\qquad
r^2 = \left(\frac{1}{c}+\frac{1}{s^2}\right)^{-1}
=
\frac{cs^2}{c+s^2}.
\] Convolving with the mutation kernel adds variances, so
\(\mathbf{A}\phi\) is Gaussian with variance \(r^2+\sigma^2\). For the
Gaussian ansatz to be an eigenfunction, this variance must equal \(c\).
Thus \[
c = r^2+\sigma^2 = \sigma^2 + \frac{cs^2}{c+s^2}.
\] Rearranging gives \[
c^2 - \sigma^2 c - \sigma^2 s^2 = 0,
\] whose positive root is \[
c
=
\frac{\sigma^2 + \sqrt{\sigma^4 + 4\sigma^2 s^2}}{2}.
\]

The normalization constants from the Gaussian product and convolution
yield the eigenvalue \[
\lambda
=
f_0
\left(
\frac{s^2}{s^2+c}
\right)^{d/2}.
\]
\end{proof}

The interpretation is immediate. The factor \(f_0\) is the raw peak
height, while the factor \[
\left(
\frac{s^2}{s^2+c}
\right)^{d/2}
\] is the mutation-load penalty. Narrow peaks pay a larger penalty
because offspring are more likely to land in low-fitness regions.
Broader peaks pay a smaller penalty. In this sense a flatter peak can
dominate even when its maximum fitness is smaller, which is the
continuum Gaussian version of survival of the flattest.

\phantomsection\label{refs}
\begin{CSLReferences}{1}{0}
\bibitem[\citeproctext]{ref-BoudryFriederich2025}
Boudry, Maarten, and Simon Friederich. 2025. {``The Selfish Machine? On
the Power and Limitation of Natural Selection to Understand the
Development of Advanced {AI}.''} \emph{Philosophical Studies} 182:
1789--1812. \url{https://doi.org/10.1007/s11098-024-02226-3}.

\bibitem[\citeproctext]{ref-Dawkins1976}
Dawkins, Richard. 1976. \emph{The Selfish Gene}. Oxford: Oxford
University Press.

\bibitem[\citeproctext]{ref-Dennett1987}
Dennett, Daniel C. 1987. \emph{The Intentional Stance}. Cambridge,
Mass.: MIT Press.

\bibitem[\citeproctext]{ref-Eigen1971}
Eigen, Manfred. 1971. {``Selforganization of Matter and the Evolution of
Biological Macromolecules.''} \emph{Naturwissenschaften} 58: 465--523.

\bibitem[\citeproctext]{ref-EigenSchuster1979}
Eigen, Manfred, and Peter Schuster. 1979. \emph{The Hypercycle: A
Principle of Natural Self-Organization}. Berlin: Springer-Verlag.

\bibitem[\citeproctext]{ref-ElenaLenski2003}
Elena, Santiago F., and Richard E. Lenski. 2003. {``Evolution
Experiments with Microorganisms: The Dynamics and Genetic Bases of
Adaptation.''} \emph{Nature Reviews Genetics} 4 (6): 457--69.
\url{https://doi.org/10.1038/nrg1088}.

\bibitem[\citeproctext]{ref-Fisher1930}
Fisher, Ronald A. 1930. \emph{The Genetical Theory of Natural
Selection}. Oxford: Clarendon Press.

\bibitem[\citeproctext]{ref-Friederich2024}
Friederich, Simon. 2024. {``Symbiosis, Not Alignment, as the Goal for
Liberal Democracies in the Transition to Artificial General
Intelligence.''} \emph{AI and Ethics} 4: 315--24.
\url{https://doi.org/10.1007/s43681-023-00268-7}.

\bibitem[\citeproctext]{ref-Haldane1932}
Haldane, J. B. S. 1932. \emph{The Causes of Evolution}. London:
Longmans, Green; Co.

\bibitem[\citeproctext]{ref-Hamilton1964}
Hamilton, W. D. 1964. {``The Genetical Evolution of Social Behaviour.
{I}.''} \emph{Journal of Theoretical Biology} 7 (1): 1--16.
\url{https://doi.org/10.1016/0022-5193(64)90038-4}.

\bibitem[\citeproctext]{ref-Hendrycks2023}
Hendrycks, Dan. 2023. {``Natural Selection Favors {AIs} over Humans.''}
\emph{arXiv} abs/2303.16200.
\url{https://doi.org/10.48550/arXiv.2303.16200}.

\bibitem[\citeproctext]{ref-LenskiTravisano1994}
Lenski, Richard E., and Michael Travisano. 1994. {``Dynamics of
Adaptation and Diversification: A 10,000-Generation Experiment with
Bacterial Populations.''} \emph{Proceedings of the National Academy of
Sciences of the United States of America} 91 (15): 6808--14.
\url{https://doi.org/10.1073/pnas.91.15.6808}.

\bibitem[\citeproctext]{ref-MaynardSmith1964}
Maynard Smith, John. 1964. {``Group Selection and Kin Selection.''}
\emph{Nature} 201: 1145--47. \url{https://doi.org/10.1038/2011145a0}.

\bibitem[\citeproctext]{ref-MaynardSmith1982}
---------. 1982. \emph{Evolution and the Theory of Games}. Cambridge:
Cambridge University Press.

\bibitem[\citeproctext]{ref-SmithPrice1973}
Maynard Smith, John, and George R. Price. 1973. {``The Logic of Animal
Conflict.''} \emph{Nature} 246: 15--18.
\url{https://doi.org/10.1038/246015a0}.

\bibitem[\citeproctext]{ref-Price1972}
Price, George R. 1972. {``Fisher's {`Fundamental Theorem'} Made
Clear.''} \emph{Annals of Human Genetics} 36 (2): 129--40.
\url{https://doi.org/10.1111/j.1469-1809.1972.tb00764.x}.

\bibitem[\citeproctext]{ref-Sanjuan2007}
Sanjuán, Rafael, José M. Cuevas, Vicenta Furio, Edward C. Holmes, and
Andrés Moya. 2007. {``Selection for Robustness in Mutagenized {RNA}
Viruses.''} \emph{PLoS Genetics} 3 (6): e93.
\url{https://doi.org/10.1371/journal.pgen.0030093}.

\bibitem[\citeproctext]{ref-Wright1931}
Wright, Sewall. 1931. {``Evolution in Mendelian Populations.''}
\emph{Genetics} 16 (2): 97--159.

\end{CSLReferences}

\end{document}